\documentclass[preprint,times,10pt]{elsarticle}
\usepackage[top=4.3cm,right=4.8cm,bottom=4.3cm,left=4.8cm]{geometry}
\usepackage{setspace}
\usepackage{adjustbox}
\usepackage{array}

\usepackage[utf8]{inputenc} 
\usepackage[T1]{fontenc}    
\usepackage{hyperref}       
\usepackage{url}            
\usepackage{booktabs}       
\usepackage{amsfonts}       
\usepackage{nicefrac}       
\usepackage{microtype}      
\usepackage{xcolor}         
\usepackage{amsmath}
\usepackage{pgfplots}
\pgfplotsset{compat=1.16}
\usepgfplotslibrary{groupplots}
\usetikzlibrary{positioning}
\usetikzlibrary{arrows.meta}
\usepackage{graphicx}
\usepackage{rotating}        
\usepackage{float}       
\usepackage{longtable}   
\usepackage{placeins}    
\usepackage{multirow}
\usepackage{subcaption}
\usepackage{tikz}
\usepackage{enumitem}\setlist{nosep,topsep=2pt,partopsep=0pt,parsep=0pt,itemsep=2pt}
\AtBeginDocument{\setlength{\abovedisplayskip}{4pt}\setlength{\belowdisplayskip}{4pt}%
  \setlength{\abovedisplayshortskip}{2pt}\setlength{\belowdisplayshortskip}{2pt}}
\newtheorem{theorem}{Theorem}
\newtheorem{lemma}{Lemma}
\newtheorem{corollary}{Corollary}
\newtheorem{proposition}{Proposition}
\newproof{pf}{Proof}

\begin{document}

\begin{frontmatter}

\title{Differential rank preservation manifold learning}

\author[bay]{\footnotesize Henry Han\corref{cor1}}
\ead{Henry_Han@baylor.edu}
\author[bay]{\footnotesize Luyu Niu}
\author[umes]{\footnotesize Diane Li}
\author[mon]{\footnotesize Jiacun Wang}
\cortext[cor1]{Corresponding author.}
\affiliation[bay]{organization={Data Science and AI Innovation Laboratory, Department of Computer Science, Baylor University},
                  city={Waco},
                  state={TX},
                  postcode={76798},
                  country={USA}}
\affiliation[umes]{organization={Department of Business, Management and Accounting, University of Maryland Eastern Shore},
                  city={Princess Anne},
                  state={MD},
                  postcode={21853},
                  country={USA}}
\affiliation[mon]{organization={Department of Computer Science and Software Engineering, Monmouth University},
                  city={West Long Branch},
                  state={NJ},
                  postcode={07764},
                  country={USA}}

\begin{abstract}
Manifold learning methods like t-SNE and UMAP preserve local neighborhoods,
but they do not preserve the ranking of the neighbors inside them. Two
embeddings can return the same neighbor set while disagreeing on which point
is closest. This loss of order degrades downstream applications that value the
order in the embedding. We propose Smooth Rank t-SNE (SR-tSNE),
which adds a differentiable rank term to the standard t-SNE loss. To enable efficient training, this term uses a sigmoid rank approximation to explicitly force a point's true high-dimensional neighbors to outrank unrelated \emph{intruders} and retain their original order. We also introduce Ordinal
DLP (O-DLP), an extension of the Degree of Locality Preservation (DLP) metric that
credits a neighbor only when it returns at its original rank. Across eight
benchmarks spanning image, text, speech, remote-sensing and single-cell data,
SR-tSNE outperforms 12 baselines in rank preservation and neighborhood
overlap. Ablation studies demonstrate our rank term alone boosts O-DLP by 5\% to 28\%, a gain not achievable using standard pairwise loss. This advantage transfers to downstream
applications. SR-tSNE achieves the best label-transfer performance across seven vision datasets. For cell-type discovery across six single-cell datasets, it ranks first in 17 of 18 dataset–metric combinations, whether clustering resolution is selected with or without cell-type labels. In intraday trading, SR-tSNE achieves higher returns, lower risk, and lower turnover than the baselines. To our knowledge, SR-tSNE is the first t-SNE-
family method with explicit order preservation.
\end{abstract}

\begin{keyword}
Manifold learning \sep t-SNE \sep Rank preservation \sep Ordinal DLP \sep Label transfer \sep Single-cell clustering \sep Algorithmic trading
\end{keyword}

\end{frontmatter}

\section{Introduction}
Manifold learning preserves which points are neighbors in the embedding, not the order
among them. Methods such as t-SNE~\cite{van2008visualizing},
UMAP~\cite{mcinnes2018umap}, PHATE~\cite{moon2019visualizing} and
TriMap~\cite{amid2019trimap} embed data from a high-dimensional input space while seeking to preserve neighborhood relationships. However, what they preserve is the membership of that
neighborhood; the order ranking inside it is left free. An embedding can recover
exactly the right neighbors and still shuffle them, putting the nearest neighbor
at rank~5 and the second-nearest at rank~1. This can make downstream analyses that rely on neighbor order in the embedding less effective.

\textit{The challenge.} Many downstream tasks rely on neighbor order as well as set membership. In vision-based retrieval and label transfer using the nearest neighbor, the top-ranked neighbor directly determines the predicted label; scrambling positions can introduce labeling errors. In single-cell clustering, community detection builds graphs from top-$k$ lists; rank errors that change these lists' membership can alter graph connectivity and split homogeneous cell populations into artificial clusters. In algorithmic trading, marker selection algorithms compute rank-correlation metrics between input and embedding spaces to flag trade signals; ordinal shifts can alter entry timing. Preserving both neighborhood membership and neighbor order allows the embedding to more faithfully represent the local structure of the input data.


The loss of neighbor order in manifold learning stems directly from how standard losses are formulated and how embedding performance is evaluated. Both current optimization objectives and downstream metrics are entirely order-agnostic. For example, t-SNE minimizes the  KL divergence based purely on pairwise similarities, never tracking where a neighbor lands. Consequently, two embeddings can achieve nearly identical KL loss while ordering their neighbors completely differently. Similarly, the Degree of Locality Preservation (DLP)~\cite{HAN2022877}, a manifold explainability metric, treats each neighborhood as an unordered set, assigning identical credit whether a true neighbor sits at rank 1 or rank $k$.

Furthermore, many manifold learning methods focus exclusively on neighborhood membership, scalability, or global structure~\cite{mcinnes2018umap,moon2019visualizing,10.1162/089976603321780317,wang2025localmap,kury2026dreams}, deliberately sacrificing ordinal fidelity. The crowding problem further bounds how much neighborhood structure two dimensions can hold~\cite{OLSZEWSKI2025130619}, and order is precisely what these methods give up. While methods like UMAP, PHATE, and LocalMAP improve local preservation or scalability, none require a retained neighbor to maintain its exact rank in the embedding. Practitioners are thus forced to interpret visualizations for structural relations that the underlying embedding principles never guarantee~\cite{jeon2026stop}. Although NeRV~\cite{venna2006,venna2010information} incorporates retrieval-oriented trade-offs into manifold learning, its precision--recall criteria still evaluate set membership rather than intra-neighborhood order.

\textit{SR-tSNE.} To address this challenge, we introduce Smooth Rank t-SNE (SR-tSNE), a method designed to preserve both neighborhood membership and exact neighbor order. SR-tSNE augments the standard t-SNE loss with a single differentiable rank term. For each data point, this term optimizes two criteria at once. It encourages true input-space neighbors to rank ahead of embedding-space \emph{intruders}, the points outside the corresponding input-space neighborhood. It also penalizes any reshuffling among the true neighbors themselves. Using a sigmoid formulation, discrete ranks are approximated by differentiable values, allowing direct optimization via standard gradient descent while a logarithmic discount prioritizes the nearest ranks. A single regularization weight, $\lambda$, governs the balance between neighborhood preservation and rank preservation.

\textit{O-DLP.} To evaluate how well SR-tSNE preserves neighbor order, we propose Ordinal Degree of Locality Preservation (O-DLP). While standard DLP checks only whether input-space neighbors remain neighbors in the embedding, regardless of their order, O-DLP credits a retained neighbor only when it appears at its exact original rank. This enables O-DLP to distinguish an embedding that faithfully preserves the original neighbor order from one that merely recovers the same neighbors in a shuffled order.

Empirical evaluations demonstrate that SR-tSNE more effectively preserves neighbor order and improves downstream performance. Benchmarked against 12 baseline methods across eight diverse datasets: spanning computer vision, text, speech, remote sensing, and single-cell transcriptomics, SR-tSNE consistently achieves the highest O-DLP and neighborhood overlap scores on every dataset. These gains also extend to neighborhood preservation across scales, as measured by the area under the $R_{NX}$ curve. SR-tSNE achieves the highest scores under both local and global evaluation regimes on MNIST and CIFAR-10. In downstream applications, SR-tSNE outperforms competing methods in image retrieval, label transfer, and single-cell cell-type discovery, while also delivering higher risk-adjusted returns in a rank-sensitive algorithmic trading pipeline.

\textbf{Our contributions.}
\textit{(1) Method:} SR-tSNE augments t-SNE with end-to-end differentiable rank regularization to penalize intruders and preserve exact neighbor order. \textit{(2) Theory:} We prove zero loss at perfect order, exponential soft-to-exact rank convergence, strict order preference where t-SNE/UMAP lack it, and an upper bound on order deficit. \textit{(3) Metric:} O-DLP extends DLP from neighborhood membership to exact-rank fidelity. \textit{(4) Benchmarks:} Across eight datasets, SR-tSNE outperforms 12 baselines in O-DLP and neighborhood overlap, with higher $R_{NX}$ AUC. \textit{(5) Utility:} We demonstrate gains in label transfer, single-cell cell-type discovery, and algorithmic trading; the regularizer extends to any differentiable neighborhood-embedding objective.
\section{Related Work}
Neighbor-preserving manifold learning methods keep local affinities
or neighborhood membership, not order.
t-SNE~\cite{van2008visualizing}, UMAP~\cite{mcinnes2018umap} and
PHATE~\cite{moon2019visualizing} optimize different similarity objectives.
Laplacian Eigenmaps~\cite{10.1162/089976603321780317},
LLE~\cite{doi:10.1126/science.290.5500.2323} and
PaCMAP~\cite{wang2021understanding} instead preserve local graph or distance
relationships.
NeRV~\cite{venna2010information} incorporates a retrieval-oriented trade-off
into a manifold learning objective and is the closest such objective to ours.
All of them preserve local relationships. None of them asks a neighbor to come back to
its own place in the list. SR-tSNE keeps t-SNE as it is and adds one term for that place.

Recent refinements modify neighborhood selection rather than internal ordering.
LocalMAP~\cite{wang2025localmap} dynamically adapts neighbor graphs,
FUnc-SNE~\cite{lambert2025funcsne} drops the fixed neighborhood size,
and preprocessing techniques denoise the manifold prior to embedding~\cite{CONTRERAS2026133199}.
While these methods adjust which candidate points are embedded, they leave the order within the neighborhood unconstrained.
DREAMS~\cite{kury2026dreams} regularizes t-SNE by trading local fidelity for global PCA alignment;
in contrast, SR-tSNE regularizes t-SNE specifically to enforce fine-grained local neighbor order.

Rank- and comparison-based methods are the closest conceptual relatives, but they address order incompletely. Triplet-based approaches such as TriMap~\cite{amid2019trimap} and t-STE~\cite{vandermaaten2012stochastic} sample isolated relative comparisons, leaving the exact rank order of an entire neighborhood free. Conversely, purely ordinal methods like RankVisu~\cite{lespinats2009rankvisu} and local ordinal embedding~\cite{terada2014local} rely solely on discrete rank relations, entirely discarding the continuous distance densities that preserve cluster geometry. SR-tSNE bridges this gap: it retains t-SNE's continuous similarity modeling while adding a smooth, listwise penalty that jointly aligns the full input-space neighbor ranking and expels embedding-space intruders.

Existing evaluation metrics assess rank preservation at varying granularities, but stop short of requiring exact positional fidelity. The co-ranking matrix~\cite{lee2009quality} provides the overarching framework, while summary metrics such as $Q_{NX}$ and $R_{NX}$~\cite{lee2009quality,lee2015multi} evaluate overall neighborhood shifts and serve as standard embedding benchmarks~\cite{debodt2025lowdim}. At the local level, DLP~\cite{HAN2022877} measures unordered neighborhood overlap, and nDCG provides a graded score that tolerates position shifts. O-DLP complements these metrics by enforcing the strictest criterion: a retained neighbor earns credit if and only if it recovers its exact original rank.
\section{Methodology}
\label{sec:method}
\textit{SR-tSNE Overview.} SR-tSNE augments the standard t-SNE objective with a differentiable regularizer that explicitly enforces neighbor order. Let $X \in \mathbb{R}^{n \times d}$ denote the input-space data and $Y \in \mathbb{R}^{n \times 2}$ the target low-dimensional embedding coordinates. Standard t-SNE~\cite{van2008visualizing} minimizes the Kullback--Leibler divergence $\mathrm{KL}(P \| Q)$. Here $P = [p_{ij}]$ holds symmetrized Gaussian affinities computed from input-space distances, with per-point bandwidths $\sigma_i$ calibrated to a target perplexity. $Q = [q_{ij}]$ holds Student-$t$ probabilities computed from embedding-space distances, with $q_{ij} \propto (1+\|y_i-y_j\|^2)^{-1}$. The overall objective of SR-tSNE is formulated as:
\begin{equation}\label{eq:total_loss}
    \mathcal{L}_{\text{total}}(Y) \;=\;
    \underbrace{\mathrm{KL}(P \,\|\, Q)}_{\text{neighbor similarity}}
    \;+\; \lambda \cdot
    \underbrace{\mathcal{L}_{\text{rank}}(Y)}_{\text{rank loss}},
\end{equation}
where $\mathcal{L}_{\text{rank}}$ is the smooth rank-regularization term detailed in Section~\ref{sec:smooth_rank_loss}, and $\lambda \ge 0$ balances neighborhood affinity against exact ordinal preservation. When $\lambda = 0$, the objective reduces to standard t-SNE. By optimizing $\mathcal{L}_{\text{total}}$, SR-tSNE encourages embeddings that preserve both local neighborhood membership and the fine-grained ordinal ranking of true neighbors.

\begin{figure}[!htpb]
\centering
\includegraphics[width=0.78\linewidth]{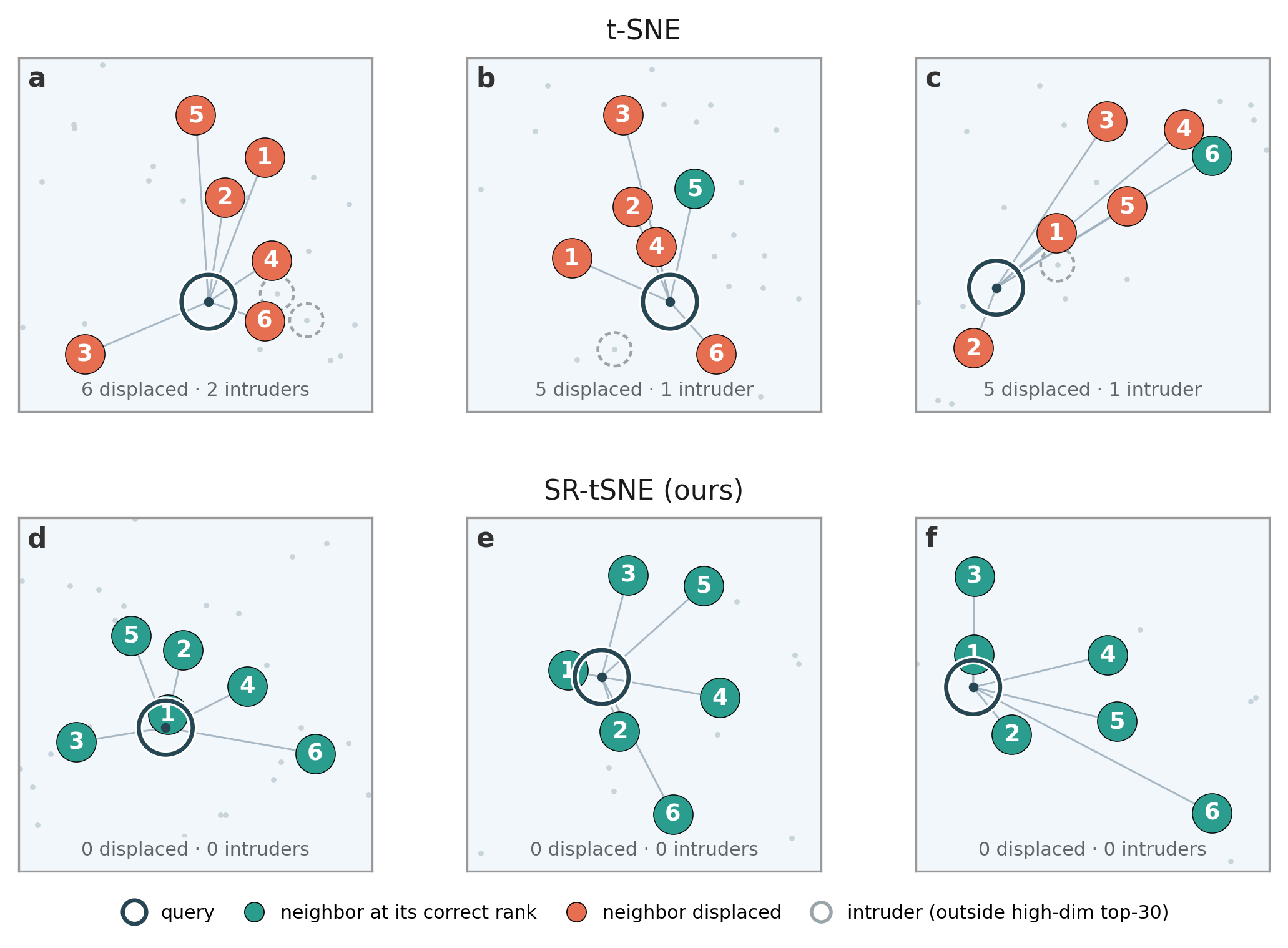}
\caption{\textit{Mechanism of the smooth rank regularization on three MNIST queries ($k=6$).}
Solid discs show true input-space neighbors positioned in the embedding space and labeled by original rank: \textit{teal} preserves exact rank, while \textit{coral} is displaced.
\textit{Dashed circles} mark embedding-space \emph{intruders} (from outside the top-$30$).
$\mathcal{L}_{\text{rank}}$ simultaneously repels intruders and restores neighbor order, improving exact-rank preservation (O-DLP) by $19\%$ over t-SNE on the full $70{,}000$-sample MNIST dataset (Table~\ref{tab:odlp_main}); the gain over the equal-budget control is $+17\%$ (Table~\ref{tab:control_main}).}
\label{fig:rankillus}
\end{figure}

\subsection{Smooth Rank Loss}
\label{sec:smooth_rank_loss}

Standard t-SNE minimizes KL divergence between pairwise similarity distributions, leaving the internal rank order of neighbors unconstrained. To resolve this, SR-tSNE introduces a \textit{differentiable rank loss} $\mathcal{L}_{\text{rank}}$ that enforces two explicit constraints.
First, true input-space neighbors must outrank any false intruders drifting into the embedding neighborhood.
Second, true neighbors must retain their exact original order, with nearest neighbors prioritized via a logarithmic discount (Figure~\ref{fig:rankillus}).

\textit{1. Candidate Pool Construction.}
For each point $i$, we define its candidate pool by combining two disjoint groups:
\begin{itemize}
    \item \textit{True Neighbors ($\mathcal{N}_i$):} The ordered sequence of $k$ nearest neighbors in the input space, $(\mathcal{N}_i(1), \ldots, \mathcal{N}_i(k))$. This list is computed once on $X$ and fixed. The true input-space rank of candidate $j \le k$ is simply $r_{i,j}^{\text{in}} = j$.
    \item \textit{Intruders ($\mathcal{I}_i$):} The $m$ closest embedding points to $y_i$ that do not belong to $\mathcal{N}_i \cup \{i\}$. These points represent embedding-space artifacts (impostors~\cite{weinberger2009distance}) and are updated every $T_r$ iterations.
\end{itemize}
Concatenating both sets forms a candidate pool $\mathcal{C}_i = \mathcal{N}_i \oplus \mathcal{I}_i$ of size $k + m$. For each candidate $j \in \{1, \ldots, k+m\}$, we compute its squared Euclidean distance to $y_i$ in the embedding space: $d_{i,j}^2 = \| y_i - y_{C_i(j)} \|_2^2$.

\textit{2. Differentiable Soft Rank.}
Sorting distances directly yields discrete, non-differentiable ranks with zero gradient almost everywhere. To enable backpropagation, we approximate the discrete indicator $\mathbb{1}[d_{i,l}^2 < d_{i,j}^2]$ using a continuous logistic sigmoid $\sigma(z) = (1 + e^{-z})^{-1}$ with temperature $\tau > 0$~\cite{qin2010general}. The soft embedding rank of candidate $j$ is computed by summing the smoothed pairwise comparisons:
\begin{equation}\label{eq:r_low}
    r_{i,j}^{\text{emb}} \;=\; 1 \,+\, \sum_{\substack{l=1 \\ l \neq j}}^{k+m} \sigma\!\left(\frac{d_{i,j}^2 - d_{i,l}^2}{\tau}\right).
\end{equation}
Intuitively, $r_{i,j}^{\text{emb}}$ represents $1$ plus the effective number of candidates placed closer to $y_i$ than candidate $j$. As $\tau \to 0^+$, $r_{i,j}^{\text{emb}}$ converges to the exact integer rank.

\textit{3. Loss via Discounted Gain.}
Only the $k$ true neighbors earn positive relevance; intruders carry zero gain and act solely as rank obstructions. We assign an input-space relevance score of $k - j + 1$ to the $j$-th true neighbor. The actual discounted gain $G_i$ in the embedding space is:
\begin{equation}\label{eq:gain}
    G_i \;=\; \sum_{j=1}^{k} \frac{k - j + 1}{\log_2\!\bigl(1 + r_{i,j}^{\text{emb}}\bigr)}.
\end{equation}
The maximum possible gain $G^{\star}$ occurs under perfect preservation, where every true neighbor recovers its exact original rank ($r_{i,j}^{\text{emb}} = j$) ahead of all intruders:
\begin{equation}\label{eq:ideal_gain}
    G^{\star} \;=\; \sum_{j=1}^{k} \frac{k - j + 1}{\log_2(1 + j)}.
\end{equation}
The total rank loss is the average normalized deficit across all $n$ data points:
\begin{equation}\label{eq:rank_gain}
    \mathcal{L}_{\text{rank}}(Y) \;=\; 1 \;-\; \frac{1}{n}\sum_{i=1}^{n} \frac{G_i}{G^{\star}}.
\end{equation}

Here, $\mathcal{L}_{\text{rank}}(Y) \in [0, 1]$. It reaches zero if and only if all $k$ true neighbors occupy ranks $1, \ldots, k$ in their exact input order without any intruder interference. When $m = 0$, this formulation smoothly reduces to a differentiable Spearman rank correlation (Appendix~\ref{app:m0}).
\subsection{Ordinal Degree of Locality Preservation (O-DLP)}
\label{sec:metrics}

We refer to $\mathcal{L}_{\text{rank}}(Y)$ in Eq.~(\ref{eq:rank_gain}) as the continuous (soft) rank loss $\mathcal{L}_{\text{rank}}^{\text{soft}}(Y)$ when evaluated with the soft ranks $r_{i,j}^{\text{emb}}$, and denote its per-point contribution as $\mathcal{L}_{\text{rank}, i}^{\text{soft}} = 1 - G_i/G^{\star}$.

Standard Degree of Locality Preservation (DLP)~\cite{HAN2022877} evaluates unordered neighborhood overlap. Specifically, let $A_{x_i,k}$ denote the set of $k$-nearest neighbors of $x_i$ in the input space, and let $B_{y_i,k}$ denote the corresponding $k$-nearest neighbors of $y_i$ in the embedding space. DLP measures the average set retention across all $n$ points:
\begin{equation}\label{eq:dlp}
  \text{DLP}(k) \;=\; \frac{1}{nk} \sum_{i=1}^{n} \bigl| A_{x_i,k} \cap B_{y_i,k} \bigr|.
\end{equation}
Because DLP treats neighborhoods strictly as unordered sets, a neighbor drifting from rank~$1$ to rank~$k$ incurs zero penalty as long as it remains within $B_{y_i,k}$.

\textit{O-DLP.} O-DLP extends DLP to evaluate exact positional alignment. Let $\pi_i^{\text{in}}(r)$ and $\pi_i^{\text{emb}}(r)$ denote the index of the $r$-th nearest neighbor of point $i$ in the input space and embedding space, respectively. O-DLP measures the proportion of exact rank matches across all points:
\begin{equation}\label{eq:odlp}
  \text{O-DLP}(k) \;=\; \frac{1}{nk} \sum_{i=1}^{n}\sum_{r=1}^{k} \mathbf{1}\!\left[\pi_i^{\text{in}}(r) = \pi_i^{\text{emb}}(r)\right],
\end{equation}
where $\mathbf{1}[\cdot]$ is the indicator function. A score of $1$ signifies that all $k$-neighborhoods preserve identical rank order across both spaces, whereas $0$ indicates that no neighbor reappears at its original position. Because rank-preserved neighbors necessarily belong to the intersection $A_{x_i,k} \cap B_{y_i,k}$, O-DLP forms a strict lower bound on DLP:
\begin{equation}\label{eq:odlp_le_dlp}
  \text{O-DLP}(k) \;\le\; \text{DLP}(k),
\end{equation}
with equality holding if and only if every retained neighbor maintains its exact rank.

\textit{Graded evaluation via nDCG.} While O-DLP establishes the strictest standard by requiring exact positional alignment, we also report Normalized Discounted Cumulative Gain (nDCG)~\cite{qin2010general}, the graded form of the quantity the rank term optimizes, which credits a neighbor by the position it reaches rather than only at its exact rank. Crucially, while soft metrics like nDCG can remain deceptively high under local rank inversions, O-DLP exposes whether individual positions are truly preserved. Evaluating both metrics ensures that SR-tSNE improves both coarse neighborhood placement (captured by nDCG) and fine-grained ordinal sequences (captured by O-DLP).
Downstream steps such as clustering rest on both parts of a neighborhood. They need a high degree of locality
preservation, and they need the order inside it. Eq.~(\ref{eq:odlp_le_dlp}) bounds one by the other, and
Theorem~\ref{thm:blind}(i) separates O-DLP into membership times order. SR-tSNE leads on both.

\subsection{SR-tSNE Optimization Details}
\label{sec:implementation}

SR-tSNE optimizes the total loss $\mathcal{L}_{\text{total}}(Y) = \mathrm{KL}(P \| Q) + \lambda \mathcal{L}_{\text{rank}}(Y)$ in two stages: a deterministic warm start followed by joint divergence and refinement.

\textit{1. Two-stage optimization.}
A standard Barnes--Hut t-SNE~\cite{10.5555/2627435.2697068} warm start (e.g., PCA initialization, perplexity $30$, learning rate $\max(n/12, 50)$, early exaggeration $12$) runs for $1{,}000$ iterations to establish a coarse global layout. Joint refinement then runs for another $T = 1{,}000$ iterations without early exaggeration. Throughout refinement, we set $k = 10$ neighbors and $m = 10$ intruders. Every $T_r = 50$ iterations, intruder sets $\mathcal{I}_i$ are refreshed by identifying the $m$ points nearest to $y_i$ in the embedding outside $\mathcal{N}_i \cup \{i\}$.

\textit{2. Temperature annealing.}
To ensure broad gradient exploration early and strict rank agreement late, the smoothing temperature $\tau$ decreases linearly over iterations $t \in [1, T]$:
\begin{equation}\label{eq:tau_anneal}
    \tau^{(t)} \;=\; \tau_{\text{start}} \,-\, \frac{t - 1}{T - 1}\bigl(\tau_{\text{start}} - \tau_{\text{end}}\bigr),
\end{equation}
falling from $\tau_{\text{start}} = 5.0$ to $\tau_{\text{end}} = 0.05$. Per Theorem~\ref{thm:soft}, $\tau_{\text{end}} = 0.05$ guarantees that the soft rank matches the discrete rank to
within $\mathcal{O}(e^{-\gamma_i / 0.05})$, where $\gamma_i > 0$ is the smallest gap between the squared embedding
distances of two distinct candidates of point $i$.

\textit{3. Gradient update and re-centering.}
Coordinates are updated using SGD with Polyak momentum ($\mu = 0.8$, $\eta = 10$):
\begin{align}
    V^{(t+1)} &\;=\; \mu \, V^{(t)} \,+\, \nabla_Y \mathcal{L}_{\text{total}}\bigl(Y^{(t)}\bigr), \label{eq:sgd_velocity} \\
    \widetilde{Y}^{(t+1)} &\;=\; Y^{(t)} \,-\, \eta \, V^{(t+1)}. \label{eq:sgd_coord}
\end{align}
Because neither term penalizes translation, we re-center coordinates after each step to remove translational drift:
\begin{equation}\label{eq:centering}
    Y^{(t+1)} \;=\; \widetilde{Y}^{(t+1)} \,-\, \frac{1}{n} \sum_{i=1}^{n} \tilde{y}_i^{(t+1)}.
\end{equation}

\textit{4. Memory complexity and design rationale.}
For large datasets ($n \ge 50{,}000$), we store the sparse matrix $P$ with $k_P = 90$ affinities per row. The partition function $Z = \sum_{a \ne b}(1+\|y_a-y_b\|^2)^{-1}$ is computed in row chunks of $c = 10{,}000$ using gradient checkpointing, lowering peak memory from $\mathcal{O}(n^2)$ to $\mathcal{O}(c\,n)$. Together, this two-stage schedule ensures that the warm start preserves coarse neighborhood membership, while the annealed rank refinement directly recovers fine-grained neighbor order and expels intruders.


\subsection{Theoretical Guarantees}
\label{sec:theory}

Standard objectives like t-SNE and UMAP preserve neighbor order only when high-dimensional affinities strictly differ; when affinities tie, they become indifferent to the ordering. SR-tSNE resolves this by explicitly penalizing rank inversions through its rank loss. This section formalizes this distinction through four guarantees. The rank loss
controls O-DLP and vanishes exactly at perfect order (Theorem~\ref{thm:control}), and the soft
rank used in optimization converges exponentially to the discrete rank
(Theorem~\ref{thm:soft}). Every inverted neighbor pair is strictly penalized, where t-SNE
and UMAP are indifferent (Theorem~\ref{thm:pref}), and the loss bounds the order deficit
among retained neighbors relative to an order-blind baseline (Theorem~\ref{thm:blind}).

\textbf{Preliminaries.} For each point $i$, the candidate pool $\mathcal{C}_i = \mathcal{N}_i \oplus \mathcal{I}_i$ of size $K = k + m$ combines its $k$ true input-space neighbors $\mathcal{N}_i$ with $m$ embedding intruders $\mathcal{I}_i$ (the $m$ non-neighbors closest to $y_i$). We sort their embedding distances $\|y_i - y_{C_i(j)}\|_2$ from nearest to farthest to obtain the discrete rank $\pi_i^{-1}(j) \in \{1, \dots, K\}$. Evaluating the discounted gain with these integer ranks rather than the soft values $r_{i,j}^{\text{emb}}$ yields the discrete gain $G_i^{\text{hard}} = \sum_{j=1}^{k} \frac{k - j + 1}{\log_2(1 + \pi_i^{-1}(j))}$ and the corresponding discrete rank loss:
\begin{equation}\label{eq:l_hard}
    \mathcal{L}^{\text{hard}}_{\text{rank}}(Y) \;=\; 1 \;-\; \frac{1}{n}\sum_{i=1}^{n} \frac{G_i^{\text{hard}}}{G^{\star}} \;=\; 1 \;-\; \frac{1}{n\,G^{\star}}\sum_{i=1}^{n}\sum_{j=1}^{k} \frac{k - j + 1}{\log_2\!\bigl(1 + \pi_i^{-1}(j)\bigr)},
\end{equation}
where $G^{\star} = \sum_{j=1}^k \frac{k - j + 1}{\log_2(1 + j)}$ is the ideal gain, and $\Delta_K = \mathrm{disc}(K-1) - \mathrm{disc}(K) > 0$ represents the smallest discrete step of the position discount $\mathrm{disc}(r) = 1/\log_2(1+r)$. All proofs are in \ref{app:proofs}.

\begin{theorem}[The rank loss controls O-DLP]
\label{thm:control}
Let $m \ge 1$ and let $k'$, the neighborhood size at which the metrics are read, satisfy $k' \le \min(k, m)$. Then
\begin{equation}
1 - \text{O-DLP}(k') \;\le\; \frac{2\,G^{\star}}{k'\,\Delta_K}\;
\mathcal{L}^{\text{hard}}_{\text{rank}}(Y).
\label{eq:control}
\end{equation}
Moreover $\mathcal{L}^{\text{hard}}_{\text{rank}}(Y) = 0$ if and only if
$\text{O-DLP}(k) = 1$, in which case $\text{O-DLP}(k') = 1$ for every
$k' \le k$.
\end{theorem}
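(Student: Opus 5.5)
The plan is to prove the inequality point by point and then average over $i$. For each $i$, write $\rho_i(j)=\pi_i^{-1}(j)$ for the discrete pool rank of the $j$-th true neighbour, and define the per-point deficit $D_i = G^\star - G_i^{\text{hard}} = \sum_{j=1}^k w_j\bigl(\mathrm{disc}(j)-\mathrm{disc}(\rho_i(j))\bigr)$ with weights $w_j=k-j+1$. Then $\mathcal{L}^{\text{hard}}_{\text{rank}}(Y)=\frac{1}{nG^\star}\sum_i D_i$. It therefore suffices to show the following for each $i$:
\[
\#\{r\le k' : \pi_i^{\text{in}}(r)\neq \pi_i^{\text{emb}}(r)\} \;\le\; M_i \;\le\; \frac{2}{\Delta_K} D_i ,
\]
where $M_i=\#\{j\le k:\rho_i(j)\neq j\}$. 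Summing over $i$ and dividing by $nk'$ then gives (\ref{eq:control}) directly.

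\textbf{Step 1 (pool ranks agree with global embedding ranks up to $k'$).} Since $m\ge k'$ and $\mathcal{I}_i$ consists of the $m$ nearest non-neighbours of $y_i$, every one of the $k'$ nearest embedding points of $y_i$ lies in $\mathcal{C}_i$. To see this, take any point outside $\mathcal{C}_i$. It is a non-neighbour, so it is beaten by all $m\ge k'$ intruders and cannot be among the $k'$ nearest. Hence for $r\le k'$ the $r$-th embedding neighbour over all points equals the $r$-th element of the pool. A positional mismatch at some $r\le k'$ then forces the true neighbour $j=r$ to satisfy $\rho_i(j)\neq j$. This gives the first inequality, with distinct $r$ giving distinct $j$. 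Ties are assumed broken consistently.

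\textbf{Step 2 (deficit dominates the number of displaced neighbours).} This step is the main obstacle. I would apply Abel summation with $w_t-w_{t+1}=1$ and $w_k=1$:
\[
D_i=\sum_{t=1}^k S_t, \qquad S_t=\sum_{j\le t}\bigl(\mathrm{disc}(j)-\mathrm{disc}(\rho_i(j))\bigr).
\]
Each $S_t\ge 0$, because $\rho_i$ is injective and $\mathrm{disc}$ is decreasing, so the $t$ largest discount values are $\mathrm{disc}(1),\dots,\mathrm{disc}(t)$.

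Next, I need a lower bound $\sum_t S_t\ge \frac12\sum_j|a_j|$, where $a_j=\mathrm{disc}(j)-\mathrm{disc}(\rho_i(j))$. The first approach I would try is to decompose $\rho_i$ (restricted to the true neighbours and extended to a permutation of $\{1,\dots,K\}$) into displacement cycles. Along a cycle, each index at which $\rho_i$ first moves downward opens a positive partial sum that persists until the cycle closes. Each such index therefore contributes at least its own positive part $a_j^+$ to $\sum_t S_t$. The negative parts of a cycle are at most its positive parts in total, which is where the factor $2$ comes from. If this cycle bookkeeping becomes messy, the fallback is an exchange argument: undo one inversion at a time, and check that each transposition lowers $\sum_t S_t$ by at least half the $\ell_1$ change it causes.

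Finally, $\mathrm{disc}(r)=1/\log_2(1+r)$ is convex and decreasing on $[1,K]$. So any two distinct ranks in $\{1,\dots,K\}$ have discounts differing by at least the last step $\Delta_K$. Hence $|a_j|\ge\Delta_K$ whenever $\rho_i(j)\neq j$, which yields $M_i\le 2D_i/\Delta_K$.

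\textbf{Step 3 (the equivalence).} Suppose $\mathcal{L}^{\text{hard}}_{\text{rank}}=0$. Since every $D_i\ge 0$, each $D_i=0$. Step 2 then gives $M_i=0$, i.e.\ $\rho_i=\mathrm{id}$: the true neighbours occupy pool ranks $1,\dots,k$ in input order. No point outside $\mathcal{C}_i$ can be closer, because the intruders are the nearest non-neighbours and all of them already rank behind the true neighbours. So the embedding top-$k$ list equals $\mathcal{N}_i$ in order, i.e.\ $\text{O-DLP}(k)=1$. Conversely, $\text{O-DLP}(k)=1$ means the $k$ nearest embedding points are $\mathcal{N}_i$ in order, so the true neighbours take pool ranks $1,\dots,k$, each $D_i=0$, and the loss vanishes. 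Exact agreement on the first $k$ positions trivially gives agreement on the first $k'$, so $\text{O-DLP}(k')=1$ for all $k'\le k$.
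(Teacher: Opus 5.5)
Your proof is correct, but its key step takes a different route from the paper's. The paper proves a sorting lemma (Lemma~\ref{lem:sort}), $G^{\star}-G^{\text{hard}}_i \ge \mathrm{inv}(\pi_i)\,\Delta_K$, by bubble-sorting the pool with adjacent swaps. It then bounds the misses by the number of displaced candidates, neighbours and intruders alike, which is at most $2\,\mathrm{inv}(\pi_i)$ because each swap moves two candidates. Your Steps 1 and 3 match the paper's reduction to the pool and its equivalence argument.

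You avoid inversions altogether: Abel summation turns the deficit into a sum of nonnegative prefix deficits, and you charge each displaced true neighbour directly. Step 2 closes in two lines without any cycle decomposition, which is best dropped since the $S_t$ are global prefix sums. First, $S_{j-1}\ge 0$ gives $S_j\ge a_j$ whenever $a_j>0$, so $D_i\ge\sum_j a_j^{+}$. Second, $S_k\ge 0$ gives $\sum_j a_j^{-}\le\sum_j a_j^{+}$. Together these give $D_i\ge\frac12\sum_j|a_j|\ge\frac12 M_i\Delta_K$.

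The paper's inversion count is the reusable object: it drives Theorem~\ref{thm:blind}(iii) and, through Diaconis--Graham, Proposition~\ref{prop:scorer}. Your bound $D_i\ge\sum_j a_j^{+}$ is position-sensitive and much sharper at the head of the list. Swapping ranks $1$ and $2$ costs exactly $\mathrm{disc}(1)-\mathrm{disc}(2)\approx 0.37$, which you recover, while the inversion bound certifies only $\Delta_K\approx 0.004$ at $K=20$. It also needs no bookkeeping of displaced intruders. Both routes give the same constant $2G^{\star}/(k'\Delta_K)$ once every displacement is charged at the smallest step $\Delta_K$.
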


Theorem 1 establishes two key results. First, it provides a quantitative surrogate error bound: when convergence is incomplete ($\mathcal{L}_{\text{rank}}^{\text{hard}} > 0$), the discrete ordinal error $1 - \text{O-DLP}(k')$ is strictly upper-bounded by the loss. The prefactor $\frac{2G^\star}{k'\Delta_K}$ represents the worst-case penalty where every inversion occurs at the smallest discount step $\Delta_K$ ($1.6 \times 10^3$ when $k = m = k' = 10$). Crucially, this equivalence requires intruders ($m \ge 1$). Without them ($m = 0$), false points could insert themselves between true neighbors without increasing the loss, since the pairwise order among true neighbors remains unchanged. Assigning target ranks behind all true neighbors to these $m$ intruders guarantees that any intervening intruder is penalized, thereby locking both neighborhood membership and order.

\begin{theorem}[Continuous-to-discrete convergence]
\label{thm:soft}
Let $\gamma_i = \min_{a \ne b \in \mathcal{C}_i} \bigl| \|y_i - y_a\|_2^2 - \|y_i - y_b\|_2^2 \bigr| > 0$ be the minimum squared distance gap between distinct candidates of point $i$. For any candidate $j$, as the sigmoid smoothing temperature $\tau \to 0^+$, the continuous soft rank $r_{i,j}^{\text{emb}}$ converges exponentially to the true discrete rank $\pi_i^{-1}(j)$:
\begin{equation}
\bigl| r_{i,j}^{\text{emb}} - \pi_i^{-1}(j) \bigr| \;\le\; (K - 1) \, e^{-\gamma_i / \tau},
\label{eq:soft_bound}
\end{equation}
and the per-point difference between the continuous soft loss $\mathcal{L}_{\text{rank}, i}^{\text{soft}}$ and the discrete hard loss $\mathcal{L}_{\text{rank}, i}^{\text{hard}}$ satisfies:
\begin{equation}
\bigl| \mathcal{L}_{\text{rank}, i}^{\text{soft}} - \mathcal{L}^{\text{hard}}_{\text{rank}, i} \bigr| \;\le\; \frac{k(k+1)(K-1)}{4 \ln 2 \, G^{\star}} \, e^{-\gamma_i / \tau}.
\label{eq:loss_soft_bound}
\end{equation}
\end{theorem}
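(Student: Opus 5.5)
The plan has two parts: a per-comparison bound that yields the soft-rank estimate~(\ref{eq:soft_bound}), and a Lipschitz bound on the discount that carries it over to the loss bound~(\ref{eq:loss_soft_bound}).

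\textbf{Rank bound.} First, fix $i$ and a candidate $j$. The discrete rank can be written as $\pi_i^{-1}(j) = 1 + \sum_{l \ne j} \mathbf{1}[d_{i,j}^2 > d_{i,l}^2]$. Since $\gamma_i>0$, no two candidates have equal squared distances, so each indicator is well defined. Subtracting this from~(\ref{eq:r_low}) gives
\begin{equation*}
\bigl| r_{i,j}^{\text{emb}} - \pi_i^{-1}(j) \bigr| \;\le\; \sum_{l \ne j} \Bigl| \sigma\bigl(\tfrac{d_{i,j}^2-d_{i,l}^2}{\tau}\bigr) - \mathbf{1}[d_{i,j}^2 > d_{i,l}^2] \Bigr| .
\end{equation*}
For each term, write $z=(d_{i,j}^2-d_{i,l}^2)/\tau$, so that $|z|\ge \gamma_i/\tau$.
\begin{itemize}
\item If $z>0$, the error is $1-\sigma(z)=\sigma(-z)=1/(1+e^{z})\le e^{-z}\le e^{-\gamma_i/\tau}$.
\item If $z<0$, the error is $\sigma(z)=1/(1+e^{-z})\le e^{z}\le e^{-\gamma_i/\tau}$.
\end{itemize}
There are $K-1$ such terms, which gives~(\ref{eq:soft_bound}).

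\textbf{Lipschitz constant of the discount.} Next, the per-point losses differ by
\begin{equation*}
\bigl|\mathcal{L}^{\text{soft}}_{\text{rank},i}-\mathcal{L}^{\text{hard}}_{\text{rank},i}\bigr| = \frac{1}{G^\star}\Bigl|\sum_{j=1}^k (k-j+1)\bigl[f(r_{i,j}^{\text{emb}})-f(\pi_i^{-1}(j))\bigr]\Bigr|,
\end{equation*}
where $f(r)=1/\log_2(1+r)=\ln 2/\ln(1+r)$. Its derivative is
\begin{equation*}
f'(r) = -\frac{\ln 2}{(1+r)\ln^2(1+r)} .
\end{equation*}
Both the soft and hard ranks are at least $1$, since every sigmoid is positive. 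On $[1,\infty)$ we therefore have $|f'(r)|\le |f'(1)| = \ln 2/(2\ln^2 2) = 1/(2\ln 2)$, because $|f'|$ is decreasing there. By the mean value theorem, each bracket is at most $\frac{1}{2\ln 2}(K-1)e^{-\gamma_i/\tau}$.

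\textbf{Loss bound.} Finally, sum the weights: $\sum_{j=1}^k (k-j+1)=k(k+1)/2$. Multiplying gives
\begin{equation*}
\frac{k(k+1)}{2}\cdot\frac{1}{2\ln 2}\cdot\frac{(K-1)\,e^{-\gamma_i/\tau}}{G^\star},
\end{equation*}
which is exactly~(\ref{eq:loss_soft_bound}).

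\textbf{Main obstacle.} The only delicate step is the Lipschitz constant of $f$. I must verify that both the soft and hard arguments lie in $[1,\infty)$, so that the mean-value point lies there too; this holds because $r_{i,j}^{\text{emb}}\ge 1$. I must also confirm the monotonicity of $(1+r)\ln^2(1+r)$, which is immediate since it is a product of increasing positive functions. Everything else is the elementary sigmoid tail bound.
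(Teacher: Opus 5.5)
Your proof is correct and follows the paper's argument step for step: each of the $K-1$ summands is bounded by the sigmoid tail $\sigma(-|z|)\le e^{-\gamma_i/\tau}$, and the mean value theorem with $|f'|\le 1/(2\ln 2)$ for $r\ge 1$ and the weight sum $k(k+1)/2$ gives the loss bound. Your explicit checks that both the soft and hard ranks lie in $[1,\infty)$ and that $(1+r)\ln^2(1+r)$ is increasing supply the justification the paper leaves implicit when it bounds the derivative on $[1,K]$.
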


The gap closes exponentially in the margin over the temperature. The schedule
of Section~\ref{sec:implementation} ends at $\tau = 0.05$; a margin of $0.5$
in squared distance then leaves a factor $e^{-10} \approx 4.5 \times 10^{-5}$.
This is why the loss the optimizer minimizes and the metric the tables report
agree at the end of training.

The third result isolates what each objective intrinsically demands of neighbor order. By treating each loss purely as a function of squared distances $D = (d_{a,b}^2)$, we examine an idealized row swap $D \mapsto D' := D^{i:j\leftrightarrow l}$ that exchanges $d_{i,j}^2$ and $d_{i,l}^2$ and leaves every other entry unchanged. Whether this configuration is geometrically embeddable in 2D is deliberately set aside; this isolates objective preference from dimensionality constraints (empirically evaluated in Section~\ref{sec:results}). Crucially, t-SNE and UMAP penalize order inversions only in proportion to affinity gaps, vanishing when affinities tie. In contrast, SR-tSNE strictly enforces the correct rank order under all conditions.

\begin{theorem}[Pairwise order preference]
\label{thm:pref}
Fix a point $i$ and two of its true neighbors $j < l$, so that their input-space distances satisfy
$d^H_{i,j} < d^H_{i,l}$, where $d^H_{i,j} = \|x_i - x_{N_i(j)}\|_2$. Suppose the embedding inverts them, $d_{i,l}^2 < d_{i,j}^2$,
and let $D'$ be the row swap that corrects the inversion. Then:
\begin{enumerate}
\item[(a)] For the t-SNE loss with symmetrized affinities $p$,
\begin{equation}
\mathrm{KL}(D) - \mathrm{KL}(D') \;=\; (p_{ij} - p_{il})
\Bigl[\ln(1+d_{i,j}^2) - \ln(1+d_{i,l}^2)\Bigr].
\label{eq:kl_pref}
\end{equation}
The bracket is positive. The loss therefore prefers the correct order when
$p_{ij} > p_{il}$, is indifferent when $p_{ij} = p_{il}$, and prefers the
inversion when $p_{ij} < p_{il}$.
\item[(b)] For the UMAP cross-entropy with fuzzy-union weights $v$ and
low-dimensional kernel $w(x) = (1 + a x^{b})^{-1}$, in which $a > 0$ and $b > 0$ are UMAP's fitted kernel constants,
\begin{equation}
\mathrm{CE}(D) - \mathrm{CE}(D') \;=\; (v_{ij} - v_{il})\,
b\,\Bigl[\ln d_{i,j}^2 - \ln d_{i,l}^2\Bigr],
\label{eq:ce_pref}
\end{equation}
with the same three cases in the sign of $v_{ij} - v_{il}$.
\item[(c)] For the rank loss, with $r < r'$ the two positions the pair
occupies,
\begin{equation}
n\bigl[\mathcal{L}^{\text{hard}}_{\text{rank}}(D)
       - \mathcal{L}^{\text{hard}}_{\text{rank}}(D')\bigr]
\;\ge\; \frac{(l - j)\,[\mathrm{disc}(r) - \mathrm{disc}(r')]}{G^{\star}}
\;\ge\; \frac{\Delta_K}{G^{\star}} \;>\; 0 .
\label{eq:rank_pref}
\end{equation}
The preference is strict, and it does not depend on $P$, on $v$, or on the
embedding distances.
\end{enumerate}
\end{theorem}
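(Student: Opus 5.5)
The plan is to treat each objective as an explicit function of the row $(d_{i,b}^2)_b$ and compute the difference under the swap $D\mapsto D'$ directly. Only terms involving the pairs $(i,j)$ and $(i,l)$, together with any global normalizer, can change. The normalizer is unaffected because it depends on the multiset of distances, which a swap preserves.

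For (a), I write $\mathrm{KL}(P\|Q)=\sum_{a\neq b}p_{ab}\ln p_{ab}-\sum_{a\ne b}p_{ab}\ln q_{ab}$, with $q_{ab}=(1+d_{a,b}^2)^{-1}/Z$. The partition function $Z=\sum(1+d^2)^{-1}$ is invariant under the swap, since the swap permutes the summands; I will assume the swap is applied symmetrically to the $(j,i)$ and $(l,i)$ entries so that $D'$ stays symmetric. The entropy term and the $\ln Z$ term therefore cancel. Only $\sum p_{ab}\ln(1+d_{a,b}^2)$ changes, over the entries $ij$ and $il$. The change is $p_{ij}[\ln(1+d_{i,j}^2)-\ln(1+d_{i,l}^2)]+p_{il}[\ln(1+d_{i,l}^2)-\ln(1+d_{i,j}^2)]$, which is the stated expression. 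If both symmetric copies are counted, a factor of $2$ appears, and I will absorb it by matching the paper's convention of summing over unordered pairs. The bracket is positive by the inversion $d_{i,l}^2<d_{i,j}^2$ and monotonicity of $\ln$.

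For (b), the UMAP cross-entropy is $\sum v\ln(v/w)+(1-v)\ln((1-v)/(1-w))$. Here there is no global normalizer, but the repulsive term does depend on $w$. So I first write the per-pair contribution as $-v\ln w-(1-v)\ln(1-w)+\text{const}$. Using $\ln w=-\ln(1+ax^b)$ and $\ln(1-w)=\ln(ax^b)-\ln(1+ax^b)$, the contribution becomes $\ln(1+ax^b)-(1-v)(\ln a+b\ln x)$. The $v$-independent parts $\ln(1+ax^b)-\ln a-b\ln x$ are swapped between the two entries and cancel in the sum. What remains is $v\,b\ln x$ per entry, so the difference is $(v_{ij}-v_{il})\,b\,[\ln d_{i,j}^2-\ln d_{i,l}^2]$. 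This expansion of the repulsive term is the one step that needs care.

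For (c), I work with $\mathcal{L}^{\text{hard}}$. The swap exchanges the discrete ranks of candidates $j$ and $l$ and leaves all other ranks fixed. Under $D$, $j$ sits at the farther position $r'$ and $l$ at $r$; under $D'$ it is the reverse. Intruders keep their ranks because the swap only permutes two distances in the row, and the intruder set itself is unchanged. The only affected summand is point $i$'s gain, so
\[
n[\mathcal{L}(D)-\mathcal{L}(D')]=\frac{G_i^{\text{hard}}(D')-G_i^{\text{hard}}(D)}{G^\star}.
\]
The gain difference is
\[
(k-j+1)(\mathrm{disc}(r)-\mathrm{disc}(r'))+(k-l+1)(\mathrm{disc}(r')-\mathrm{disc}(r))=(l-j)(\mathrm{disc}(r)-\mathrm{disc}(r')),
\]
so the first inequality in fact holds with equality. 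For the second inequality, $l-j\ge1$. Since $\mathrm{disc}$ is decreasing and convex, its consecutive gaps $\mathrm{disc}(s)-\mathrm{disc}(s+1)$ decrease in $s$. Hence for $r<r'\le K$ we get $\mathrm{disc}(r)-\mathrm{disc}(r')\ge\mathrm{disc}(r'-1)-\mathrm{disc}(r')\ge\mathrm{disc}(K-1)-\mathrm{disc}(K)=\Delta_K>0$. Convexity follows because $t\mapsto 1/\log_2(1+t)$ is a composition of a decreasing convex function with a concave increasing one.

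The main obstacles are bookkeeping rather than depth. In (a) I must confirm that the swap keeps $Z$ and the symmetrized $P$-weighted terms consistent. In (b) I must get the repulsive-term cancellation right. In (c) I must justify that ties are absent, so that discrete ranks are well defined; I will assume distinct distances, as in Theorem~\ref{thm:soft}.
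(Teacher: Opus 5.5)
Your three computations match the paper's proof. In (a) you use invariance of $\ln Z$, since the swap only permutes its summands. In (b) your split of each UMAP term is the paper's $-v\,\mathrm{logit}(w)-\ln(1-w)$ written out. In (c) you bound the two-candidate gain change $(l-j)[\mathrm{disc}(r)-\mathrm{disc}(r')]$ via convexity of the discount, and you are right that the first inequality there is actually an equality.

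The step that does not go through is your handling of $D'$ in (a). The paper's row swap exchanges only the two entries $d_{i,j}^2$ and $d_{i,l}^2$ of row $i$, and leaves $d_{j,i}^2$ and $d_{l,i}^2$ untouched. $D'$ is then generally not symmetric. That is harmless, because each objective is evaluated formally on the array and geometric realizability is deliberately set aside. With this $D'$, exactly two terms of $\sum_{a\neq b}p_{ab}\ln(1+d_{a,b}^2)$ change, and (a) holds as stated with no factor to account for.

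If you symmetrize instead, the factor of $2$ is real and cannot be absorbed by appealing to a convention. The t-SNE loss sums over ordered pairs; that is what the normalization $\sum_{a\neq b}p_{ab}=1$ refers to. You would therefore have shown $\mathrm{KL}(D)-\mathrm{KL}(D')=2(p_{ij}-p_{il})[\ln(1+d_{i,j}^2)-\ln(1+d_{i,l}^2)]$, not the stated identity.

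The symmetric swap also breaks (c). It moves $i$ closer to $y_j$ in row $j$ and farther from $y_l$ in row $l$, so $\ell_j$ and $\ell_l$ can change, and your claim that only point $i$'s gain is affected fails. For example, suppose $i$ is the nearest input-space neighbor of $l$ and drops at least one place in $l$'s candidate list. Then $\ell_l$ rises by at least $[\mathrm{disc}(1)-\mathrm{disc}(2)]/G^{\star}$, while the decrease of $\ell_i$ is only guaranteed to be $\Delta_K/G^{\star}$. So the lower bound in (c) would no longer be guaranteed.

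In (c) you already use the row-only swap (``the swap only permutes two distances in the row''). Use that same $D'$ in (a) and (b), and your argument is complete and coincides with the paper's.
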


Parts (a) and (b) show that t-SNE and UMAP enforce neighbor order strictly through pairwise affinity differences. Consequently, this ordering pressure vanishes, or backfires, under two common scenarios (Lemma~\ref{lem:affinity}): UMAP's fuzzy-union affinities saturate to one for close neighbors, while t-SNE's symmetrization frequently contradicts true input distances. The latter is remarkably prevalent: on MNIST, $32.9\%$ of adjacent pairs $(j, j{+}1)$ in a point's top-$30$ list exhibit $p_{i,j+1} \ge p_{i,j}$, affecting $98.5\%$ of points ($19.7\%$ and $91.3\%$ on CIFAR-10). For every such pair, the standard KL divergence is indifferent to the true order or actively favors an inversion. Part (c) highlights the core theoretical guarantee of SR-tSNE: regardless of input affinities, every neighbor inversion incurs a strict penalty of at least $\Delta_K / (n G^{\star})$. Corollary~\ref{cor:lambda} ensures that above a finite weight $\lambda^{\star}$ that does not grow with $n$, the total loss strictly penalizes all inversions.

Our final theoretical result links the rank loss back to the evaluation metric by decomposing exact-rank preservation into neighborhood membership and internal ordering. As introduced in Section~\ref{sec:metrics}, let $\text{O-DLP}_i = \text{DLP}_i \cdot A_i$, where $\text{DLP}_i$ measures membership retention and $A_i$ represents the fraction of retained neighbors preserving their exact ranks. The theorem below establishes an analytical order-blind baseline for shuffled neighborhoods and proves that the SR-tSNE rank loss directly bounds the ordering degradation among retained neighbors ($\text{DLP} - \text{O-DLP}$).

\begin{theorem}[Order beyond membership]
\label{thm:blind}
Fix an embedding and a neighborhood size $k'$, and let $A_i(k')$ denote the fraction of retained neighbors of point $i$ that preserve their exact original ranks.
\begin{enumerate}
\item[(i)] \emph{Decomposition.} O-DLP decomposes into membership times order:
\begin{equation}
\text{O-DLP}(k') = \frac{1}{n}\sum_{i=1}^n \text{DLP}_i(k')\,A_i(k'),
\end{equation}
and its normalized ratio $\omega(k') := k'\,\text{O-DLP}(k')/\text{DLP}(k') \in [0, k']$ represents the $\text{DLP}_i$-weighted mean of $k' A_i(k')$.
\item[(ii)] \emph{Order-blind baseline.} If the $k'$ embedded neighbors of each point are randomly shuffled, then $\mathbb{E}[A_i] = 1/k'$ and $\mathrm{Var}(k'\,\text{O-DLP}_i) \le \text{DLP}_i$. Assuming independent arrangements across points, $\text{O-DLP}(k') = \text{DLP}(k')/k'$ within a standard deviation of $k'^{-1}\sqrt{\text{DLP}(k')/n}$, which corresponds to $\omega(k') = 1$ within a standard deviation of:
\begin{equation}
\sigma_\omega = \frac{1}{\sqrt{n\,\text{DLP}(k')}} .
\label{eq:omega_res}
\end{equation}
\item[(iii)] \emph{Order deficit bound.} For $k' \le \min(k, m)$, the gap between membership retention and exact-rank preservation is strictly bounded by the discrete rank loss:
\begin{equation}
\text{DLP}(k') - \text{O-DLP}(k') \le \frac{2\,G^{\star}}{k'\,\Delta_K}\,
\mathcal{L}^{\text{hard}}_{\text{rank}}(Y) .
\label{eq:deficit}
\end{equation}
\end{enumerate}
\end{theorem}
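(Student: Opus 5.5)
Part (i) is bookkeeping on the definitions. For each point $i$ write $\text{DLP}_i(k') = |A_{x_i,k'}\cap B_{y_i,k'}|/k'$ and $\text{O-DLP}_i(k') = \frac{1}{k'}\sum_{r\le k'}\mathbf{1}[\pi_i^{\text{in}}(r)=\pi_i^{\text{emb}}(r)]$. Every exact-rank match lies in the intersection, so the number of matches equals $|A\cap B|\cdot A_i(k')$, where $A_i(k')$ is the fraction of retained neighbors that sit at their original rank (with the convention $A_i=0$ when the intersection is empty). This gives $\text{O-DLP}_i=\text{DLP}_i A_i$. Averaging over $i$ yields the decomposition. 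Then
\[
\omega(k') = \frac{k'\sum_i \text{DLP}_i A_i}{\sum_i \text{DLP}_i},
\]
which is visibly the $\text{DLP}_i$-weighted mean of $k'A_i\in[0,k']$.

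For (ii) I would model the shuffle as follows. Condition on the retained set, of size $s_i=k'\,\text{DLP}_i$, and place the $k'$ embedded neighbors in a uniformly random order. For a retained neighbor with input rank $r\le k'$, the probability that it lands at position $r$ is $1/k'$. By linearity, $\mathbb{E}[\#\text{matches}]=s_i/k'$, so $\mathbb{E}[A_i]=1/k'$ and $\mathbb{E}[k'\,\text{O-DLP}_i]=\text{DLP}_i$.

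For the variance, write $k'\,\text{O-DLP}_i = \frac{1}{1}\cdot\#\text{matches}$. This is a sum of $s_i$ indicators with mean $1/k'$ and pairwise covariance
\[
\frac{1}{k'(k'-1)}-\frac{1}{k'^2}\ge 0.
\]
The match count's variance is then at most its mean (the classic fixed-point computation gives variance exactly $1$ when $s_i=k'$), i.e. at most $s_i/k'=\text{DLP}_i$. This step is where care is needed: I would verify $\mathrm{Var}\le s_i/k'$ directly via $s_i(1/k')(1-1/k') + s_i(s_i-1)\bigl[\frac{1}{k'(k'-1)}-\frac1{k'^2}\bigr]$. The second term is $\le s_i/k'^2$ because $s_i-1\le k'-1$, so the total is $\le s_i/k'$.

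Under independence across points, $\text{O-DLP}(k')=\frac{1}{nk'}\sum_i \#\text{matches}_i$ has mean $\text{DLP}/k'$ and variance $\le \frac{1}{n^2k'^2}\sum_i k'\text{DLP}_i\cdot$. Here I need to reconcile the scaling. Normalizing as in the statement, $\mathrm{Var}(\text{O-DLP}_i)\le \text{DLP}_i/k'^2$, so $\mathrm{sd}\le k'^{-1}\sqrt{\text{DLP}/n}$. Dividing by $\text{DLP}/k'$ gives $\sigma_\omega = 1/\sqrt{n\,\text{DLP}}$.

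Part (iii) follows from Theorem~\ref{thm:control} by mimicking its proof rather than its statement. Theorem~\ref{thm:control} bounds $1-\text{O-DLP}(k')$, which is weaker than the desired bound on $\text{DLP}-\text{O-DLP}$, since $\text{DLP}\le 1$. So $\text{DLP}(k')-\text{O-DLP}(k') \le 1-\text{O-DLP}(k')$, and the claimed inequality follows immediately from Eq.~(\ref{eq:control}) with the same hypothesis $k'\le\min(k,m)$. The only obstacle I anticipate is that "strictly" should be read as the non-strict inequality displayed. If a sharper statement were wanted, I would instead count only retained-but-misplaced neighbors in the charging argument of Theorem~\ref{thm:control}, with each misplaced position forcing a loss of at least $\Delta_K/(2G^\star)$ per point. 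I expect no further difficulty there.
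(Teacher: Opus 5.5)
Your parts (i) and (ii) follow the paper's proof essentially line for line. You use the same per-point identity $\text{O-DLP}_i=\text{DLP}_i\,A_i$; your convention $A_i=0$ for an empty retained set is a harmless addition. You also use the same uniform-arrangement model with $\mathbb{E}X_j=1/k'$ and $\mathbb{E}[X_jX_l]=1/(k'(k'-1))$, and the same step $s_i-1\le k'-1$ to bound the match-count variance by $s_i/k'=\text{DLP}_i$. Your intermediate expression $\frac{1}{n^2k'^2}\sum_i k'\,\text{DLP}_i$ carries a spurious factor of $k'$. The line you settle on, $\mathrm{Var}(\text{O-DLP}_i)\le\text{DLP}_i/k'^2$, is the correct one and gives $\sigma_\omega$.

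For (iii) you take a genuinely different and shorter route, and it is correct. Since $\text{DLP}(k')\le 1$, the inequality follows in one line from Theorem~\ref{thm:control}, which has the same hypothesis $k'\le\min(k,m)$ and the same constant. You call the bound of Theorem~\ref{thm:control} the weaker one, but it is the stronger one, which is exactly why your deduction works.

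The paper instead reruns a charging argument restricted to retained neighbors. A retained $j$ with $\mathrm{pos}_i(j)\ne j$ must lie in a strict inversion of $\pi_i$: some higher-relevance neighbor sits behind it if $\mathrm{pos}_i(j)<j$, and some lower-relevance candidate sits ahead of it if $\mathrm{pos}_i(j)>j$. Each strict inversion contains at most two retained neighbors, so the misses number at most $2\,\mathrm{inv}(\pi_i)$, and Lemma~\ref{lem:sort} finishes.

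That argument is self-contained and ties order failures among kept neighbors directly to inversions. However, it lands on the same $2\,\mathrm{inv}(\pi_i)/k'$ and hence the same constant, so it proves nothing beyond your one-liner. Your route makes plain that (iii), as stated, is implied by Theorem~\ref{thm:control} rather than sharper than it. Your fallback sketch, charging each misplaced retained neighbor, is essentially the paper's route.
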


Part (ii) is the \emph{order-blind baseline}: an embedding that keeps the
right neighbors and shuffles them scores $\omega = 1$, and at the benchmark
sizes of Section~\ref{sec:setup} Eq.~(\ref{eq:omega_res}) places it within $0.03$
for every method that keeps at least a fiftieth of its neighbors, and within $0.07$ for all of them. Perfect order is $\omega = k'$. Part (iii) refines
Theorem~\ref{thm:control}. The rank loss bounds not only the distance of O-DLP
from one but also the gap between O-DLP and DLP. Driving the loss down
therefore tightens the bound on the order of the neighbors that are kept, not only on how many
are kept.

\textbf{Remark.} The guarantees above characterize the intrinsic properties of the objectives. Specifically, Theorem~\ref{thm:control} establishes the theoretical trajectory of O-DLP as the rank loss is minimized. In practice, Section~\ref{sec:results} empirically evaluates how closely gradient-based optimization approaches these theoretical limits, while the equal-budget $\lambda{=}0$ control in \ref{app:seeds} confirms that the observed order gains stem directly from the rank regularization term itself.

\section{Experiments}\label{sec:setup}
\label{sec:setup_data}

\begin{table}[!htbp]
\centering
\caption{Overview of the eight benchmark datasets. Sources and data types are
listed in Table~\ref{tab:datasets_full} of \ref{app:figs}.}
\label{tab:datasets}
\begin{tabular}{llrrr}
\toprule
Dataset & Domain & $n$ & $p$ & Classes \\
\midrule
MNIST & Vision & 70\,000 & 784 & 10 \\
CIFAR-10 & Vision & 60\,000 & 512 & 10 \\
Fashion-MNIST & Vision & 70\,000 & 784 & 10 \\
Speech Commands & Speech & 59\,387 & 768 & 35 \\
Renal-Cortex & Single-cell & 50\,027 & 36\,030 & 35 \\
Brain-Cortex & Single-cell & 50\,212 & 58\,232 & 35 \\
Covertype & Remote sense & 59\,999 & 54 & 7 \\
AG News & NLP & 60\,000 & 768 & 4 \\
\bottomrule
\end{tabular}
\end{table}

\textit{Data and preprocessing.}
The core benchmark spans eight datasets across five distinct domains: computer vision, audio, natural language, tabular remote sensing, and single-cell transcriptomics (Table~\ref{tab:datasets}). For feature representations, MNIST and Fashion-MNIST are evaluated on raw pixels and Covertype retains its $54$ native attributes. CIFAR-10 uses $512$-dimensional pooled features from an ImageNet-pretrained ResNet-18, AG News mean-pooled bert-base-uncased embeddings ($768$-d), and Speech Commands mean-pooled wav2vec 2.0 features ($768$-d).

For the two single-cell transcriptomics datasets (Renal-Cortex and Brain-Cortex), raw counts first undergo standard library-size normalization and $\log(1{+}x)$ transformation, followed by the selection of the $2{,}000$ most highly variable genes. These are then reduced via PCA to $50$ principal components, providing an identical $50$-dimensional input to all methods. Across our benchmark tables, $p$ denotes the nominal input dimensionality (or the raw gene count for single-cell data prior to PCA).

\textit{Peer methods in comparison.} We benchmark SR-tSNE against twelve peer methods. Seven are established: t-SNE, UMAP, PHATE, TriMap, PaCMAP, ISOMAP~\cite{tenenbaum2000global} and a 2-layer autoencoder. Four are recent: Param\-PaCMAP~\cite{huang2024navigating}, SUDE~\cite{peng2025sude}, LocalMAP~\cite{wang2025localmap} and DREAMS~\cite{kury2026dreams}. The twelfth is the retrieval-oriented objective NeRV~\cite{venna2010information}. All methods receive identical inputs and precomputed neighbor graphs, running at their library defaults or official implementations (exact ISOMAP uses $k=10$ without landmark approximation; NeRV runs through the original dredviz implementation). Eleven of the twelve run on all eight benchmarks at full size. NeRV, whose implementation is built for sizes below $50{,}000$, is compared at matched scale on $5{,}000$-point subsamples of all eight benchmarks (Table~\ref{tab:nerv5k} of \ref{app:figs}). Every paired test therefore runs over all eight datasets.

\textit{SR-tSNE setting.} SR-tSNE
uses the smooth rank loss with $k{=}10$ and $m{=}10$ throughout. The weight $\lambda$
follows dataset size. It is $\lambda{=}50$ for $n \ge 50{,}000$ and $\lambda{=}5$
below that threshold. The same rule carries over to the seven vision
datasets of Section~\ref{sec:retrieval}.

\label{sec:setup_metrics}

\textit{Evaluation metrics.} The evaluation combines order, membership and co-ranking criteria.
O-DLP is the main criterion for order, DLP scores neighborhood membership, and
nDCG grades the returned list by position, all at $k{=}10$. We also report the
area under the $R_{NX}$ curve on a $20$k subsample of MNIST and CIFAR-10, read
off the co-ranking matrix and independent of the training objective. All four
take the high-dimensional neighbor ranking as reference; the clustering
criteria of Section~\ref{sec:clustering} are read against a curated annotation.

The paired tests certify the sign of the lead, and the tables give its size.
Every test is a two-sided Wilcoxon signed-rank over datasets, one observation per
dataset, so the smallest attainable $p$ is $0.0078$ with eight datasets, $0.0156$
with seven and $0.031$ with six. A $p$ at that floor certifies that SR-tSNE leads on
every dataset of the pair. Table~\ref{tab:effects} of the appendix gives the median
and range of the per-dataset differences for every baseline, and we read the two
together; the sign counts carry the conclusion and do not depend on a multiplicity
correction.

\begin{table}[!ht]
\centering
\caption{O-DLP at $k{=}10$ for t-SNE, the strongest published baseline
(DREAMS), the equal-budget $\lambda{=}0$ control and SR-tSNE, with the mean
over datasets and the gain of the rank term over the control. All twelve
methods are in Table~\ref{tab:odlp_main} below.}
\label{tab:control_main}
\begin{tabular}{l rrrr r}
\toprule
Dataset & t-SNE & DREAMS & $\lambda{=}0$ control & SR-tSNE & Gain (\%) \\
\midrule
MNIST            & .0781 & .0409 & .0793 & \textbf{.0927} & +17 \\
CIFAR-10         & .0545 & .0254 & .0548 & \textbf{.0597} & +9 \\
Fashion-MNIST    & .0672 & .0497 & .0672 & \textbf{.0792} & +18 \\
Speech Commands  & .0809 & .0533 & .0818 & \textbf{.0858} & +5 \\
Renal-Cortex     & .0578 & .0336 & .0578 & \textbf{.0741} & +28 \\
Brain-Cortex     & .0615 & .0241 & .0616 & \textbf{.0767} & +25 \\
Covertype        & .1496 & .1114 & .1497 & \textbf{.1913} & +28 \\
AG News          & .0810 & .0516 & .0811 & \textbf{.0905} & +12 \\
\midrule
Mean             & .0788 & .0488 & .0792 & \textbf{.0938} & +18 \\
\bottomrule
\end{tabular}
\end{table}

\begin{figure}[!htbp]
\centering
\includegraphics[width=\linewidth]{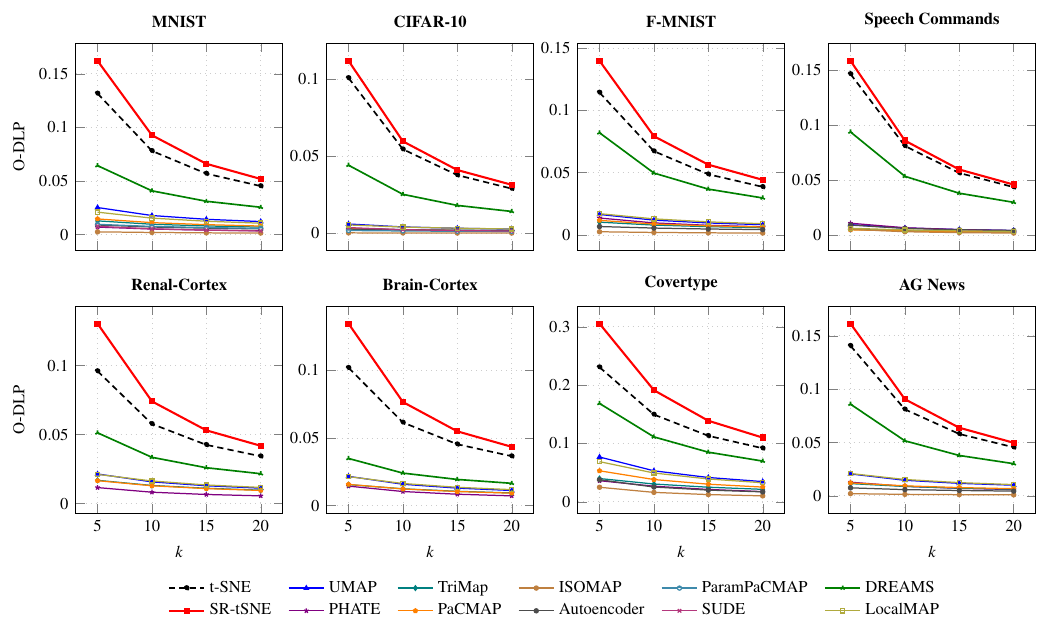}
\caption{O-DLP versus neighborhood size $k$ on the eight benchmarks of
Table~\ref{tab:datasets}. Each panel carries every method run on that dataset,
so the curves match the entries of Appendix Table~\ref{tab:odlp_main}.}
\label{fig:sweep}
\end{figure}

\subsection{Neighbor Order Preservation}\label{sec:results}

\textit{SR-tSNE achieves best exact-rank fidelity.}
Across all eight benchmarks, SR-tSNE records the highest O-DLP (Table~\ref{tab:control_main}, Figure~\ref{fig:sweep}), outperforming t-SNE by $+6\%$ to $+28\%$ (median $+18\%$, Table~\ref{tab:odlp_main}). Nonparametric Wilcoxon signed-rank tests confirm this advantage across all benchmarks ($p = 0.0078$ against each of the eleven full-scale baselines). On $5{,}000$-point subsamples, SR-tSNE surpasses NeRV by factors of $2.9\times$ to $9.2\times$ (Table~\ref{tab:nerv5k}). Absolute O-DLP values run between $0.06$ and $0.19$, the scale set by a criterion that credits a neighbor only at its exact position. Theorem~\ref{thm:control} bounds the ordinal error that remains by the rank loss. This validates Theorems~\ref{thm:control} and~\ref{thm:soft}: driving the smooth surrogate down under temperature annealing turns continuous gradient descent into discrete rank optimization.

\textit{Rank optimization simultaneously reinforces neighborhood membership.}
Sorting true neighbors ahead of mined intruders eliminates false low-dimensional neighbors. Consequently, SR-tSNE achieves the highest DLP on all eight benchmarks ($p = 0.0078$, Figure~\ref{fig:sweep_dlp}). Of the twelve methods, SR-tSNE is the one that holds the highest DLP and the highest O-DLP together, on every benchmark (Tables~\ref{tab:odlp_main} and~\ref{tab:dlp_main}). A downstream step reads both, so leading both is what carries the gain forward. In neighborhood sweeps (Figure~\ref{fig:sweep}), SR-tSNE retains top rank across all 32 dataset--$k$ settings for O-DLP, and leads DLP at $k \in \{5, 10\}$. This fidelity generalizes across multi-scale metrics. SR-tSNE secures the highest nDCG on all eight datasets (Table~\ref{tab:ndcg_main}). It also takes the top $R_{NX}$ AUC on MNIST and CIFAR-10, in both the local regime ($K \le 100$) and the full-range one (Table~\ref{tab:rnx}); that criterion measures neighbor agreement across scales, normalized against random chance.

\textit{Ablation isolates the rank term as the sole active driver.}
An equal-budget control ($\lambda = 0$), sharing the identical warm start, SGD momentum, and iteration budget, matches baseline t-SNE to within $2\%$ across all benchmarks (Table~\ref{tab:control_main}). Activating $\mathcal{L}_{\text{rank}}$ delivers the entire performance margin, boosting O-DLP by $+5\%$ to $+28\%$ and DLP by $+1\%$ to $+4\%$ ($p = 0.0078$). This improvement is strictly positive across all 30 metric differences over ten paired runs, two datasets by five perturbed starts (\ref{app:seeds}) and all 35 trading names against the control ($p = 5.8 \times 10^{-11}$). Initializing from a $2{,}000$-iteration t-SNE run further amplifies these gains (Table~\ref{tab:tsne2000}). This confirms Theorem 1 empirically: directly optimizing the rank objective governs the discrete ordinal error, driving O-DLP upward while standard affinity loss leaves it unconstrained.

\textit{Order is preserved beyond pure membership.}
Table~\ref{tab:omega} reads order against the order-blind baseline of Theorem~\ref{thm:blind}(ii), where $\omega = 1$ is
a neighborhood kept and shuffled and $\omega = 10$ is perfect order. Nine of the eleven peer methods run at full size sit between $1.07$
and $1.32$, so the order inside their neighborhoods stays close to that baseline. This is the room the rank term is
built to recover, and SR-tSNE holds the highest $\omega$ on all eight benchmarks. The gain is in the ordering rather
than in the membership: from t-SNE to SR-tSNE, O-DLP rises $3.2$ to $21.4$ times faster than DLP. Read as order
recovered above the baseline, $\omega - 1$, SR-tSNE stands $27\%$ above t-SNE and $25\%$ above the equal-budget
control, on every benchmark. The control shares the warm start, the optimizer, the budget and the seed, so that margin
is the rank term's. Position-wise hit rates show this gain concentrates where downstream retrieval reads (Figure~\ref{fig:perpos}, Table~\ref{tab:perpos}). At rank position one, SR-tSNE retrieves the true nearest neighbor $1.06\times$ to $1.41\times$ more often than t-SNE. Examples are $0.366 \to 0.443$ on MNIST and $0.268 \to 0.377$ on Renal-Cortex. Finally, on a held-out donor split of the Human Lung Cell Atlas~\cite{sikkema2023integrated}, SR-tSNE preserves its lead, exceeding t-SNE by $+30\%$ in O-DLP (\ref{app:heldout}). These results empirically validate Theorems~\ref{thm:pref} and \ref{thm:blind}: the objective suppresses pairwise inversions at the head of the list and strictly minimizes the order deficit beyond membership alone.

\begin{table}[!htbp]
\centering
\caption{Order beyond membership: $\omega(10) = 10\cdot\text{O-DLP}(10)/\text{DLP}(10)$, computed from
Tables~\ref{tab:odlp_main} and~\ref{tab:dlp_main}. $\omega = 1$ is the order-blind baseline of
Theorem~\ref{thm:blind}(ii), a neighborhood kept and shuffled; $\omega = 10$ is perfect order. At these sizes the
baseline is exact to within $0.03$ by Eq.~(\ref{eq:omega_res}) for every method with $\text{DLP} \ge 0.02$,
and to within $0.07$ for all of them. The eleven peer methods that run at full size are scored on all
eight benchmarks; NeRV is compared at matched scale in Table~\ref{tab:nerv5k}. Rows are sorted by the mean. Columns: MN MNIST, CF CIFAR-10, FM Fashion-MNIST, SC Speech Commands,
RC Renal-Cortex, BC Brain-Cortex, CT Covertype, AG AG News.}
\label{tab:omega}
\setlength{\tabcolsep}{2pt}
\begin{tabular}{lrrrrrrrrr}
\toprule
Method & MN & CF & FM & SC & RC & BC & CT & AG & Mean \\
\midrule
TriMap & 1.04 & 1.00 & 1.06 & 1.16 & 1.07 & 1.05 & 1.10 & 1.09 & 1.07 \\
PaCMAP & 1.10 & 1.02 & 1.08 & 1.13 & 1.07 & 1.05 & 1.25 & 1.08 & 1.10 \\
LocalMAP & 1.10 & 1.12 & 1.09 & 1.09 & 1.11 & 1.08 & 1.30 & 1.11 & 1.13 \\
Autoencoder & 1.07 & 1.05 & 1.07 & 1.46 & 1.03 & 1.04 & 1.32 & 1.10 & 1.14 \\
ParamPaCMAP & 1.02 & 1.01 & 1.02 & 1.64 & 1.03 & 1.04 & 1.32 & 1.10 & 1.15 \\
UMAP & 1.17 & 1.15 & 1.11 & 1.19 & 1.13 & 1.13 & 1.38 & 1.14 & 1.17 \\
PHATE & 1.08 & 1.10 & 1.12 & 1.56 & 1.13 & 1.11 & 1.16 & 1.17 & 1.18 \\
SUDE & 1.18 & 1.16 & 1.15 & 1.71 & 1.04 & 1.04 & 1.83 & 1.37 & 1.31 \\
ISOMAP & 1.12 & 0.98 & 1.08 & 2.66 & 1.03 & 0.99 & 1.37 & 1.37 & 1.32 \\
DREAMS & 1.44 & 1.80 & 1.67 & 2.02 & 1.36 & 1.19 & 1.91 & 1.77 & 1.64 \\
t-SNE & 2.01 & 2.61 & 2.00 & 2.63 & 1.78 & 1.79 & 2.35 & 2.30 & 2.18 \\
$\lambda{=}0$ control & 2.05 & 2.63 & 2.00 & 2.66 & 1.78 & 1.79 & 2.35 & 2.30 & 2.19 \\
SR-tSNE & \textbf{2.31} & \textbf{2.78} & \textbf{2.27} & \textbf{2.76} & \textbf{2.20} & \textbf{2.16} & \textbf{2.96} & \textbf{2.54} & \textbf{2.50} \\
\bottomrule
\end{tabular}
\end{table}

\begin{figure}[!htbp]
\centering
\includegraphics[width=\linewidth]{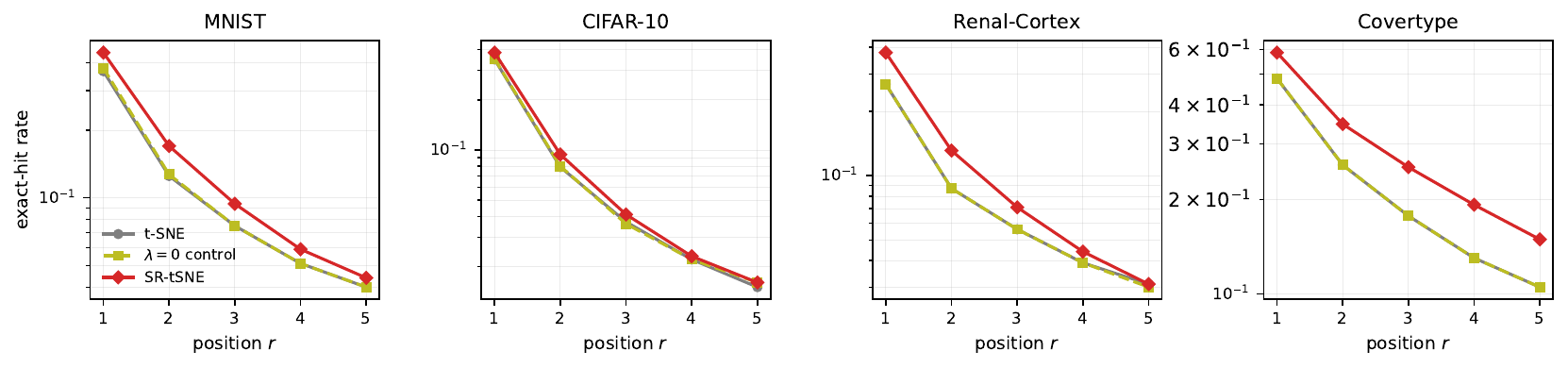}
\caption{\textit{Where the recovered order sits.} Exact-hit rate at the first five positions: the fraction of points whose
$r$-th input-space neighbor returns at position $r$ in the embedding, on four benchmarks (log scale). This is the head
of the list, where the position discount places its weight and where retrieval, label transfer and the
$k$-nearest-neighbor graph read. SR-tSNE lifts every one of these positions on each of the four benchmarks shown. The equal-budget
$\lambda{=}0$ control tracks t-SNE, so the lift is the rank term's. All ten positions on all eight benchmarks are in
Table~\ref{tab:perpos}.}
\label{fig:perpos}
\end{figure}

\begin{table}[!htbp]
\centering
\caption{O-DLP at $k{=}10$, the main criterion for order, with $\lambda{=}0$ the
equal-budget control: the same warm start and refinement schedule with the rank
term switched off. Bold marks the best method per dataset. The last column is the two-sided
Wilcoxon signed-rank test of SR-tSNE against that method, paired over the datasets
where both are available, bold where $p < 0.05$ and ``--'' where no test is
available. $^{\dagger}$NeRV runs on a $5{,}000$-point subsample of each benchmark,
the largest size its implementation reaches (Table~\ref{tab:nerv5k}); its row is scored on
that subsample and its test is paired with SR-tSNE on the same subsamples.}
\label{tab:odlp_main}
\setlength{\tabcolsep}{3.1pt}
\adjustbox{max width=\textwidth}{\begin{tabular}{@{}lcccccccccc r@{}}
\toprule
Method & MNIST & CIFAR & FMNIST & Speech & Renal & Brain & Cover & AGNews & $p$  \\
\midrule
t-SNE & .0781 & .0545 & .0672 & .0809 & .0578 & .0615 & .1496 & .0810 & .0078 \\
UMAP & .0178 & .0043 & .0120 & .0063 & .0158 & .0158 & .0533 & .0146 & .0078 \\
PHATE & .0052 & .0024 & .0098 & .0065 & .0083 & .0104 & .0270 & .0089 & .0078 \\
TriMap & .0097 & .0025 & .0080 & .0038 & .0132 & .0124 & .0306 & .0088 & .0078 \\
PaCMAP & .0112 & .0024 & .0089 & .0037 & .0129 & .0123 & .0383 & .0094 & .0078 \\
ISOMAP & .0021 & .0004 & .0021 & .0029 & .0026 & .0027 & .0163 & .0014 & .0078 \\
Autoencoder & .0071 & .0024 & .0056 & .0059 & .0067 & .0070 & .0255 & .0059 & .0078 \\
ParamPaCMAP & .0077 & .0017 & .0061 & .0047 & .0098 & .0094 & .0325 & .0053 & .0078 \\
SUDE & .0054 & .0025 & .0070 & .0061 & .0076 & .0065 & .0512 & .0080 & .0078 \\
LocalMAP & .0154 & .0042 & .0130 & .0045 & .0164 & .0164 & .0495 & .0153 & .0078 \\
DREAMS & .0409 & .0254 & .0497 & .0533 & .0336 & .0241 & .1114 & .0516 & .0078 \\
NeRV$^{\dagger}$ & .0164 & .0078 & .0385 & .0243 & .0394 & .0374 & .0463 & .0194 & .0078 \\
\midrule
SR-tSNE ($\lambda{=}0$) & .0793 & .0548 & .0672 & .0818 & .0578 & .0616 & .1497 & .0811 & .0078 \\
SR-tSNE (ours) & \textbf{.0927} & \textbf{.0597} & \textbf{.0792} & \textbf{.0858} & \textbf{.0741} & \textbf{.0767} & \textbf{.1913} & \textbf{.0905} & -- \\
\bottomrule
\end{tabular}}
\end{table}
\begin{table}[!htbp]
\centering
\caption{DLP at $k{=}10$ across all eight datasets, scoring neighborhood membership rather
than order, with $\lambda{=}0$ the equal-budget control. Bold marks the best method per dataset. The last column is the two-sided Wilcoxon
signed-rank test of SR-tSNE against that method, paired over all eight datasets,
bold where $p < 0.05$ and ``--'' for SR-tSNE itself.
$^{\dagger}$NeRV as in Table~\ref{tab:odlp_main}: scored and paired on the $5{,}000$-point subsamples of Table~\ref{tab:nerv5k}.}
\label{tab:dlp_main}
\setlength{\tabcolsep}{3.1pt}
\adjustbox{max width=\textwidth}{\begin{tabular}{@{}lcccccccccc r@{}}
\toprule
Method & MNIST & CIFAR & FMNIST & Speech & Renal & Brain & Cover & AGNews & $p$  \\
\midrule
t-SNE & .3880 & .2087 & .3368 & .3078 & .3243 & .3434 & .6369 & .3529 & .0078 \\
UMAP & .1526 & .0375 & .1085 & .0528 & .1398 & .1401 & .3874 & .1286 & .0078 \\
PHATE & .0481 & .0218 & .0872 & .0418 & .0734 & .0938 & .2331 & .0760 & .0078 \\
TriMap & .0930 & .0251 & .0756 & .0328 & .1233 & .1181 & .2770 & .0810 & .0078 \\
PaCMAP & .1018 & .0235 & .0827 & .0326 & .1205 & .1176 & .3063 & .0872 & .0078 \\
ISOMAP & .0187 & .0041 & .0195 & .0109 & .0252 & .0274 & .1188 & .0102 & .0078 \\
Autoencoder & .0664 & .0228 & .0525 & .0403 & .0651 & .0673 & .1925 & .0535 & .0078 \\
ParamPaCMAP & .0757 & .0168 & .0596 & .0287 & .0949 & .0908 & .2455 & .0481 & .0078 \\
SUDE & .0457 & .0215 & .0608 & .0357 & .0732 & .0623 & .2802 & .0584 & .0078 \\
LocalMAP & .1401 & .0376 & .1193 & .0412 & .1474 & .1513 & .3800 & .1376 & .0078 \\
DREAMS & .2839 & .1412 & .2983 & .2634 & .2477 & .2028 & .5832 & .2915 & .0078 \\
NeRV$^{\dagger}$ & .1354 & .0761 & .3153 & .2023 & .3284 & .3103 & .3466 & .1669 & .0078 \\
\midrule
SR-tSNE ($\lambda{=}0$) & .3873 & .2085 & .3368 & .3073 & .3243 & .3434 & .6368 & .3529 & .0078 \\
SR-tSNE (ours) & \textbf{.4009} & \textbf{.2149} & \textbf{.3493} & \textbf{.3110} & \textbf{.3375} & \textbf{.3545} & \textbf{.6452} & \textbf{.3568} & -- \\
\bottomrule
\end{tabular}}
\end{table}
\FloatBarrier
\subsection{Applications}
\label{sec:applications}
We evaluate three downstream tasks that read the embedding neighbor list from the top down: marker selection in algorithmic trading, retrieval and label transfer in vision, and cell-type discovery in single-cell transcriptomics. Because every embedding is routed through the identical downstream pipeline, reported metrics reflect purely paired differences between dimensional reduction methods.

\subsubsection{Algorithmic Trading}
\label{sec:trading}

\textit{Experimental setup.} We benchmark SR-tSNE within the trading-map pipeline of \cite{han2024explainable}. Each 5-minute bar contains eight market features (open, high, low, close, volume, average price, notional value, and time differential), with a binary direction flag indicating the trade side ($\mathrm{sign}(\text{close} - \text{open})$). Bars are embedded into 2D, and DBSCAN+K (\ref{app:scorer}) selects trade markers using fixed clustering parameters ($\mathrm{minPts} = 30$, $\varepsilon$ set to the 25th percentile of the $\mathrm{minPts}$-distance distribution). We evaluate 35 liquid US equities across seven sectors (5 stocks per group, Table~\ref{tab:pool35}) from 2026-04-14 to 2026-07-09 ($\approx 4{,}680$ bars per equity; $\lambda = 5, k = m = 10$). Both SR-tSNE and the equal-budget $\lambda = 0$ control refine the identical GPU t-SNE warm start (RAPIDS cuML), isolating the embedding objective as the sole experimental variable. Returns are simulated under realistic friction: $0$, $0.5$, $1$, and $2$ basis points (bps) per round trip.
Figure~\ref{fig:trading_case} follows one equity end to end, from the markers each embedding produces to the
positions they open and close.

\textit{Theoretical alignment.} The marker score reads $1 - \rho_i$, the Spearman rank disagreement between a bar's input-space and embedded neighbor order. Proposition~\ref{prop:scorer} proves that the per-bar rank loss $\mathcal{L}_{\text{rank}, i}$ strictly upper-bounds this quantity. Consequently, SR-tSNE directly minimizes the scoring metric, whereas standard t-SNE remains indifferent whenever affinities tie (Theorem~\ref{thm:pref}(a)).

\begin{figure*}[!t]
\centering
\includegraphics[width=\linewidth]{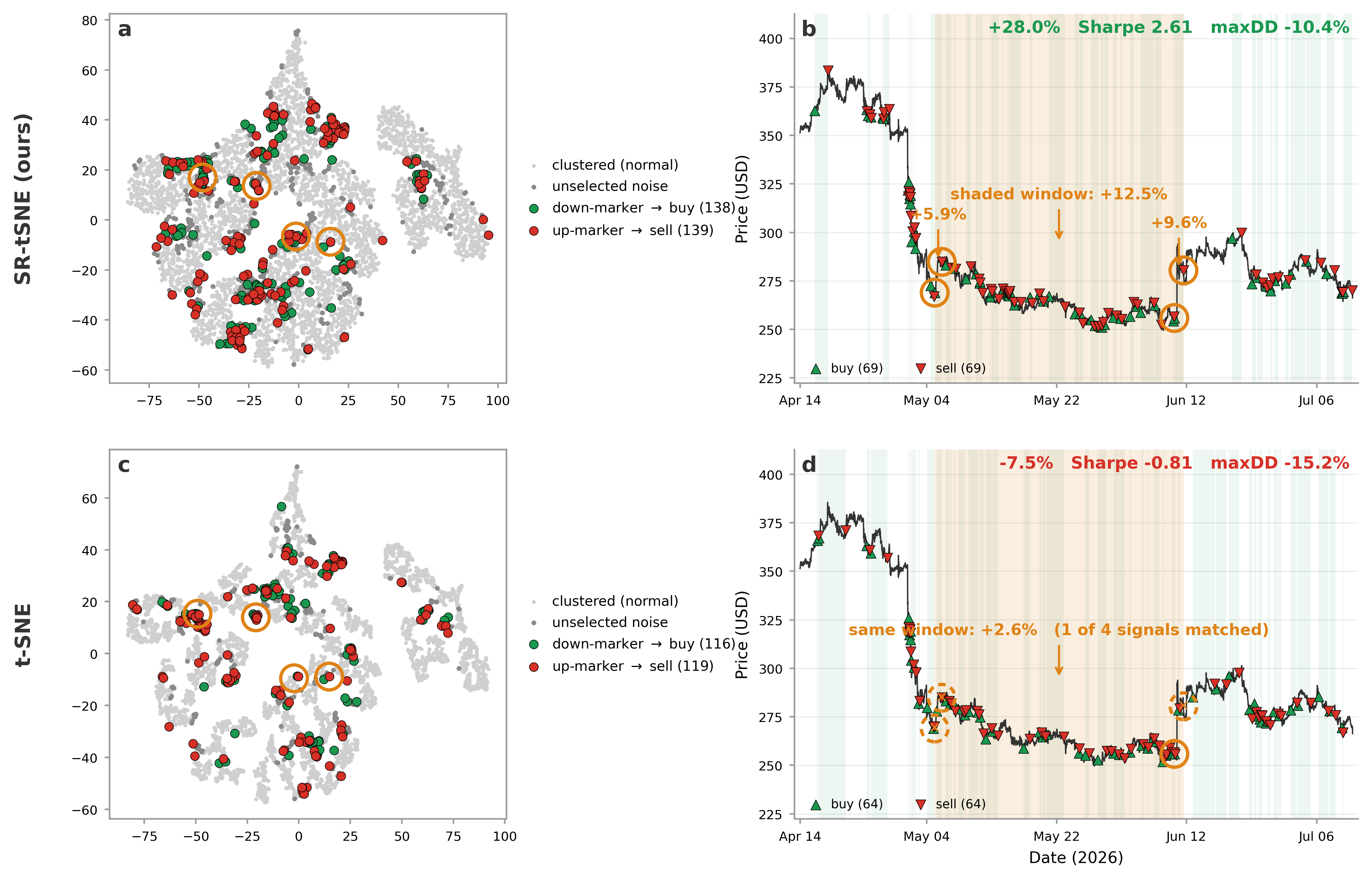}
\caption{End-to-end trading on IDCC at $2$\,bps (5-minute bars; identical DBSCAN$+$K hyperparameters). \textbf{(a, c)} Generated markers and \textbf{(b, d)} executed trades. The gold band highlights SR-tSNE's two strongest round trips ($+12.5\%$ vs. $+2.6\%$ for t-SNE); gold circles mark trade executions (dashed where t-SNE places no matching side). Over the quarter, SR-tSNE delivers $+28.0\%$ return at Sharpe $2.61$ versus $-7.5\%$ at Sharpe $-0.81$ for t-SNE.}
\label{fig:trading_case}
\end{figure*}

\textit{Superior net returns and risk-adjusted alpha.} Under $2$\,bps transaction costs, SR-tSNE achieves the highest net return on 21 of the 35 equities, with a median return of $+13.8\%$ versus $+3.6\%$ for t-SNE (Figure~\ref{fig:pool30_equity}). Against a cost-adjusted buy-and-hold baseline, SR-tSNE outperforms on 25 of 35 names ($p = 0.0017$, Wilcoxon signed-rank test). Beyond raw return, SR-tSNE significantly enhances capital efficiency and drawdown control (Table~\ref{tab:pool35_risk}): median annualized Sharpe ratio rises from $0.63$ to $2.32$, median maximum drawdown moderates from $-11.26\%$ to $-10.22\%$, and total executions decrease from $337$ to $266$ round trips.

\begin{figure}[!htbp]
\centering
\includegraphics[width=0.64\linewidth]{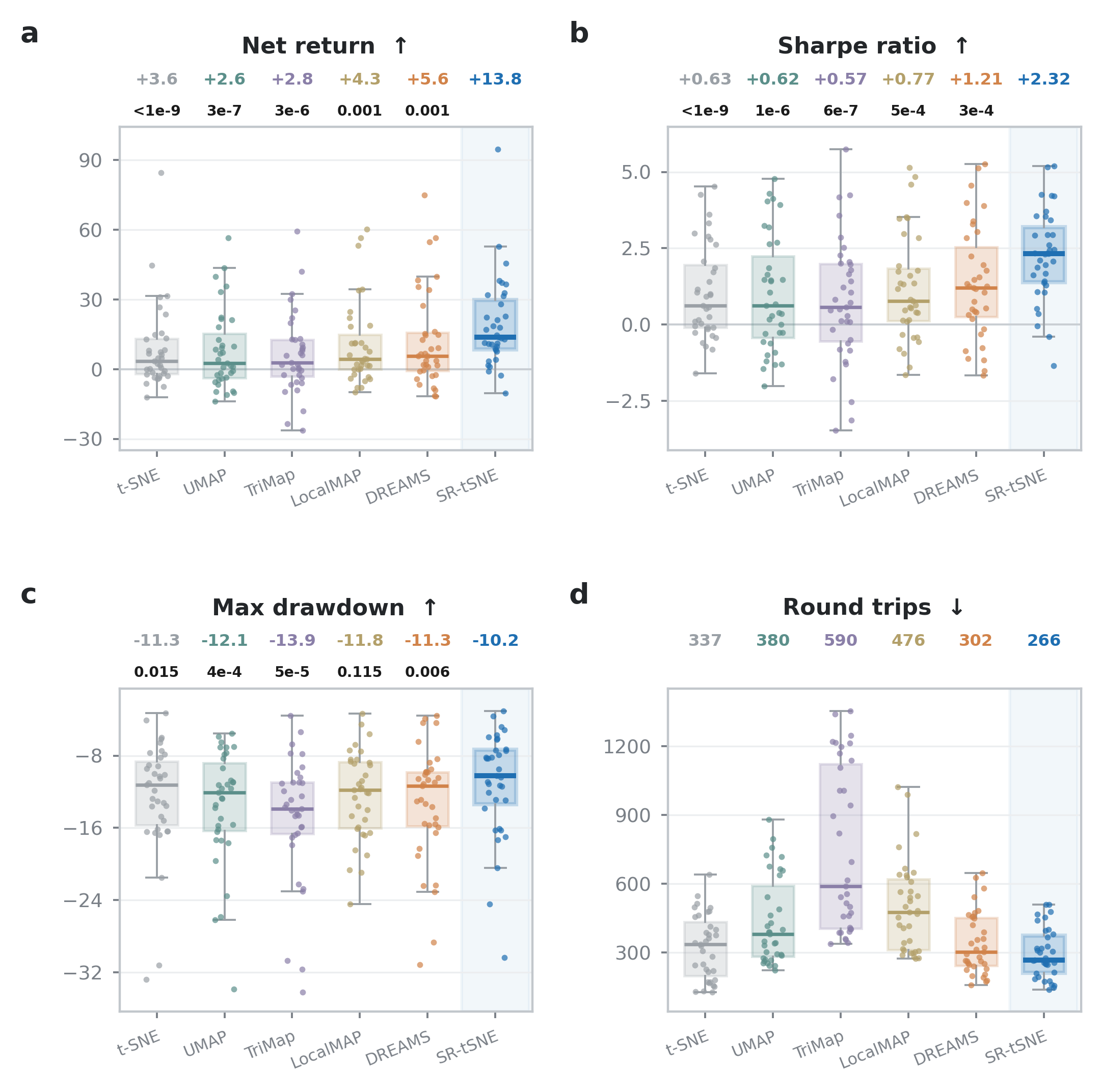}
\caption{Cross-method comparison across 35 equities at $2$\,bps under identical DBSCAN$+$K scoring: \textbf{(a)} net return (\%), \textbf{(b)} annualized Sharpe ratio, \textbf{(c)} maximum drawdown (\%), and \textbf{(d)} round trips. Boxes indicate interquartile ranges; dots represent individual equities; top numbers denote medians; bottom numbers in \textbf{(a)}--\textbf{(c)} report two-sided paired Wilcoxon $p$-values versus SR-tSNE.}
\label{fig:pool30_five}
\end{figure}

\textit{Consistent edge over modern manifold baselines.} At $2$\,bps the median net return of SR-tSNE is $+13.8\%$, against $+2.6\%$ for UMAP, $+2.8\%$ for TriMap, $+4.3\%$ for LocalMAP and $+5.6\%$ for DREAMS (Figure~\ref{fig:pool30_five}). Every one of those four paired comparisons favors SR-tSNE ($p = 3.3 \times 10^{-7}$, $2.9 \times 10^{-6}$, $0.0014$ and $0.0010$). SR-tSNE takes the top return on 31, 30, 28 and 27 of the 35 names, respectively, and doubles the median of the strongest competitor.

\textit{Robustness to trading frictions.} Because preserving rank reduces false-positive cluster transitions, SR-tSNE
trades sparingly and keeps most of its gross return once costs are charged. Raising the cost from $0$ to $2$\,bps,
SR-tSNE retains $+13.8\%$ of its $+18.2\%$, giving up $4.4$ points with round trips steady at $269 \to 266$;
TriMap retains $+2.8\%$ of $+13.0\%$ with round trips at $1{,}008 \to 590$, and UMAP $+2.6\%$ of $+9.1\%$ with
$542 \to 380$. A method that trades less pays the spread less often, so the ranking of the arms at the highest cost
level follows turnover as well as gross return.

\textit{Higher post-marker drift.} Evaluating unconditional forward returns confirms that SR-tSNE flags structurally superior inflection points (Figure~\ref{fig:marker_edge}). It leads t-SNE on 25 of 35 names at 30 minutes ($p = 0.0003$) and on 21 of 35 at 1 hour ($p = 0.007$). At 2 hours it leads on 25 of 35, with a median forward edge of $+0.063\%$ against $+0.027\%$ ($p = 0.005$).
\subsubsection{Retrieval and Label Transfer in Vision}
\label{sec:retrieval}

\textit{Exact neighbor alignment at the head of the list.}
Because downstream retrieval is inspected from the top down, whether the item in position one is truly the nearest neighbor determines downstream utility. Figure~\ref{fig:retrieval} inspects the first three retrieved positions across three representative queries each on MNIST, CIFAR-10, and Fashion-MNIST. Three queries on each dataset give nine inspected positions per dataset. SR-tSNE returns all nine at their exact input-space rank on each of the three ($9/9$, $9/9$, $9/9$). The equal-budget $\lambda = 0$ control returns $7/9$ on MNIST, $2/9$ on CIFAR-10, and $1/9$ on Fashion-MNIST (baseline t-SNE scores $6/9$, $2/9$, and $1/9$, while DREAMS attains $4/9$, $1/9$, and $0/9$). Crucially, ordinal misplacements in the baselines frequently trigger category errors: on the CIFAR-10 dog query, all seven comparative baselines retrieve incorrect classes (predominantly cats), and on Fashion-MNIST Shirt, all seven return a T-shirt. SR-tSNE eliminates these top-rank misclassifications.

\begin{sidewaysfigure}
\centering
\includegraphics[width=\linewidth]{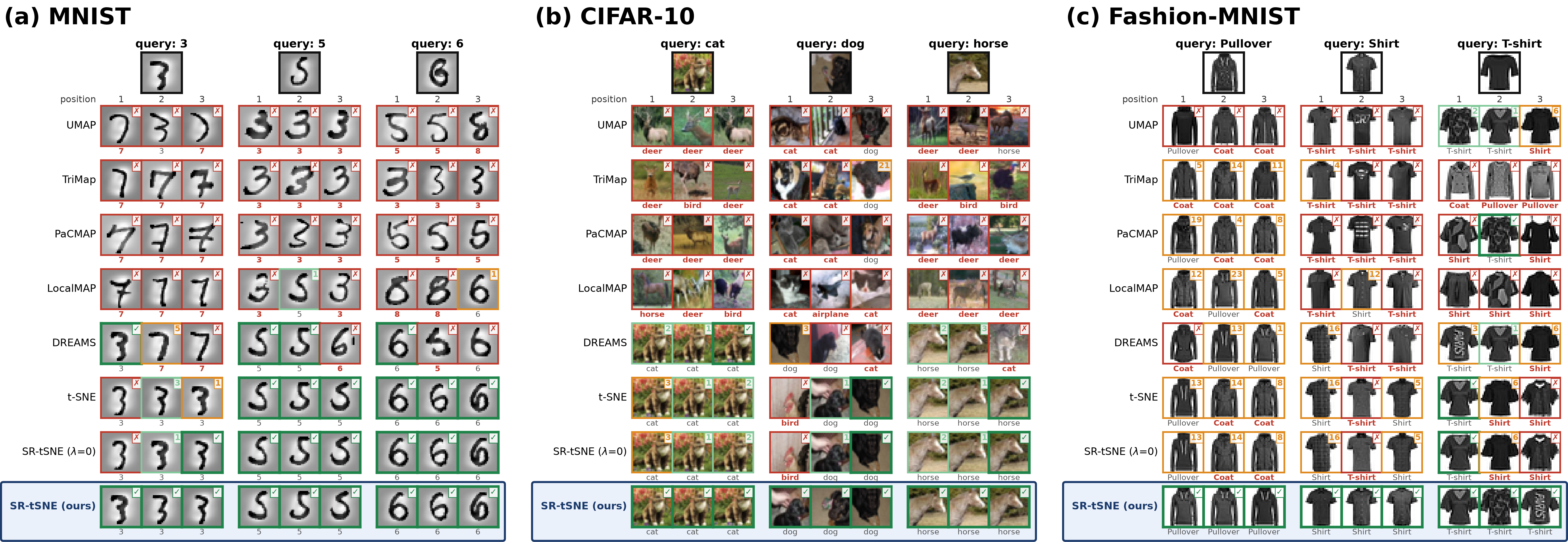}
\caption{Retrieval comparison on (a) MNIST, (b) CIFAR-10, and (c) Fashion-MNIST across the top three embedded positions. Thumbnail badges indicate high-dimensional ground-truth ranks within the query's top-$30$; frame colors encode rank displacement (dark green: exact match; pale green: offset by one; orange: offset by $\ge 2$; red: outside top-$30$). Class labels turn red upon class mismatch. Queries illustrate exact-rank preservation where comparative methods introduce ordinal and semantic errors.}
\label{fig:retrieval}
\end{sidewaysfigure}

\textit{Population-level exact-hit counts.}
This top-of-list advantage generalizes across the entire query distribution. Evaluating all $70{,}000$ MNIST queries, we measure the mean number of the first three embedded neighbors returned at their exact input-space rank. SR-tSNE averages $0.71$ exact hits out of three, substantially exceeding the $\lambda = 0$ control ($0.58$), t-SNE ($0.57$), and DREAMS ($0.26$), while UMAP, LocalMAP, PaCMAP, and TriMap remain below $0.10$.

\textit{Transductive label transfer on held-out splits.}
We next test whether geometric order preservation improves semantic generalization. Transductive label transfer runs across seven vision benchmarks, spanning $9{,}298$ to $70{,}000$ samples and $10$ to $200$ classes. Four are raw-pixel sets and three are ResNet-18 feature sets (Table~\ref{tab:cvdatasets}). Embeddings are computed on the full dataset, with a random $20\%$ held out exclusively from the retrieval index as queries. We evaluate precision at rank one ($\text{P}_1$) and Mean Reciprocal Rank (MRR) based on the first occurrence of the query class among the ten nearest reference points.

\textit{Consistent gains in semantic precision.}
SR-tSNE achieves strictly positive improvements in both $\text{P}_1$ and MRR across all seven datasets against all five baselines ($p = 0.0156$, two-sided Wilcoxon signed-rank test; Table~\ref{tab:crossdata}). Relative to t-SNE (the strongest baseline, with median $\text{P}_1 = 0.8035$ and $\text{MRR} = 0.8537$), SR-tSNE wins $7/7$ evaluations, while the $\lambda = 0$ control reproduces t-SNE within $0.0004$. The performance margin over the control is most pronounced on highly confusable datasets, adding $+0.0016$ to $+0.0164$ in $\text{P}_1$, led by EMNIST-Balanced (47 mixed alphanumeric classes) and Fashion-MNIST (Table~\ref{tab:crossdata_summary}).

\subsubsection{Single-Cell Type Discovery}
\label{sec:clustering}

\textit{Cell-type resolution hinges directly on embedded neighbor fidelity.}
Graph-based clustering reads the embedding's top-$k$ list, so exact order governs how cell types resolve.
In single-cell transcriptomics, unsupervised cell-type discovery forms the standard endpoint: a community-detection algorithm partitions the $k$-nearest-neighbor graph built on 2D coordinates into biological subpopulations. Graph edges exist solely where one cell falls within another's $k$ nearest embedded neighbors. Consequently, preserving the ground-truth top-$k$ ordering prevents intra-type fragmentation, directly evaluating the topological fidelity optimized by our rank loss.

\textit{Experimental protocol and oracle-envelope evaluation.}
We benchmark across six annotated single-cell datasets spanning $5{,}002$ to $50{,}212$ cells and $14$ to $35$ curated cell types (Table~\ref{tab:scdatasets}), all preprocessed to 50 principal components. A $k$-nearest-neighbor graph ($k = 15$) is constructed from the 2D coordinates and clustered via Leiden across eight resolutions in $[0.1, 2.0]$. To isolate geometric quality from resolution choice, each embedding is evaluated at its empirical upper-envelope resolution that maximizes the target criterion against annotations (identical oracle protocol across all methods). 

Embeddings are evaluated against curated cell-type annotations using three complementary external criteria. The Adjusted Rand Index (ARI) scores chance-corrected pairwise accuracy and the Adjusted Mutual Information (AMI) information retention across imbalanced populations. The Fowlkes--Mallows Index (FMI) scores geometric precision and recall.

\begin{table}[!ht]
\centering
\caption{Clustering performance across six annotated single-cell datasets under the Leiden upper-envelope protocol. Bold indicates the top method per cell. Sign counts and two-sided Wilcoxon signed-rank $p$-values evaluate SR-tSNE against each baseline across all datasets ($p < 0.05$ bolded).}
\label{tab:umapwins}
\setlength{\tabcolsep}{2.2pt}
\begin{tabular}{l ccccccc}
\toprule
Dataset & t-SNE & PHATE & UMAP & TriMap & LocalMAP & DREAMS & SR-tSNE \\
\midrule
\multicolumn{8}{l}{\textit{ARI} $\uparrow$} \\
\midrule
Trachea            & .7841 & .5082 & .6771 & .5988 & .5982 & .6001 & \textbf{.8221} \\
Liver-MacParland   & .5776 & .5347 & .5815 & .5842 & .5066 & .5629 & \textbf{.6012} \\
Kidney             & .7179 & .4276 & .6688 & .6874 & .6855 & .6943 & \textbf{.7499} \\
Kidney-Park        & .2069 & .2247 & .2143 & .2214 & .1792 & .1685 & \textbf{.2961} \\
Brain-Cortex       & .1141 & .1618 & .1640 & .1607 & .1444 & .1515 & \textbf{.2070} \\
Renal-Cortex       & .3063 & .1528 & .3752 & .3652 & .3443 & .3360 & \textbf{.3848} \\
\cmidrule(lr){1-8}
Wins & 6/6 & 6/6 & 6/6 & 6/6 & 6/6 & 6/6 &  \\
$p$ & \textbf{.031} & \textbf{.031} & \textbf{.031} & \textbf{.031} & \textbf{.031} & \textbf{.031} &  \\
\midrule
\multicolumn{8}{l}{\textit{AMI} $\uparrow$} \\
\midrule
Trachea            & .8212 & .7019 & .7854 & .7654 & .7683 & .7581 & \textbf{.8296} \\
Liver-MacParland   & .8045 & .7341 & .8073 & .7803 & .7694 & .7963 & \textbf{.8169} \\
Kidney             & .7698 & .6181 & .7666 & .7535 & .7624 & .7486 & \textbf{.7734} \\
Kidney-Park        & .6004 & .5593 & .6037 & .5983 & .5930 & .5753 & \textbf{.6360} \\
Brain-Cortex       & .5683 & .5794 & .6163 & .6148 & .5977 & .6037 & \textbf{.6477} \\
Renal-Cortex       & .6797 & .4094 & \textbf{.6939} & .6723 & .6851 & .6789 & .6887 \\
\cmidrule(lr){1-8}
Wins & 6/6 & 6/6 & 5/6 & 6/6 & 6/6 & 6/6 &  \\
$p$ & \textbf{.031} & \textbf{.031} & .062 & \textbf{.031} & \textbf{.031} & \textbf{.031} &  \\
\midrule
\multicolumn{8}{l}{\textit{FMI} $\uparrow$} \\
\midrule
Trachea            & .8153 & .5781 & .7231 & .6569 & .6611 & .6640 & \textbf{.8463} \\
Liver-MacParland   & .6087 & .5722 & .6124 & .6148 & .5437 & .5956 & \textbf{.6374} \\
Kidney             & .7459 & .4830 & .7020 & .7186 & .7188 & .7251 & \textbf{.7757} \\
Kidney-Park        & .4228 & .4396 & .4307 & .4373 & .3937 & .3806 & \textbf{.5094} \\
Brain-Cortex       & .2997 & .3568 & .3615 & .3574 & .3386 & .3471 & \textbf{.4081} \\
Renal-Cortex       & .3929 & .2206 & .4454 & .4308 & .4182 & .4200 & \textbf{.4496} \\
\cmidrule(lr){1-8}
Wins & 6/6 & 6/6 & 6/6 & 6/6 & 6/6 & 6/6 &  \\
$p$ & \textbf{.031} & \textbf{.031} & \textbf{.031} & \textbf{.031} & \textbf{.031} & \textbf{.031} &  \\
\bottomrule
\end{tabular}
\end{table}

\textit{Systematic gains across biological benchmarks.}
SR-tSNE achieves the highest score in 17 of 18 dataset--criterion comparisons (Table~\ref{tab:umapwins}). Against t-SNE, PHATE, TriMap, LocalMAP, and DREAMS, SR-tSNE wins on all six datasets across all three metrics ($6/6$, $p = 0.031$, two-sided Wilcoxon signed-rank test). Against UMAP, it wins $6/6$ in ARI and FMI ($p = 0.031$) and $5/6$ in AMI ($p = 0.062$). The margin peaks on Trachea, exceeding UMAP by $+0.145$ in ARI and TriMap by $+0.223$. These gains do not depend on oracle selection: fixing the Leiden resolution globally at $1.0$ or $0.5$ preserves SR-tSNE's top ARI across all six datasets ($p = 0.031$; median gains over t-SNE: $+23\%$ and $+19\%$, Table~\ref{tab:fixedres}).

\begin{figure}[!htbp]
\centering
\includegraphics[width=0.72\linewidth]{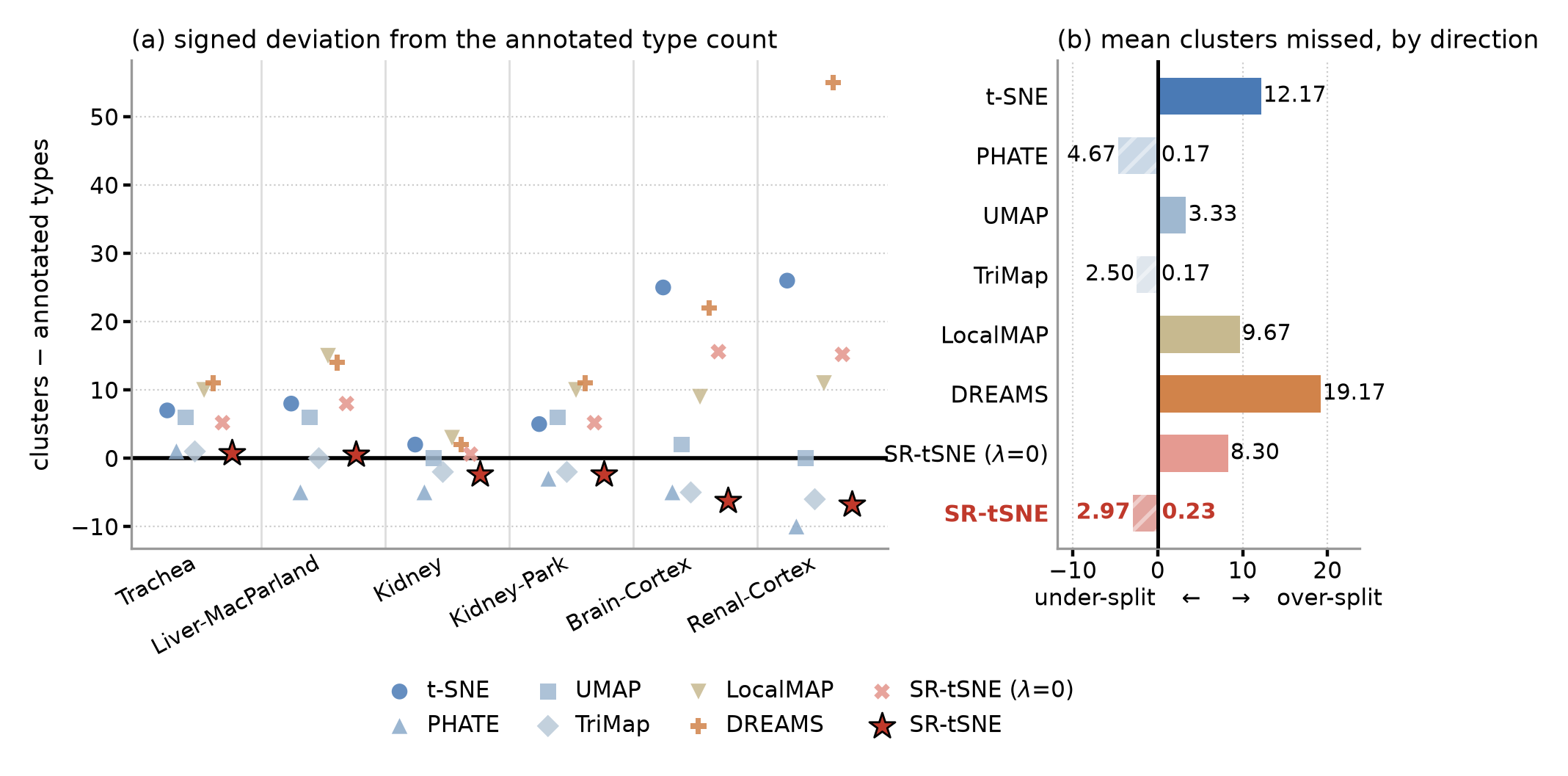}
\caption{Cluster count discrepancies relative to curated ground-truth annotations across six single-cell datasets. \textbf{(a)} Signed deviation per dataset (zero denotes the ground truth). \textbf{(b)} Mean absolute error decomposed into over-clustering (excess clusters, solid bars) and under-clustering (omitted clusters, hatched bars). SR-tSNE minimizes spurious cluster fragmentation.}
\label{fig:sccount}
\end{figure}

\textit{Suppression of artificial cluster fragmentation.}
Standard embeddings over-split homogeneous cell types into spurious sub-clusters (Figure~\ref{fig:sccount}). Baseline t-SNE produces $12.2$ excess clusters on average (up to $+26$ on Brain-Cortex), while DREAMS and LocalMAP over-split by $+19.2$ and $+9.7$, respectively. In contrast, SR-tSNE holds total cluster error to $3.2$ ($+0.2$ excess, $-3.0$ short), the range occupied by the closest peer methods, and leads them on 17 of the 18 dataset--criterion cells of biological alignment (Table~\ref{tab:umapwins}). The $\lambda = 0$ control over-splits by $+8.3$, proving that the rank loss drives this consolidation. By enforcing correct internal rank order, SR-tSNE maintains cluster connectivity. Each annotated type is partitioned into fewer disconnected components and retains more cells in the primary component ($0.725$ against $0.690$ on Renal-Cortex), while the cross-type edge rate stays within $0.6\%$ of baseline (Table~\ref{tab:scgraph}).

\subsection{Parameter Sensitivity and Stability}\label{sec:sensitivity}

\textit{Robust hyperparameter defaults across scales.}
SR-tSNE relies on three intuitive hyperparameters: neighbor count $k$, intruder pool size $m$, and regularization weight $\lambda$. We fix $k = m = 10$ across all experiments to align candidate pools with standard retrieval and scoring windows. The loss weight follows a simple dataset size rule: $\lambda = 50$ for large datasets ($\ge 50{,}000$ points) and $\lambda = 5$ for smaller regimes, calibrated once on MNIST and a market-equity pool and frozen thereafter. Performance is robust across scales: SR-tSNE consistently outperforms t-SNE across neighborhood sweeps ($k \in \{5, 10, 15, 20\}$, Figure~\ref{fig:sweep}) and across all scales in $R_{NX}$ AUC (Table~\ref{tab:rnx}). Perturbing initial coordinates across five seeds with $1\%$ Gaussian jitter yields strictly positive gains in all 30 pairwise trials (Table~\ref{tab:seeds}), verifying stability to initialization.

\textit{Listwise ranking vs.\ pairwise alternatives.}
The listwise form of the term is what delivers the gain. Under identical warm starts, budget and tuning, the listwise term holds the highest O-DLP of the three arms on all eight benchmarks, above both the warm start and a pairwise logistic surrogate over the same candidate pool (Table~\ref{tab:pairwise}). It provides $+5\%$ to $+28\%$ relative gains from that same start, so the discounted listwise penalty is what aligns the whole top-$k$ list.

\textit{Negligible refinement overhead.}
Evaluating wall-clock time on a dedicated NVIDIA A6000 demonstrates that computing the soft rank loss on candidate pools ($\mathcal{C}_i$, $K = 20$) introduces only $1.8\%$--$2.1\%$ overhead relative to an equal-budget $\lambda = 0$ control ($26.16$ vs.\ $25.69$ minutes on full MNIST, Table~\ref{tab:runtime}). The embedding cost remains dominated by the global t-SNE gradient computation, which admits standard $\mathcal{O}(n \log n)$ Barnes--Hut or FFT acceleration as detailed in Section~\ref{sec:discussion}.

Refining a longer warm start raises the gain further: from a $2{,}000$-iteration t-SNE run, SR-tSNE leads that run on
all eight benchmarks and exceeds its own $1{,}000$-iteration-start result (Table~\ref{tab:tsne2000}).

\section{Discussion}\label{sec:discussion}

\textit{Unified mechanism across downstream tasks.}
Our three applications evaluate the identical topological property from complementary perspectives. The trading scorer measures Spearman rank agreement. Vision retrieval evaluates the exact position of retrieved items. Leiden clustering reads the persistent edges within local top-$k$ graphs. By directly penalizing ordinal violations, the smooth rank loss acts specifically on the exact mechanism that governs downstream performance across these distinct domains.

\textit{Decoupling order fidelity from neighborhood membership.}
The rank objective acts on two complementary aspects of a neighborhood: the permutation order among true neighbors and the exclusion of intruders. Table~\ref{tab:omega} separates the two through the normalized order index $\omega$, and Theorem~\ref{thm:pref}(b) predicts where the affinity-based baselines sit on it: their ordering pressure follows an affinity gap that can close. The larger share of the O-DLP margin comes from restored internal order, governed by the discounted listwise gain, and a further share from membership, enforced by the intruder penalty (Figure~\ref{fig:sweep_dlp}). In single-cell discovery this dual control keeps the Leiden cluster count close to the curated one (Figure~\ref{fig:sccount}).

\textit{Robustness across initialization schemes.}
The benefits of rank refinement are invariant to the choice of warm start. Refining CPU-based Barnes--Hut t-SNE yields $+5\%$ to $+28\%$ gains in O-DLP across image and tabular benchmarks. Concurrently, refining GPU-accelerated cuML t-SNE in the algorithmic trading pipeline translates directly into higher net returns, superior Sharpe ratios, and lower turnover, an advantage that reproduces against t-SNE, the equal-budget control, UMAP and TriMap when initialized from CPU t-SNE (Table~S2 of the Supplementary Material). Furthermore, on $5{,}000$-point subsamples, SR-tSNE outperforms NeRV, the closest neighbor-retrieval objective, by factors of $2.9\times$ to $9.2\times$ in O-DLP (Table~\ref{tab:nerv5k}), establishing that explicit listwise rank bounds surpass soft information-theoretic divergences.

\textit{Extensions and future directions.}
\label{sec:limitations}
Three extensions follow directly from this framework:
(1)~\textit{Sub-quadratic scaling:} The smooth rank loss scales strictly linearly as $\mathcal{O}(n(k+m))$, evaluating only the local candidate pool $\mathcal{C}_i$. The computational bottleneck resides entirely in the exact t-SNE partition function ($\mathcal{O}(n^2)$) and exhaustive intruder search. Replacing the exact denominator with FFT-accelerated interpolation (FIt-SNE) and adopting approximate nearest-neighbor search brings overall complexity to $\mathcal{O}(n \log n)$ without modifying the rank term.
(2)~\textit{Continuous regularization schedules:} While our piecewise size rule ($\lambda \in \{5, 50\}$) proves robust, dynamically scheduling $\lambda(n)$ via continuous validation-based tuning could further optimize convergence trajectories.
(3)~\textit{Parametric extensions:} Formulating the soft rank loss as an objective for training deep neural encoders will enable instantaneous out-of-sample inference via a single forward pass, bypassing iterative gradient refinement for large-scale streaming pipelines.
\section{Conclusion}\label{sec:conclusion}

\textit{Summary of contributions.}
We introduced SR-tSNE, which adds a smooth listwise rank loss to t-SNE, along with Ordinal Degree of Locality Preservation (O-DLP). Our core contributions are two-fold:
(1)~\textit{Metric formulation and theoretical control:} We formulated O-DLP to credit a neighbor only when it returns at its exact high-dimensional rank under a position discount, establishing that the rank loss provably controls the discrete metric deficit (Theorem~\ref{thm:control}) with exponential surrogate convergence (Theorem~\ref{thm:soft}).
(2)~\textit{A rank-preserving embedding objective:} By ranking each point's high-dimensional neighbors against mined intruders under a position discount, one simple term forces the embedding to place the right neighbors in the right order with negligible computational overhead ($\sim 2\%$).

\textit{Empirical validation across downstream tasks.}
On eight benchmarks against twelve methods, SR-tSNE attains the highest exact-rank preservation (O-DLP), the highest neighborhood overlap (DLP) and the highest nDCG, on all eight. An equal-budget $\lambda{=}0$ control confirms the rank term as the sole active driver. This gain directly reaches tasks that read neighbor lists from the top down:
(1)~\textit{Algorithmic trading:} It delivers the best return of six embeddings on 21 of 35 names with higher Sharpe ratios, shallower drawdowns, and fewer trades;
(2)~\textit{Computer vision:} In label transfer across seven vision datasets, it leads on every one in precision at rank one and in mean reciprocal rank;
(3)~\textit{Single-cell discovery:} In clustering across six single-cell datasets, it leads on 17 of 18 dataset--criterion cells and prevents spurious cluster over-splitting in Leiden community detection.

\textit{Strengths and overall impact.}
The method's primary strengths are its economy and its exact match to the criterion: one term and one weight turn t-SNE into an embedding that keeps neighbor order, and O-DLP measures the exact quantity the term is built to control. The empirical evidence spans eight benchmarks, twelve methods, three applications, one held-out corpus, and an equal-budget control. Neighbor order is a fundamental property an embedding can and should be asked for, O-DLP measures whether it was delivered, and delivering it pays wherever a downstream step reads the list from the top. Beyond non-linear manifold learning, establishing that smooth listwise surrogates can provably govern discrete ordinal metrics provides a generalizable blueprint for top-$k$ representation learning across information retrieval, bioinformatics, and recommendation systems.

\textbf{Data and code availability.} All benchmark datasets are publicly available from their original sources. The
implementation of SR-tSNE and its criteria is released at
\url{https://github.com/hank08819/SR-tSNE}, where embeddings and preprocessing
scripts are being added.

\begingroup\setstretch{1.0}
\vspace{-6pt}\section*{Funding}
This work is partially supported by NSF 2229138 and the McCollum Endowed Chair
startup fund.

\vspace{-6pt}\section*{Declaration of competing interest}
The authors declare that they have no known competing interests.

\vspace{-6pt}\section*{CRediT authorship contribution statement}
\textbf{Henry Han}: Conceptualization, Methodology, Software, Supervision, Writing. \textbf{Luyu Niu}: Methodology, Software, Data curation. \textbf{Diane Li}: Investigation. \textbf{Jiacun Wang}: Investigation
\vspace{-6pt}\section*{Declaration of generative AI and AI-assisted technologies in the manuscript preparation process} AI is used for english polish.
\endgroup

\bibliographystyle{elsarticle-num}
\setlength{\bibsep}{0pt}
{\scriptsize\setstretch{1.0}
\bibliography{references}
\label{pg:refend}
}

\appendix
\label{pg:appstart}
\section{Supplementary Material}
\renewcommand{\thefigure}{\Alph{section}.\arabic{figure}}
\renewcommand{\thetable}{\Alph{section}.\arabic{table}}
\setcounter{figure}{0}
\setcounter{table}{0}

\FloatBarrier

\subsection{Frequently Asked Questions}
\label{app:faq}

This section collects the questions most often raised about SR-tSNE. Every answer states what the paper establishes and
names the section, table or figure that carries the evidence.

\begingroup
\setstretch{1.15}\small
\begin{enumerate}[label=\textbf{Q\arabic*.}, leftmargin=2.6em, itemsep=5pt]

\item \textbf{What is SR-tSNE?}
SR-tSNE is t-SNE with one added term, the smooth rank loss of Section~\ref{sec:smooth_rank_loss}. For each point the
loss reads its $k$ nearest neighbors in the input space, in order. It then asks the embedding to return the same
neighbors in the same order. The neighbor that is nearest in the input space should be nearest in the embedding, the
second should be second, and so on. The loss also mines \emph{intruders}: points that lie close to a point in the
embedding while belonging to no part of its input-space neighbor list. It pushes every intruder behind every true
neighbor. A single weight $\lambda$ sets how much the term counts against the t-SNE objective, and
Section~\ref{sec:method} states the two together.

\item \textbf{What gap does SR-tSNE close?}
Neighbor-embedding methods preserve who belongs to a neighborhood. The order inside that neighborhood is the part
SR-tSNE adds. Two embeddings can hold the same neighbors and still rank them differently, and SR-tSNE fixes both the
membership and the order.

\item \textbf{Why does neighbor order matter?}
Three common pipelines read the neighbor list from the top down. Retrieval and label transfer act on the item in
position one. Community detection builds its edges from top-$k$ lists. The trading scorer of Section~\ref{sec:trading}
reads a Spearman agreement between the two spaces. Each of them improves when the order improves.

\item \textbf{What is O-DLP?}
O-DLP credits a neighbor when it returns at its exact input-space rank, Eq.~(\ref{eq:odlp}). DLP credits the same
neighbor for being present at any position. O-DLP is therefore the strict reading of one neighborhood, and
Eq.~(\ref{eq:odlp_le_dlp}) relates the two. A third criterion, nDCG, grades the returned list by position and gives
partial credit for a near miss. All three are read at $k = 10$.

\item \textbf{What is novel here?}
Three elements are new. The first is an objective in the t-SNE family that asks for exact neighbor order. The second is
O-DLP, a criterion for that property. The third is a set of guarantees that tie the optimized loss to the reported
metric, Section~\ref{sec:theory}.

\item \textbf{What are the contributions?}
The method adds end-to-end differentiable rank regularization to t-SNE. The theory proves zero loss at perfect order,
exponential soft-to-exact convergence, a strict order preference, and an upper bound on the order deficit. The metric
extends DLP from membership to exact rank. The benchmarks cover eight datasets and twelve peer methods. The
applications cover label transfer, cell-type discovery and intraday trading.

\item \textbf{How does SR-tSNE relate to NeRV, RankVisu and triplet methods?}
NeRV optimizes a retrieval trade-off defined on neighborhood membership. RankVisu and local ordinal embedding work from
discrete rank relations. Triplet methods such as TriMap sample relative comparisons. SR-tSNE keeps t-SNE's continuous
similarities and adds one \emph{listwise} term. Listwise means the term scores a point's whole top-$k$ list at once,
rather than one pair at a time. It therefore combines both sources of information.
Table~\ref{tab:nerv5k} places it beside NeRV at matched scale, where SR-tSNE leads by $2.9\times$ to $9.2\times$ in
O-DLP.

\item \textbf{Why does the candidate pool carry intruders?}
The loss scores one candidate pool per point: the $k$ true neighbors, followed by the $m$ mined intruders. Intruders
give the loss something to rank the true neighbors against. With $m \ge 1$, an outside point that moves
between two true neighbors raises the loss, which is what locks membership and order together in
Theorem~\ref{thm:control}. The $m = 0$ case is worked out in \ref{app:m0}, where the term becomes a differentiable
Spearman correlation.

\item \textbf{Why a listwise penalty?}
The listwise form of Q7 is what delivers the gain. The warm start it improves on is the plain t-SNE run that SR-tSNE
refines. From that warm start the listwise term lifts O-DLP by $+5\%$ to $+28\%$. On the same candidates, start and budget, the
listwise term lifts O-DLP above a pairwise logistic surrogate on all eight benchmarks, Table~\ref{tab:pairwise}. Aligning the whole
top-$k$ list at once is therefore the active ingredient.

\item \textbf{Why a soft rank with an annealed temperature?}
The sigmoid count of Eq.~(\ref{eq:r_low}) supplies a gradient everywhere, so the discrete ordering becomes trainable.
Annealing $\tau$ from $5.0$ to $0.05$ keeps early gradients broad and late ranks sharp. Theorem~\ref{thm:soft} closes
the gap to the discrete rank exponentially in the margin. The loss that is optimized and the metric that is reported
therefore agree at the end of training.

\item \textbf{How are $k$, $m$ and $\lambda$ chosen?}
Every experiment in this paper uses $k = m = 10$, the window that retrieval and graph construction read. The weight
follows one size rule: $\lambda = 50$ at $n \ge 50{,}000$ and $\lambda = 5$ below it. The rule was calibrated once and
then carried unchanged through every benchmark, application and appendix study, \ref{app:hyperparams}.

\item \textbf{Does SR-tSNE use labels?}
SR-tSNE is fully unsupervised. The neighbor list and the intruder pool come from input-space distances alone. Labels
and curated annotations enter at evaluation time, where they give an independent reading of the geometry.

\item \textbf{What do the four guarantees give a reader?}
Theorem~\ref{thm:control} bounds the exact-rank deficit by the loss, so minimizing the loss controls the reported
metric. Theorem~\ref{thm:soft} makes the optimized surrogate agree with the discrete rank exponentially fast.
Theorem~\ref{thm:pref} gives a strict preference for the correct order at every inversion.
Theorem~\ref{thm:blind} bounds the order deficit among the neighbors that are kept. The prefactor of
Theorem~\ref{thm:control} is uniform: it is fixed by $k$, $m$ and the discount, and it holds for every embedding and
every dataset. Proofs are in \ref{app:proofs}.

\item \textbf{What does the rank term add to an affinity objective?}
Theorem~\ref{thm:pref}(a)--(b) shows that the order pressure of t-SNE and UMAP follows an affinity gap.
Theorem~\ref{thm:pref}(c) shows that the rank term charges at least $\Delta_K / (n G^{\star})$ for every inversion,
whatever the affinities are. The room this opens is measurable. On MNIST, $32.9\%$ of adjacent pairs in a top-$30$ list carry
$p_{i,j+1} \ge p_{i,j}$. At least one such pair occurs for $98.5\%$ of points.

\item \textbf{Is the evaluation independent of the loss?}
Yes, on four counts. The $R_{NX}$ AUC is read off the co-ranking matrix, so it is independent of the objective.
SR-tSNE leads it on MNIST and CIFAR-10, in the local regime and in the global one, Table~\ref{tab:rnx}. Cell types are
scored against curated annotations. Label transfer is scored against class labels, and trading in net return. All four
agree with the O-DLP ordering.

\item \textbf{What do the absolute O-DLP values tell us?}
O-DLP counts exact rank matches, so its scale is set by how strict that test is. At $k = 10$, an O-DLP of $0.093$ on
MNIST means that about one neighbor in ten comes back at its own position, averaged over the ten positions. SR-tSNE
holds the highest value on all eight benchmarks. Table~\ref{tab:omega} reads the same quantity against the order-blind
baseline, and that is the share the downstream tasks convert into higher precision at rank one, cleaner Leiden
clusters and higher net return.

\item \textbf{What does the $\lambda = 0$ control establish?}
It shares the warm start, the optimizer, the iteration budget and the seed with SR-tSNE. The rank term is therefore the
single difference between the two arms. The control lands within $2\%$ of t-SNE, and the rank term adds $+5\%$ to $+28\%$ in
O-DLP on top of it, Table~\ref{tab:control_main}. The margin is therefore attributable to the term itself.

\item \textbf{What happens when the warm start is run longer?}
The rank term adds order on top of a t-SNE run of any length. Table~\ref{tab:tsne2000} continues the warm start to
$2{,}000$ iterations on all eight benchmarks. Refined from that longer run, SR-tSNE leads it on all eight benchmarks, by a median of $+10\%$. It also stands
$+6\%$ above SR-tSNE refined from the $1{,}000$-iteration start.

\item \textbf{What does the paired test certify?}
Each test is a two-sided Wilcoxon signed-rank over datasets, with one observation per dataset. Eight datasets place the
smallest attainable value at $0.0078$, and a $p$ at that floor certifies a win on every dataset of the pair.
Table~\ref{tab:effects} gives the size of each win, and the two are read together. The sign counts are descriptive, so
the conclusion they carry stands under any multiplicity correction.

\item \textbf{How does the result behave across warm starts?}
It holds from a $1{,}000$-iteration start on all eight benchmarks and from a $2{,}000$-iteration start on all eight,
where SR-tSNE leads that longer run by a median of $+10\%$, Table~\ref{tab:tsne2000}. It holds for a CPU
Barnes--Hut start in the benchmarks and for a GPU cuML start in the trading pipeline, Section~\ref{sec:discussion}.

\item \textbf{How stable is the result across initializations?}
Five seeds perturb the initial coordinates with $1\%$ Gaussian jitter. All $30$ metric differences come out positive,
\ref{app:seeds}.

\item \textbf{Why is NeRV compared on $5{,}000$ points?}
The comparison is run at matched scale, on the range the original dredviz implementation is built for. Every method is
re-run on the same $5{,}000$-point subsample of all eight benchmarks and scored there, Table~\ref{tab:nerv5k}. SR-tSNE
holds the best O-DLP and DLP on every subsample, and leads NeRV by $2.9\times$ to $9.2\times$ in O-DLP.

\item \textbf{Is every peer method scored on every benchmark?}
Yes. All twelve are scored on all eight benchmarks, so the tables carry no gaps. Eleven of them run at full size.
NeRV is scored at matched scale, on the $5{,}000$-point subsamples of Table~\ref{tab:nerv5k}, the largest size its
implementation reaches. Every paired test therefore runs over all eight datasets.

\item \textbf{What does the trading study deliver?}
It is a paired comparison of embeddings inside one fixed pipeline. The features, the DBSCAN$+$K settings, the execution
rule and the four cost levels are identical for every arm, so the embedding objective is the single variable. At
$2$\,bps SR-tSNE takes the best net return on $21$ of $35$ names, with a median of $+13.8\%$ against $+3.6\%$ for
t-SNE, a median Sharpe ratio of $2.32$ against $0.63$, and fewer round trips, Table~\ref{tab:pool35_risk}. Against a
cost-adjusted buy-and-hold baseline it leads on $25$ of $35$ names, $p = 0.0017$.

\item \textbf{How does the single-cell result behave under the resolution choice?}
The upper-envelope protocol is applied identically to every method. The conclusion also carries over to a fixed
resolution. The Adjusted Rand Index scores chance-corrected agreement with the curated cell-type labels. Setting the
Leiden resolution globally at $1.0$ and at $0.5$ keeps SR-tSNE first on all six datasets. The median gains over t-SNE
are $+23\%$ and $+19\%$, Table~\ref{tab:fixedres}. The cluster counts of Figure~\ref{fig:sccount} agree, with SR-tSNE
holding the total deviation to $3.2$ clusters.

\item \textbf{How is the label-transfer protocol set up?}
Embeddings are computed on the full dataset, and a random $20\%$ is then withheld from the retrieval index to serve as
queries. This transductive protocol is applied identically to every method. The two criteria are precision at rank one ($\text{P}_1$)
and the mean reciprocal rank of the first correct neighbor (MRR). SR-tSNE leads on all seven datasets in
both. A donor-split corpus held out from all tuning is reported in \ref{app:heldout}, where
SR-tSNE leads t-SNE by $+30\%$ in O-DLP.

\item \textbf{Where in the list does the gain sit?}
It sits at the head, where the downstream steps read. At position one SR-tSNE returns the true nearest neighbor
$1.06\times$ to $1.41\times$ more often than t-SNE, for example $0.367 \to 0.443$ on MNIST and $0.268 \to 0.377$ on
Renal-Cortex, Figure~\ref{fig:perpos} and Table~\ref{tab:perpos}. Over all $70{,}000$ MNIST queries it returns $0.71$
of the first three neighbors at their exact rank, against $0.57$ for t-SNE.

\item \textbf{What does the rank term cost?}
It reads a candidate pool of $K = k + m = 20$ per point, so it costs $\mathcal{O}(n(k+m))$. That overhead is $1.8\%$ on full MNIST, $26.16$ against $25.69$ minutes on one NVIDIA
A6000, and $2.1\%$ on CIFAR-10, Table~\ref{tab:runtime}.

\item \textbf{How does SR-tSNE scale?}
The rank term is linear in $n$ by construction. Barnes--Hut or FFT interpolation for the t-SNE partition function,
together with approximate nearest-neighbor search for the intruder pool, brings the whole refinement to
$\mathcal{O}(n \log n)$ without changing the rank term, Section~\ref{sec:discussion}.

\item \textbf{Can the rank term be attached to other embeddings, and where is the code?}
The term reads pairwise distances inside the embedding alone, so any differentiable neighbor-embedding objective can
carry it with the same candidate pool and the same discount. The implementation and the criteria are released at
\url{https://github.com/hank08819/SR-tSNE}. Every setting is listed in \ref{app:hyperparams}, and the notation is
collected in Table~\ref{tab:notation}.

\end{enumerate}
\endgroup
\FloatBarrier

\subsection{Additional figures and tables}\label{app:figs}

\begin{table}[!ht]
\centering
\caption{Effect sizes behind the paired tests: the difference SR-tSNE minus
baseline in O-DLP and DLP at $k{=}10$, median and range over the datasets the
baseline reaches ($n$). Computed from Tables~\ref{tab:odlp_main}
and~\ref{tab:dlp_main}.}
\label{tab:effects}
\adjustbox{max width=\textwidth}{\begin{tabular}{l c rl rl}
\toprule
& & \multicolumn{2}{c}{$\Delta$O-DLP} & \multicolumn{2}{c}{$\Delta$DLP} \\
\cmidrule(lr){3-4}\cmidrule(lr){5-6}
Baseline & $n$ & Median & Range & Median & Range \\
\midrule
t-SNE & 8 & +0.0133 & [+0.0049, +0.0417] & +0.0097 & [+0.0032, +0.0132] \\
UMAP & 8 & +0.0711 & [+0.0554, +0.1380] & +0.2345 & [+0.1774, +0.2582] \\
PHATE & 8 & +0.0743 & [+0.0573, +0.1643] & +0.2666 & [+0.1931, +0.4121] \\
TriMap & 8 & +0.0765 & [+0.0572, +0.1607] & +0.2747 & [+0.1898, +0.3682] \\
PaCMAP & 8 & +0.0757 & [+0.0573, +0.1530] & +0.2681 & [+0.1914, +0.3389] \\
ISOMAP & 8 & +0.0800 & [+0.0593, +0.1750] & +0.3285 & [+0.2108, +0.5264] \\
Autoencoder & 8 & +0.0768 & [+0.0573, +0.1658] & +0.2920 & [+0.1921, +0.4527] \\
ParamPaCMAP & 8 & +0.0771 & [+0.0580, +0.1588] & +0.2860 & [+0.1981, +0.3997] \\
SUDE & 8 & +0.0760 & [+0.0572, +0.1401] & +0.2903 & [+0.1934, +0.3650] \\
LocalMAP & 8 & +0.0707 & [+0.0555, +0.1418] & +0.2246 & [+0.1773, +0.2698] \\
DREAMS & 8 & +0.0397 & [+0.0295, +0.0799] & +0.0695 & [+0.0476, +0.1517] \\
SR-tSNE ($\lambda{=}0$) & 8 & +0.0127 & [+0.0040, +0.0416] & +0.0097 & [+0.0037, +0.0136] \\
\bottomrule
\end{tabular}}
\end{table}

Table~\ref{tab:perpos} reads the gain position by position. SR-tSNE leads at the head of every one of the eight lists, and the mean over the ten positions is O-DLP$(10)$.

\begin{table}[!ht]
\centering
\caption{Exact-hit rate by position: the fraction of points whose $r$-th
high-dimensional neighbor returns at position $r$ in the embedding,
$r = 1,\dots,10$, for t-SNE, the equal-budget $\lambda{=}0$ control and SR-tSNE
on the eight benchmarks. The mean over the ten positions is O-DLP$(10)$.}
\label{tab:perpos}
\adjustbox{max width=\textwidth}{\begin{tabular}{ll rrrrrrrrrr}
\toprule
Dataset & Method & 1 & 2 & 3 & 4 & 5 & 6 & 7 & 8 & 9 & 10 \\
\midrule
MNIST & t-SNE & .366 & .125 & .075 & .051 & .040 & .032 & .028 & .024 & .020 & .019 \\
 & $\lambda{=}0$ & .378 & .127 & .075 & .051 & .040 & .032 & .027 & .024 & .020 & .019 \\
 & SR-tSNE & .443 & .170 & .094 & .059 & .044 & .032 & .026 & .022 & .019 & .017 \\
\addlinespace[2pt]
CIFAR & t-SNE & .350 & .079 & .037 & .022 & .015 & .012 & .009 & .008 & .007 & .006 \\
 & $\lambda{=}0$ & .353 & .080 & .036 & .022 & .016 & .012 & .009 & .008 & .007 & .006 \\
 & SR-tSNE & .385 & .094 & .041 & .023 & .016 & .012 & .008 & .007 & .006 & .005 \\
\addlinespace[2pt]
F-MNIST & t-SNE & .326 & .106 & .064 & .042 & .033 & .027 & .022 & .019 & .017 & .016 \\
 & $\lambda{=}0$ & .325 & .106 & .064 & .042 & .033 & .027 & .022 & .020 & .017 & .016 \\
 & SR-tSNE & .394 & .142 & .076 & .051 & .036 & .026 & .021 & .018 & .016 & .013 \\
\addlinespace[2pt]
Speech & t-SNE & .455 & .135 & .066 & .042 & .030 & .022 & .018 & .014 & .012 & .011 \\
 & $\lambda{=}0$ & .464 & .136 & .066 & .042 & .029 & .022 & .018 & .014 & .012 & .011 \\
 & SR-tSNE & .482 & .159 & .078 & .045 & .028 & .020 & .015 & .012 & .011 & .008 \\
\addlinespace[2pt]
Renal & t-SNE & .268 & .087 & .056 & .039 & .031 & .023 & .021 & .019 & .017 & .017 \\
 & $\lambda{=}0$ & .268 & .087 & .056 & .039 & .030 & .023 & .021 & .019 & .018 & .016 \\
 & SR-tSNE & .377 & .131 & .071 & .044 & .031 & .023 & .021 & .015 & .015 & .014 \\
\addlinespace[2pt]
Brain & t-SNE & .280 & .098 & .058 & .042 & .033 & .028 & .024 & .020 & .017 & .016 \\
 & $\lambda{=}0$ & .280 & .097 & .058 & .041 & .033 & .028 & .024 & .020 & .018 & .016 \\
 & SR-tSNE & .382 & .136 & .075 & .047 & .034 & .025 & .022 & .018 & .015 & .014 \\
\addlinespace[2pt]
Cover. & t-SNE & .485 & .257 & .177 & .130 & .105 & .089 & .076 & .066 & .059 & .053 \\
 & $\lambda{=}0$ & .485 & .257 & .177 & .130 & .105 & .089 & .076 & .066 & .059 & .053 \\
 & SR-tSNE & .586 & .347 & .253 & .192 & .149 & .116 & .090 & .071 & .059 & .049 \\
\addlinespace[2pt]
AG News & t-SNE & .406 & .134 & .074 & .051 & .037 & .030 & .025 & .020 & .017 & .015 \\
 & $\lambda{=}0$ & .406 & .135 & .074 & .051 & .037 & .030 & .025 & .020 & .018 & .015 \\
 & SR-tSNE & .447 & .170 & .092 & .058 & .039 & .029 & .022 & .018 & .016 & .013 \\
\bottomrule
\end{tabular}}
\end{table}

Table~\ref{tab:nerv5k} places every method on one common footing, a $5{,}000$-point subsample. SR-tSNE holds the best O-DLP and the best DLP on all eight subsamples.

\begin{table}[!ht]
\centering
\caption{Matched-scale comparison on a $5{,}000$-point subsample (seed 42) of
each benchmark: every method is re-run on the same subsample and scored on its
own high-dimensional neighbor lists at $k{=}10$. NeRV runs through the original
dredviz implementation at its defaults ($\lambda_{\text{NeRV}} = 0.1$, seed 42).
SR-tSNE uses the recipe of Section~\ref{sec:implementation} at the size-rule
weight $\lambda{=}5$; $\lambda{=}0$ is the equal-budget control. Bold marks the
best method per row.}
\label{tab:nerv5k}
\adjustbox{max width=\textwidth}{\begin{tabular}{ll rrrrr r r}
\toprule
& & t-SNE & UMAP & TriMap & PaCMAP & NeRV & $\lambda{=}0$ & SR-tSNE \\
\midrule
MNIST & O-DLP & .1228 & .0454 & .0300 & .0309 & .0164 & .1233 & \textbf{.1294} \\
 & DLP & .4601 & .3400 & .2585 & .2657 & .1354 & .4566 & \textbf{.4665} \\
\addlinespace[1pt]
CIFAR-10 & O-DLP & .0694 & .0150 & .0105 & .0095 & .0078 & .0695 & \textbf{.0718} \\
 & DLP & .2512 & .1262 & .1025 & .0869 & .0761 & .2482 & \textbf{.2549} \\
\addlinespace[1pt]
Fashion-MNIST & O-DLP & .1031 & .0370 & .0297 & .0290 & .0385 & .1036 & \textbf{.1137} \\
 & DLP & .4394 & .3045 & .2592 & .2542 & .3153 & .4387 & \textbf{.4456} \\
\addlinespace[1pt]
Speech Commands & O-DLP & .0997 & .0224 & .0151 & .0132 & .0243 & .0997 & \textbf{.0998} \\
 & DLP & .3511 & .1833 & .1351 & .1284 & .2023 & .3495 & \textbf{.3562} \\
\addlinespace[1pt]
Renal-Cortex & O-DLP & .0956 & .0465 & .0384 & .0378 & .0394 & .0959 & \textbf{.1175} \\
 & DLP & .5028 & .3797 & .3407 & .3383 & .3284 & .5013 & \textbf{.5121} \\
\addlinespace[1pt]
Brain-Cortex & O-DLP & .1043 & .0487 & .0383 & .0367 & .0374 & .1051 & \textbf{.1227} \\
 & DLP & .5133 & .3730 & .3373 & .3297 & .3103 & .5126 & \textbf{.5230} \\
\addlinespace[1pt]
Covertype & O-DLP & .1750 & .0729 & .0522 & .0462 & .0463 & .1769 & \textbf{.2124} \\
 & DLP & .6764 & .5176 & .4532 & .3984 & .3466 & .6772 & \textbf{.6872} \\
\addlinespace[1pt]
AG News & O-DLP & .1022 & .0328 & .0224 & .0206 & .0194 & .1035 & \textbf{.1089} \\
 & DLP & .3861 & .2568 & .2055 & .1921 & .1669 & .3831 & \textbf{.3885} \\
\bottomrule
\end{tabular}}
\end{table}

At this size SR-tSNE holds the best O-DLP and DLP on every subsample, and NeRV reaches $11$--$34\%$ of SR-tSNE's
O-DLP. SR-tSNE leads t-SNE on every
subsample, by $+0\%$ to $+23\%$ in O-DLP and $+1\%$ to $+2\%$ in
DLP, so the gain of the rank term does not depend on the size of the
benchmark.
\FloatBarrier

\begin{table*}[!t]
\centering
\caption{nDCG$_{10}$ across all methods. The ranked list is the embedding's own
ten nearest neighbors. Relevance is graded by high-dimensional rank and
intruders score zero. Bold marks the best per dataset. $^{\dagger}$NeRV is scored on the $5{,}000$-point
subsamples of Table~\ref{tab:nerv5k}, the largest size its implementation reaches.}
\label{tab:ndcg_main}
\setlength{\tabcolsep}{4.8pt}
\adjustbox{max width=\textwidth}{\begin{tabular}{@{}lcccccccccc@{}}
\toprule
Method & MNIST & CIFAR & FMNIST & Speech & Renal & Brain & Cover & AGNews  \\
\midrule
t-SNE & .5079 & .3333 & .4448 & .4652 & .4147 & .4380 & .7471 & .4821 \\
UMAP & .1625 & .0404 & .1137 & .0580 & .1449 & .1448 & .4248 & .1349 \\
PHATE & .0495 & .0232 & .0909 & .0499 & .0772 & .0976 & .2379 & .0809 \\
TriMap & .0928 & .0247 & .0745 & .0335 & .1196 & .1141 & .2776 & .0806 \\
PaCMAP & .1025 & .0230 & .0825 & .0330 & .1182 & .1143 & .3237 & .0869 \\
ISOMAP & .0184 & .0039 & .0190 & .0158 & .0245 & .0267 & .1382 & .0112 \\
Autoencoder & .0653 & .0218 & .0502 & .0439 & .0609 & .0632 & .2120 & .0527 \\
ParamPaCMAP & .0731 & .0156 & .0567 & .0323 & .0901 & .0855 & .2696 & .0464 \\
SUDE & .0495 & .0241 & .0651 & .0449 & .0741 & .0631 & .3459 & .0686 \\
LocalMAP & .1428 & .0379 & .1207 & .0420 & .1460 & .1494 & .4037 & .1409 \\
DREAMS & .3360 & .1947 & .3731 & .3670 & .2834 & .2195 & .6703 & .3710 \\
NeRV$^{\dagger}$ & .1387 & .0748 & .3295 & .2123 & .3308 & .3143 & .3728 & .1670 \\
\midrule
SR-tSNE ($\lambda{=}0$) & .5091 & .3335 & .4447 & .4655 & .4146 & .4379 & .7471 & .4821 \\
SR-tSNE (ours) & \textbf{.5689} & \textbf{.3682} & \textbf{.5025} & \textbf{.4993} & \textbf{.4890} & \textbf{.5121} & \textbf{.8013} & \textbf{.5260} \\
\bottomrule
\end{tabular}}
\end{table*}


Table~\ref{tab:cvdatasets} lists the seven vision datasets of the label-transfer protocol and the weight the size rule selects for each.

\begin{table}[!ht]
\centering
\caption{The seven vision datasets of the label-transfer protocol. \emph{Queries}
is the held-out fifth used to probe the retrieval index; the remaining points
form the index. $d$ is the input dimensionality, \emph{Classes} the number of classes, and
$\lambda$ follows the size rule of Section~\ref{sec:setup}. Sets larger than $70{,}000$ points are stratified-subsampled to that
size.}
\label{tab:cvdatasets}
\setlength{\tabcolsep}{2.8pt}
\adjustbox{max width=\textwidth}{\begin{tabular}{l>{\raggedright\arraybackslash}p{4.3cm}rrrrr}
\toprule
Dataset & Data type & $n$ & $d$ & Classes & Queries & $\lambda$ \\
\midrule
MNIST & Grayscale digits & 70\,000 & 784 & 10 & 14\,000 & 50 \\
Fashion-MNIST & Grayscale clothing & 70\,000 & 784 & 10 & 14\,000 & 50 \\
CIFAR-10 & Color, ResNet-18 & 60\,000 & 512 & 10 & 12\,000 & 50 \\
Tiny-ImageNet~\cite{le2015tiny} & Color, ResNet-18 & 70\,000 & 512 & 200 & 14\,000 & 50 \\
EMNIST-Balanced~\cite{cohen2017emnist} & Grayscale characters & 69\,983 & 784 & 47 & 13\,996 & 50 \\
CIFAR-100~\cite{krizhevsky2009learning} & Color, ResNet-18 & 60\,000 & 512 & 100 & 12\,000 & 50 \\
USPS~\cite{hull1994database} & Grayscale digits & 9\,298 & 256 & 10 & 1\,859 & 5 \\
\bottomrule
\end{tabular}}
\end{table}

Table~\ref{tab:crossdata_summary} summarizes label transfer over the seven datasets. SR-tSNE leads every baseline on both criteria, at the sign-count floor of the test.

\begin{table}[!htbp]
\centering
\setlength{\tabcolsep}{2.6pt}
\caption{Label transfer summarized over the seven vision datasets. Each row is one
embedding, \emph{Median} is its median score over the seven, and \emph{SR-tSNE wins on}
gives the sign count and the two-sided paired Wilcoxon $p$ of SR-tSNE against that
embedding. Bold marks the best median in a column and $p < 0.05$. Per-dataset scores are in
Table~\ref{tab:crossdata}.}
\label{tab:crossdata_summary}
\adjustbox{max width=\textwidth}{\begin{tabular}{l cc cc}
\toprule
& \multicolumn{2}{c}{P$_1$ $\uparrow$} & \multicolumn{2}{c}{MRR $\uparrow$} \\
\cmidrule(lr){2-3}\cmidrule(l){4-5}
Embedding & Median & SR-tSNE wins on ($p$) & Median & SR-tSNE wins on ($p$) \\
\midrule
UMAP & .7159 & 7/7 (\textbf{.0156}) & .7949 & 7/7 (\textbf{.0156}) \\
TriMap & .6722 & 7/7 (\textbf{.0156}) & .7611 & 7/7 (\textbf{.0156}) \\
t-SNE & .8035 & 7/7 (\textbf{.0156}) & .8537 & 7/7 (\textbf{.0156}) \\
LocalMAP & .7267 & 7/7 (\textbf{.0156}) & .7985 & 7/7 (\textbf{.0156}) \\
DREAMS & .7707 & 7/7 (\textbf{.0156}) & .8311 & 7/7 (\textbf{.0156}) \\
SR-tSNE & \textbf{.8053} & --- & \textbf{.8547} & --- \\
\bottomrule
\end{tabular}}
\end{table}

Table~\ref{tab:scdatasets} lists the six annotated single-cell datasets and the curated type counts the clustering is scored against.

\begin{table}[!ht]
\centering
\caption{The six annotated single-cell datasets of Table~\ref{tab:umapwins}, where
\emph{Cells} and \emph{Genes} are the dimensions of the raw count matrix,
\emph{Sparsity} the fraction of its entries that are zero and \emph{Types} the number of
curated cell types. The two atlases above $50\,000$ cells are subsampled as
described in Section~\ref{sec:clustering}, and their three columns describe the
subsampled matrix. All six are then reduced to the same 50 principal components
as in Section~\ref{sec:setup}.}
\label{tab:scdatasets}
\setlength{\tabcolsep}{3.6pt}
\adjustbox{max width=\textwidth}{\begin{tabular}{l>{\raggedright\arraybackslash}p{1.9cm}>{\raggedright\arraybackslash}p{3.4cm}rrrr}
\toprule
Dataset & Tissue & Source & Cells & Genes & Sparsity & Types \\
\midrule
Trachea & Trachea & Human Cell Landscape & 9\,949 & 24\,990 & 97.1\% & 14 \\
Liver-MacParland & Liver & MacParland et al., human liver & 8\,444 & 20\,007 & 93.4\% & 20 \\
Kidney & Kidney & Human Cell Landscape & 9\,153 & 25\,601 & 97.5\% & 18 \\
Kidney-Park & Kidney & Park et al., mouse kidney & 5\,002 & 16\,272 & 93.7\% & 16 \\
Brain-Cortex & Cerebral cortex & Precentral gyrus dissection & 50\,212 & 58\,232 & 90.5\% & 35 \\
Renal-Cortex & Renal cortex & Renal cortex benchmarking atlas & 50\,027 & 36\,030 & 95.6\% & 35 \\
\bottomrule
\end{tabular}}
\end{table}

Table~\ref{tab:scgraph} looks inside the graph that Leiden reads. Under SR-tSNE each annotated type stays in fewer pieces and keeps more of its cells in the main piece.

\begin{table}[!ht]
\centering
\caption{The $15$-nearest-neighbor graph of the embedding that Leiden reads
(symmetrized, unweighted), on the six datasets of Table~\ref{tab:umapwins}.
\emph{Pieces} is the number of connected components of the subgraph induced by
an annotated cell type, averaged over types; \emph{largest share} is the
fraction of a type held in its largest component, averaged over types;
\emph{cross-type edges} is the fraction of graph edges joining two types.
$\lambda{=}0$ and SR-tSNE are means over the five warm starts. Bold marks the
best of the four per row.}
\label{tab:scgraph}
\setlength{\tabcolsep}{2.5pt}
\adjustbox{max width=\textwidth}{\begin{tabular}{l cccc cccc ccc}
\toprule
& \multicolumn{4}{c}{Pieces per type $\downarrow$} & \multicolumn{4}{c}{Largest share $\uparrow$} & \multicolumn{3}{c}{Cross-type edges} \\
\cmidrule(lr){2-5}\cmidrule(lr){6-9}\cmidrule(lr){10-12}
Dataset & t-SNE & $\lambda{=}0$ & SR-tSNE & UMAP & t-SNE & $\lambda{=}0$ & SR-tSNE & UMAP & t-SNE & $\lambda{=}0$ & SR-tSNE \\
\midrule
Trachea            & 16.50 & 16.21 & \textbf{15.71} & 19.36 & 0.926 & 0.928 & \textbf{0.934} & 0.893 & 0.078 & 0.078 & 0.079 \\
Liver-MacParland   & 10.40 & 10.22 & \textbf{9.79} & 10.20 & 0.885 & 0.885 & \textbf{0.894} & 0.886 & 0.169 & 0.169 & 0.171 \\
Kidney             & 16.11 & 15.97 & \textbf{15.51} & 16.00 & 0.929 & 0.930 & \textbf{0.935} & 0.934 & 0.123 & 0.123 & 0.125 \\
Kidney-Park        & 4.44 & 4.44 & 4.41 & \textbf{4.31} & 0.868 & 0.868 & \textbf{0.869} & 0.855 & 0.029 & 0.029 & 0.031 \\
Brain-Cortex       & 10.06 & 9.99 & \textbf{9.81} & 10.46 & 0.672 & 0.681 & \textbf{0.705} & 0.674 & 0.020 & 0.021 & 0.021 \\
Renal-Cortex       & 47.40 & 47.46 & \textbf{45.93} & 47.26 & 0.690 & 0.692 & \textbf{0.725} & 0.717 & 0.158 & 0.159 & 0.164 \\
\bottomrule
\end{tabular}}
\end{table}

Table~\ref{tab:fixedres} repeats the clustering with the resolution fixed in advance, so no annotation enters the choice. SR-tSNE keeps the top ARI on all six datasets at both settings.

\begin{table}[!ht]
\centering
\caption{Leiden at a resolution fixed in advance ($1.0$, then $0.5$) for every
method and dataset, so no annotation enters the choice; each row scores one
partition. ARI per dataset with the sign count and two-sided Wilcoxon $p$ of
SR-tSNE against each method over the six datasets (bold where $p<0.05$), then
AMI and FMI averaged over the six datasets and the mean number of clusters
found (the annotations average $23$ types). $\lambda{=}0$ and SR-tSNE are means
over five warm starts. Bold marks the best per row.}
\label{tab:fixedres}
\setlength{\tabcolsep}{2.5pt}
\adjustbox{max width=\textwidth}{\begin{tabular}{l cccccccc}
\toprule
Dataset & t-SNE & PHATE & UMAP & TriMap & LocalMAP & DREAMS & $\lambda{=}0$ & SR-tSNE \\
\midrule
\multicolumn{9}{l}{\textit{Resolution 1.0}, ARI $\uparrow$} \\
\midrule
Trachea            & .2343 & .2100 & .2156 & .2201 & .2082 & .2013 & .2311 & \textbf{.2674} \\
Liver-MacParland   & .3333 & .2988 & .3150 & .3448 & .2893 & .3098 & .3270 & \textbf{.3835} \\
Kidney             & .3131 & .2243 & .2706 & .2774 & .2630 & .2496 & .3029 & \textbf{.3332} \\
Kidney-Park        & .0663 & .0627 & .0603 & .0675 & .0580 & .0604 & .0673 & \textbf{.0865} \\
Brain-Cortex       & .0333 & .0373 & .0407 & .0430 & .0414 & .0379 & .0348 & \textbf{.0565} \\
Renal-Cortex       & .1374 & .0683 & .1433 & .1523 & .1491 & .1335 & .1384 & \textbf{.1885} \\
\cmidrule(lr){1-9}
SR-tSNE wins on & 6/6 & 6/6 & 6/6 & 6/6 & 6/6 & 6/6 & 6/6 &  \\
two-sided $p$ & \textbf{.031} & \textbf{.031} & \textbf{.031} & \textbf{.031} & \textbf{.031} & \textbf{.031} & \textbf{.031} &  \\
Mean AMI over the six (6/6 best) & .5975 & .5243 & .5955 & .5978 & .5940 & .5868 & .5970 & \textbf{.6204} \\
Mean FMI over the six (6/6 best) & .3135 & .2644 & .3039 & .3118 & .2974 & .2933 & .3110 & \textbf{.3474} \\
Mean clusters found (truth 23.0) & 84 & 68 & 74 & 63 & 78 & 90 & 79 & 52 \\
\midrule
\multicolumn{9}{l}{\textit{Resolution 0.5}, ARI $\uparrow$} \\
\midrule
Trachea            & .3626 & .2984 & .3057 & .3423 & .2985 & .2759 & .3448 & \textbf{.3934} \\
Liver-MacParland   & .4456 & .4256 & .3841 & .3933 & .3779 & .3731 & .4476 & \textbf{.5398} \\
Kidney             & .4125 & .2886 & .3559 & .3781 & .3487 & .3297 & .4189 & \textbf{.4829} \\
Kidney-Park        & .1062 & .0934 & .1049 & .0994 & .0829 & .0996 & .1010 & \textbf{.1209} \\
Brain-Cortex       & .0519 & .0546 & .0662 & .0680 & .0599 & .0601 & .0509 & \textbf{.0830} \\
Renal-Cortex       & .1881 & .0942 & .2037 & .2031 & .2025 & .1744 & .1886 & \textbf{.2468} \\
\cmidrule(lr){1-9}
SR-tSNE wins on & 6/6 & 6/6 & 6/6 & 6/6 & 6/6 & 6/6 & 6/6 &  \\
two-sided $p$ & \textbf{.031} & \textbf{.031} & \textbf{.031} & \textbf{.031} & \textbf{.031} & \textbf{.031} & \textbf{.031} &  \\
Mean AMI over the six (6/6 best) & .6325 & .5517 & .6306 & .6305 & .6260 & .6187 & .6323 & \textbf{.6585} \\
Mean FMI over the six (6/6 best) & .3838 & .3219 & .3654 & .3708 & .3549 & .3479 & .3797 & \textbf{.4278} \\
Mean clusters found (truth 23.0) & 61 & 45 & 52 & 44 & 58 & 68 & 58 & 37 \\
\bottomrule
\end{tabular}}
\end{table}

Table~\ref{tab:rnx} scores a criterion the training objective never sees. SR-tSNE takes the top AUC in both the local and the global regime, on both datasets.

\begin{table}[!ht]
\centering
\setlength{\tabcolsep}{3.6pt}
\caption{$R_{NX}$ AUC on a $20$k subsample of MNIST and CIFAR-10. The criterion is
read off the co-ranking matrix and is independent of the training objective.
All methods are scored on the same $20$k subsample (seed 42) from their stored full-size
embeddings; $\lambda{=}0$ is the equal-budget control of Table~\ref{tab:control_main}.
Bold marks the best method per column.}
\label{tab:rnx}
\adjustbox{max width=\textwidth}{\begin{tabular}{lcc cc}
\toprule
& \multicolumn{2}{c}{AUC (full range, $1/K$)} & \multicolumn{2}{c}{AUC ($K \le 100$)} \\
\cmidrule(lr){2-3} \cmidrule(lr){4-5}
Method & MNIST & CIFAR & MNIST & CIFAR \\
\midrule
t-SNE & .3327 & .2252 & .3627 & .1909 \\
UMAP & .2524 & .1557 & .2089 & .0580 \\
PHATE & .1541 & .1268 & .0725 & .0339 \\
TriMap & .2211 & .1457 & .1453 & .0464 \\
PaCMAP & .2269 & .1440 & .1594 & .0439 \\
ISOMAP & .1463 & .1099 & .0392 & .0118 \\
Autoencoder & .2393 & .1489 & .1285 & .0481 \\
ParamPaCMAP & .2188 & .1443 & .1500 & .0422 \\
SUDE & .1392 & .1166 & .0595 & .0325 \\
LocalMAP & .2432 & .1518 & .1962 & .0610 \\
DREAMS & .2952 & .1939 & .2971 & .1424 \\
\midrule
SR-tSNE ($\lambda{=}0$) & .3329 & .2253 & .3631 & .1910 \\
SR-tSNE (ours) & \textbf{.3370} & \textbf{.2278} & \textbf{.3718} & \textbf{.1966} \\
\bottomrule
\end{tabular}}
\end{table}

Table~\ref{tab:datasets_full} gives the source and the data type of each benchmark alongside its dimensions.

\begin{table}[!htbp]
\centering
\setlength{\tabcolsep}{3pt}
\caption{Overview of the eight benchmark datasets and their data types. Here $n$ is the
number of points, $d$ the input dimensionality and \emph{Classes} the number of
annotated labels.}
\label{tab:datasets_full}
\adjustbox{max width=\textwidth}{\begin{tabular}{l>{\raggedright\arraybackslash}p{2.1cm}>{\raggedright\arraybackslash}p{3.5cm}rrr}
\toprule
Dataset & Domain & Data Type & $n$ & $d$ & Classes \\
\midrule
MNIST~\cite{lecun1998gradient} & Computer Vision & Grayscale images (handwritten digits) & 70\,000 & 784 & 10 \\
CIFAR-10~\cite{krizhevsky2009learning} & Computer Vision & Color images (CNN features) & 60\,000 & 512 & 10 \\
Fashion-MNIST~\cite{xiao2017fashion} & Computer Vision & Grayscale images (clothing articles) & 70\,000 & 784 & 10 \\
Speech Commands~\cite{warden2018speech} & Speech / Audio & Spoken keywords (wav2vec\,2.0 features~\cite{baevski2020wav2vec}) & 59\,387 & 768 & 35 \\
Renal-Cortex & Single-cell genomics & scRNA-seq, renal cortex benchmarking atlas & 50\,027 & 36\,030 & 35 \\
Brain-Cortex & Single-cell genomics & Precentral gyrus dissection & 50\,212 & 58\,232 & 35 \\
Covertype~\cite{blackard1999comparative} & Remote sensing & Cartographic and soil attributes (tabular) & 59\,999 & 54 & 7 \\
AG News~\cite{zhang2015character} & Natural language & News headlines (BERT embeddings~\cite{devlin2019bert}) & 60\,000 & 768 & 4 \\
\bottomrule
\end{tabular}}
\end{table}

Figure~\ref{fig:pool30_equity} follows the capital of all 35 names through the quarter, one row per sector group.

\begin{figure}[!htbp]
\centering
\includegraphics[width=\linewidth]{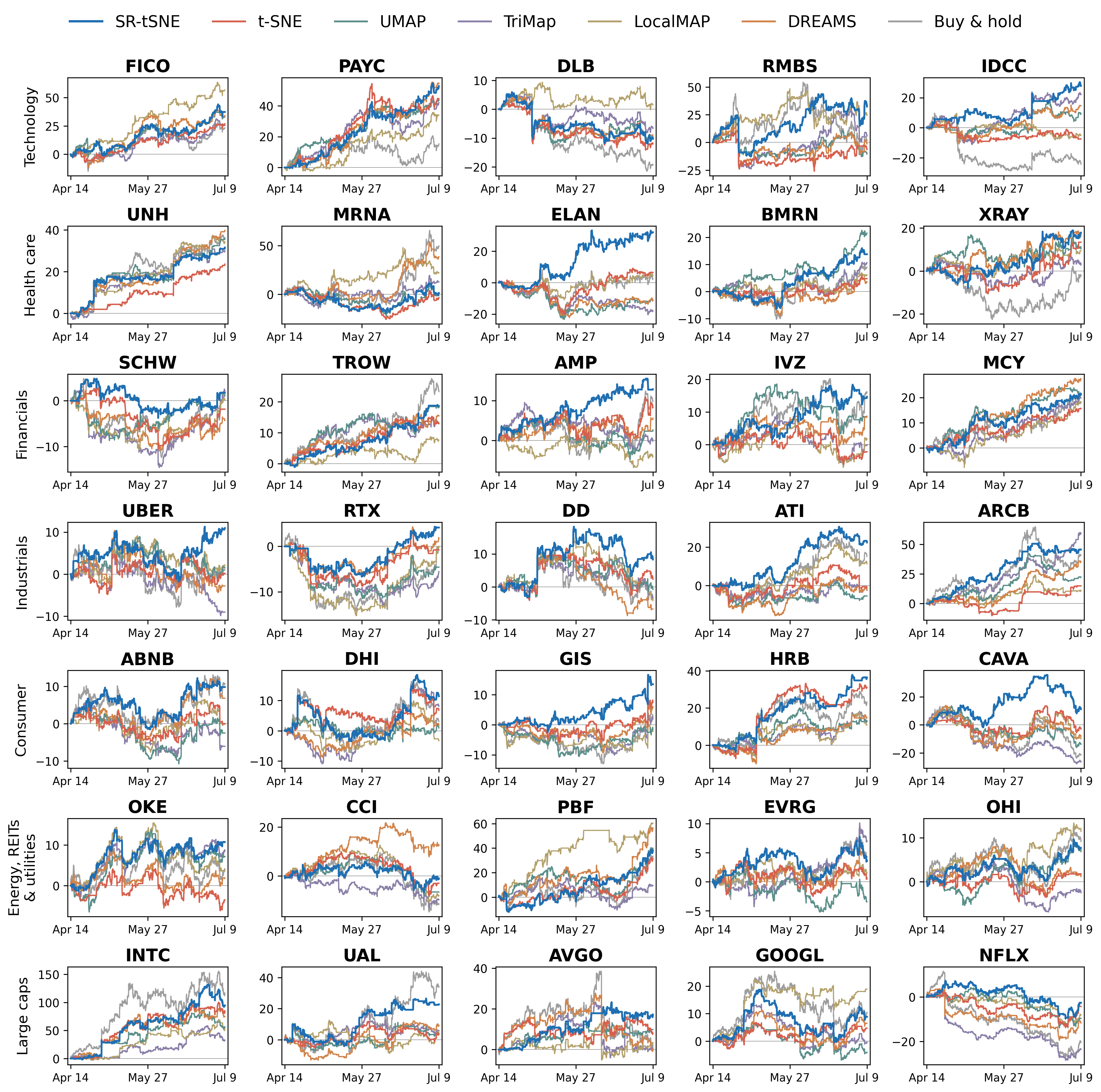}
\caption{Cumulative return of the 35 names at $2$\,bps, one row per group.
Buy and hold is charged the same round-trip cost.}
\label{fig:pool30_equity}
\end{figure}

Figure~\ref{fig:marker_edge} scores the markers themselves, by the move that follows them, so the result does not depend on the execution rule.

\begin{figure}[!htbp]
\centering
\includegraphics[width=\linewidth]{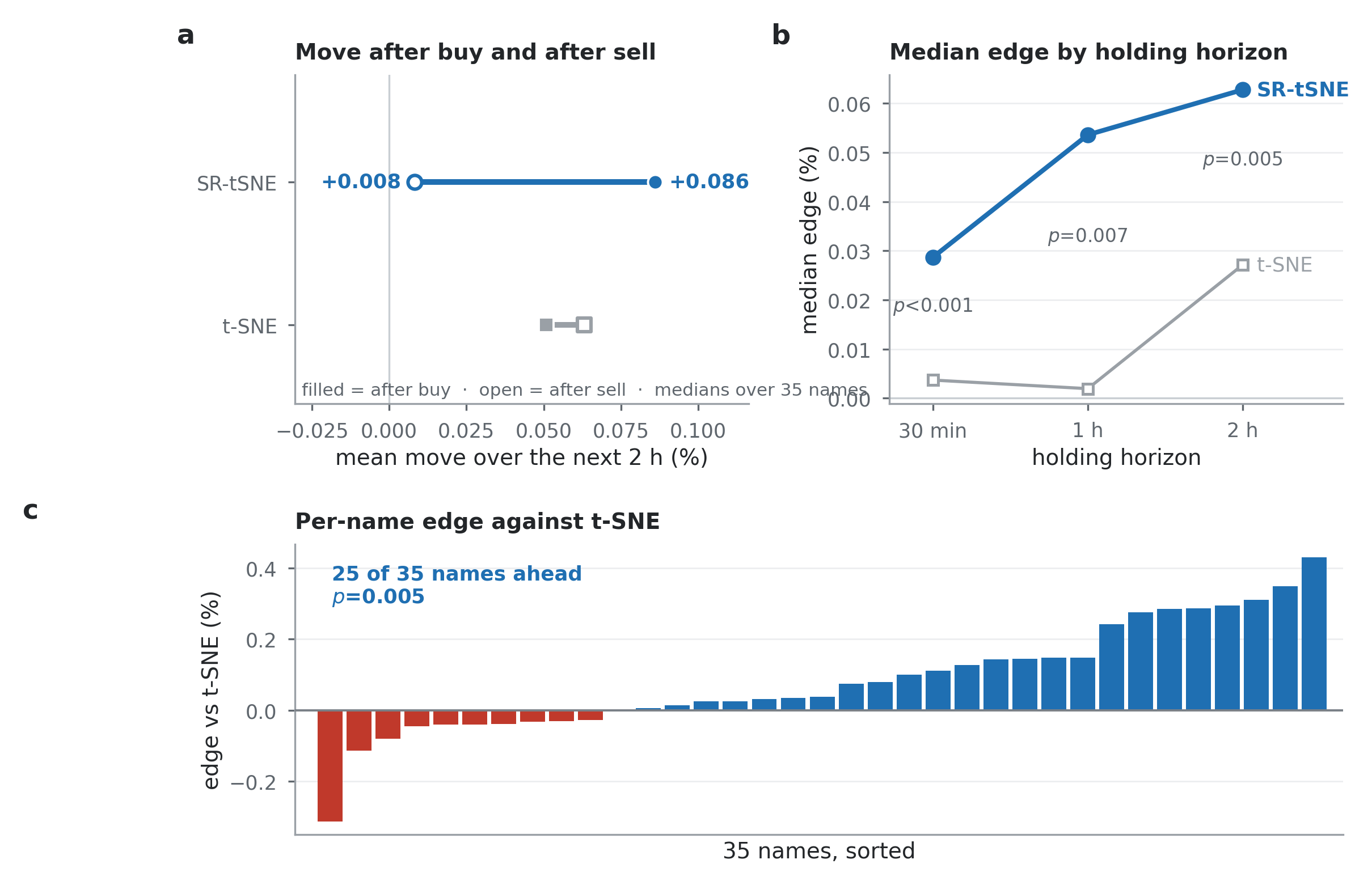}
\caption{What the markers are worth. The \emph{edge} of an embedding on one name
is the mean move of the close over the next $24$ bars after its buy markers
minus the mean move after its sell markers. \textbf{a} gives the two halves
separately, each a median over the $35$ names of Table~\ref{tab:pool35};
\textbf{b} tracks the median edge at three holding horizons, with the paired
Wilcoxon $p$ against t-SNE beside each; \textbf{c} gives the per-name
difference against t-SNE at two hours.}
\label{fig:marker_edge}
\end{figure}

Figure~\ref{fig:sweep_dlp} sweeps neighborhood membership across $k$. SR-tSNE holds the top curve at $k \in \{5, 10\}$ on every benchmark.

\begin{figure}[!ht]
\centering
\begin{tikzpicture}
\begin{groupplot}[
    group style={group size=3 by 3, horizontal sep=0.72cm, vertical sep=0.95cm},
    scale only axis=true, width=2.20cm, height=2.20cm,
    xmin=0.6, xmax=4.4, xtick={1,2,3,4},
    xticklabels={5,10,15,20},
    ymajorgrids=true, xmajorgrids=true, grid style={dotted, gray!40},
    tick label style={font=\scriptsize}, scaled y ticks=false,
    yticklabel style={/pgf/number format/.cd, fixed, fixed zerofill=false, precision=3, /tikz/.cd},
    label style={font=\scriptsize}, title style={font=\scriptsize\bfseries, yshift=-2pt},
]

\nextgroupplot[title={MNIST}, ylabel={DLP}, xticklabels={}]
\addplot[black, densely dashed, mark=*, mark size=1.0pt, line width=0.8pt] coordinates {(1,0.40083)(2,0.38795)(3,0.38026)(4,0.37565)};
\addplot[red, solid, mark=square*, mark size=1.0pt, line width=1.0pt] coordinates {(1,0.46574)(2,0.40086)(3,0.36674)(4,0.35390)};
\addplot[blue, solid, mark=triangle*, mark size=1.1pt, line width=0.6pt] coordinates {(1,0.11452)(2,0.15263)(3,0.17628)(4,0.19224)};
\addplot[violet, solid, mark=star, mark size=1.2pt, line width=0.6pt] coordinates {(1,0.03408)(2,0.04805)(3,0.05703)(4,0.06369)};
\addplot[teal, solid, mark=diamond*, mark size=1.1pt, line width=0.6pt] coordinates {(1,0.06341)(2,0.09299)(3,0.11166)(4,0.12558)};
\addplot[orange, solid, mark=pentagon*, mark size=1.0pt, line width=0.6pt] coordinates {(1,0.06999)(2,0.10184)(3,0.12278)(4,0.13802)};
\addplot[brown, solid, mark=otimes*, mark size=1.0pt, line width=0.6pt] coordinates {(1,0.01205)(2,0.01873)(3,0.02329)(4,0.02728)};
\addplot[gray!60!black, solid, mark=oplus*, mark size=1.0pt, line width=0.6pt] coordinates {(1,0.04358)(2,0.06642)(3,0.08350)(4,0.09755)};
\addplot[cyan!65!black, solid, mark=halfcircle*, mark size=1.0pt, line width=0.6pt] coordinates {(1,0.04792)(2,0.07566)(3,0.09645)(4,0.11374)};
\addplot[magenta!70!black, solid, mark=x, mark size=1.2pt, line width=0.6pt] coordinates {(1,0.03465)(2,0.04567)(3,0.05199)(4,0.05647)};
\addplot[green!50!black, solid, mark=Mercedes star, mark size=1.2pt, line width=0.8pt] coordinates {(1,0.25130)(2,0.28386)(3,0.30147)(4,0.31375)};
\addplot[yellow!65!black, solid, mark=square, mark size=1.0pt, line width=0.6pt] coordinates {(1,0.09933)(2,0.14008)(3,0.16386)(4,0.18074)};

\nextgroupplot[title={CIFAR-10}, xticklabels={}]
\addplot[black, densely dashed, mark=*, mark size=1.0pt, line width=0.8pt] coordinates {(1,0.25962)(2,0.20872)(3,0.18575)(4,0.17336)};
\addplot[red, solid, mark=square*, mark size=1.0pt, line width=1.0pt] coordinates {(1,0.28782)(2,0.21489)(3,0.18132)(4,0.16575)};
\addplot[blue, solid, mark=triangle*, mark size=1.1pt, line width=0.6pt] coordinates {(1,0.02792)(2,0.03747)(3,0.04375)(4,0.04812)};
\addplot[violet, solid, mark=star, mark size=1.2pt, line width=0.6pt] coordinates {(1,0.01638)(2,0.02177)(3,0.02515)(4,0.02772)};
\addplot[teal, solid, mark=diamond*, mark size=1.1pt, line width=0.6pt] coordinates {(1,0.01638)(2,0.02511)(3,0.03100)(4,0.03578)};
\addplot[orange, solid, mark=pentagon*, mark size=1.0pt, line width=0.6pt] coordinates {(1,0.01537)(2,0.02347)(3,0.02895)(4,0.03328)};
\addplot[brown, solid, mark=otimes*, mark size=1.0pt, line width=0.6pt] coordinates {(1,0.00239)(2,0.00412)(3,0.00545)(4,0.00668)};
\addplot[gray!60!black, solid, mark=oplus*, mark size=1.0pt, line width=0.6pt] coordinates {(1,0.01426)(2,0.02283)(3,0.02957)(4,0.03537)};
\addplot[cyan!65!black, solid, mark=halfcircle*, mark size=1.0pt, line width=0.6pt] coordinates {(1,0.00983)(2,0.01677)(3,0.02223)(4,0.02705)};
\addplot[magenta!70!black, solid, mark=x, mark size=1.2pt, line width=0.6pt] coordinates {(1,0.01693)(2,0.02155)(3,0.02500)(4,0.02878)};
\addplot[green!50!black, solid, mark=Mercedes star, mark size=1.2pt, line width=0.8pt] coordinates {(1,0.14780)(2,0.14116)(3,0.13830)(4,0.13688)};
\addplot[yellow!65!black, solid, mark=square, mark size=1.0pt, line width=0.6pt] coordinates {(1,0.02582)(2,0.03756)(3,0.04483)(4,0.05074)};

\nextgroupplot[title={F-MNIST}, xticklabels={}]
\addplot[black, densely dashed, mark=*, mark size=1.0pt, line width=0.8pt] coordinates {(1,0.34890)(2,0.33675)(3,0.33031)(4,0.32794)};
\addplot[red, solid, mark=square*, mark size=1.0pt, line width=1.0pt] coordinates {(1,0.40706)(2,0.34933)(3,0.31907)(4,0.30824)};
\addplot[blue, solid, mark=triangle*, mark size=1.1pt, line width=0.6pt] coordinates {(1,0.07901)(2,0.10853)(3,0.12796)(4,0.14196)};
\addplot[violet, solid, mark=star, mark size=1.2pt, line width=0.6pt] coordinates {(1,0.06368)(2,0.08717)(3,0.10121)(4,0.11125)};
\addplot[teal, solid, mark=diamond*, mark size=1.1pt, line width=0.6pt] coordinates {(1,0.05054)(2,0.07562)(3,0.09308)(4,0.10621)};
\addplot[orange, solid, mark=pentagon*, mark size=1.0pt, line width=0.6pt] coordinates {(1,0.05637)(2,0.08268)(3,0.10116)(4,0.11477)};
\addplot[brown, solid, mark=otimes*, mark size=1.0pt, line width=0.6pt] coordinates {(1,0.01247)(2,0.01952)(3,0.02525)(4,0.03008)};
\addplot[gray!60!black, solid, mark=oplus*, mark size=1.0pt, line width=0.6pt] coordinates {(1,0.03265)(2,0.05246)(3,0.06785)(4,0.08092)};
\addplot[green!50!black, solid, mark=Mercedes star, mark size=1.2pt, line width=0.8pt] coordinates {(1,0.28616)(2,0.29832)(3,0.30519)(4,0.31003)};
\addplot[yellow!65!black, solid, mark=square, mark size=1.0pt, line width=0.6pt] coordinates {(1,0.08367)(2,0.11932)(3,0.14184)(4,0.15801)};

\nextgroupplot[title={Speech Commands}, ylabel={DLP}, xticklabels={}]
\addplot[black, densely dashed, mark=*, mark size=1.0pt, line width=0.8pt] coordinates {(1,0.37082)(2,0.30784)(3,0.27740)(4,0.25936)};
\addplot[red, solid, mark=square*, mark size=1.0pt, line width=1.0pt] coordinates {(1,0.40410)(2,0.31102)(3,0.26631)(4,0.24452)};
\addplot[blue, solid, mark=triangle*, mark size=1.1pt, line width=0.6pt] coordinates {(1,0.04093)(2,0.05276)(3,0.06019)(4,0.06502)};
\addplot[violet, solid, mark=star, mark size=1.2pt, line width=0.6pt] coordinates {(1,0.03596)(2,0.04180)(3,0.04613)(4,0.04945)};
\addplot[teal, solid, mark=diamond*, mark size=1.1pt, line width=0.6pt] coordinates {(1,0.02330)(2,0.03278)(3,0.03841)(4,0.04259)};
\addplot[orange, solid, mark=pentagon*, mark size=1.0pt, line width=0.6pt] coordinates {(1,0.02220)(2,0.03256)(3,0.03908)(4,0.04415)};
\addplot[brown, solid, mark=otimes*, mark size=1.0pt, line width=0.6pt] coordinates {(1,0.01094)(2,0.01090)(3,0.01255)(4,0.01432)};
\addplot[gray!60!black, solid, mark=oplus*, mark size=1.0pt, line width=0.6pt] coordinates {(1,0.02961)(2,0.04032)(3,0.05004)(4,0.05851)};
\addplot[green!50!black, solid, mark=Mercedes star, mark size=1.2pt, line width=0.8pt] coordinates {(1,0.28794)(2,0.26338)(3,0.24791)(4,0.23819)};
\addplot[yellow!65!black, solid, mark=square, mark size=1.0pt, line width=0.6pt] coordinates {(1,0.02920)(2,0.04124)(3,0.04858)(4,0.05402)};

\nextgroupplot[title={Renal-Cortex}, xticklabels={}]
\addplot[black, densely dashed, mark=*, mark size=1.0pt, line width=0.8pt] coordinates {(1,0.32079)(2,0.32435)(3,0.32764)(4,0.33224)};
\addplot[red, solid, mark=square*, mark size=1.0pt, line width=1.0pt] coordinates {(1,0.39019)(2,0.33747)(3,0.31457)(4,0.31010)};
\addplot[blue, solid, mark=triangle*, mark size=1.1pt, line width=0.6pt] coordinates {(1,0.10069)(2,0.13982)(3,0.16694)(4,0.18794)};
\addplot[violet, solid, mark=star, mark size=1.2pt, line width=0.6pt] coordinates {(1,0.05476)(2,0.07344)(3,0.08494)(4,0.09383)};
\addplot[teal, solid, mark=diamond*, mark size=1.1pt, line width=0.6pt] coordinates {(1,0.08052)(2,0.12334)(3,0.15138)(4,0.17262)};
\addplot[orange, solid, mark=pentagon*, mark size=1.0pt, line width=0.6pt] coordinates {(1,0.08028)(2,0.12051)(3,0.14835)(4,0.16961)};
\addplot[green!50!black, solid, mark=Mercedes star, mark size=1.2pt, line width=0.8pt] coordinates {(1,0.20820)(2,0.24767)(3,0.27193)(4,0.28974)};
\addplot[yellow!65!black, solid, mark=square, mark size=1.0pt, line width=0.6pt] coordinates {(1,0.10008)(2,0.14741)(3,0.17774)(4,0.20028)};

\nextgroupplot[title={Brain-Cortex}, xticklabels={}]
\addplot[black, densely dashed, mark=*, mark size=1.0pt, line width=0.8pt] coordinates {(1,0.33838)(2,0.34339)(3,0.34823)(4,0.35384)};
\addplot[red, solid, mark=square*, mark size=1.0pt, line width=1.0pt] coordinates {(1,0.41085)(2,0.35452)(3,0.33172)(4,0.32776)};
\addplot[blue, solid, mark=triangle*, mark size=1.1pt, line width=0.6pt] coordinates {(1,0.09982)(2,0.14009)(3,0.16717)(4,0.18772)};
\addplot[violet, solid, mark=star, mark size=1.2pt, line width=0.6pt] coordinates {(1,0.06895)(2,0.09378)(3,0.11055)(4,0.12364)};
\addplot[teal, solid, mark=diamond*, mark size=1.1pt, line width=0.6pt] coordinates {(1,0.07675)(2,0.11815)(3,0.14608)(4,0.16780)};
\addplot[orange, solid, mark=pentagon*, mark size=1.0pt, line width=0.6pt] coordinates {(1,0.07715)(2,0.11756)(3,0.14489)(4,0.16602)};
\addplot[green!50!black, solid, mark=Mercedes star, mark size=1.2pt, line width=0.8pt] coordinates {(1,0.15580)(2,0.20279)(3,0.23435)(4,0.25784)};
\addplot[yellow!65!black, solid, mark=square, mark size=1.0pt, line width=0.6pt] coordinates {(1,0.10182)(2,0.15132)(3,0.18247)(4,0.20572)};

\nextgroupplot[title={Covertype}, ylabel={DLP}, xlabel={$k$}]
\addplot[black, densely dashed, mark=*, mark size=1.0pt, line width=0.8pt] coordinates {(1,0.62197)(2,0.63693)(3,0.63193)(4,0.62510)};
\addplot[red, solid, mark=square*, mark size=1.0pt, line width=1.0pt] coordinates {(1,0.71290)(2,0.64518)(3,0.59833)(4,0.58007)};
\addplot[blue, solid, mark=triangle*, mark size=1.1pt, line width=0.6pt] coordinates {(1,0.31940)(2,0.38736)(3,0.41837)(4,0.43410)};
\addplot[violet, solid, mark=star, mark size=1.2pt, line width=0.6pt] coordinates {(1,0.16756)(2,0.23320)(3,0.26142)(4,0.27130)};
\addplot[teal, solid, mark=diamond*, mark size=1.1pt, line width=0.6pt] coordinates {(1,0.19351)(2,0.27701)(3,0.31692)(4,0.33841)};
\addplot[orange, solid, mark=pentagon*, mark size=1.0pt, line width=0.6pt] coordinates {(1,0.23543)(2,0.30627)(3,0.33382)(4,0.35066)};
\addplot[brown, solid, mark=otimes*, mark size=1.0pt, line width=0.6pt] coordinates {(1,0.10049)(2,0.11878)(3,0.13088)(4,0.14032)};
\addplot[gray!60!black, solid, mark=oplus*, mark size=1.0pt, line width=0.6pt] coordinates {(1,0.15272)(2,0.19250)(3,0.22060)(4,0.24306)};
\addplot[green!50!black, solid, mark=Mercedes star, mark size=1.2pt, line width=0.8pt] coordinates {(1,0.53839)(2,0.58316)(3,0.59484)(4,0.59684)};
\addplot[yellow!65!black, solid, mark=square, mark size=1.0pt, line width=0.6pt] coordinates {(1,0.29778)(2,0.37998)(3,0.40892)(4,0.42476)};

\nextgroupplot[title={AG News}, xlabel={$k$}]
\addplot[black, densely dashed, mark=*, mark size=1.0pt, line width=0.8pt] coordinates {(1,0.38146)(2,0.35290)(3,0.33710)(4,0.32718)};
\addplot[red, solid, mark=square*, mark size=1.0pt, line width=1.0pt] coordinates {(1,0.43036)(2,0.35684)(3,0.31842)(4,0.30188)};
\addplot[blue, solid, mark=triangle*, mark size=1.1pt, line width=0.6pt] coordinates {(1,0.09486)(2,0.12861)(3,0.14845)(4,0.16098)};
\addplot[violet, solid, mark=star, mark size=1.2pt, line width=0.6pt] coordinates {(1,0.05748)(2,0.07595)(3,0.08680)(4,0.09420)};
\addplot[teal, solid, mark=diamond*, mark size=1.1pt, line width=0.6pt] coordinates {(1,0.05459)(2,0.08098)(3,0.09769)(4,0.10934)};
\addplot[orange, solid, mark=pentagon*, mark size=1.0pt, line width=0.6pt] coordinates {(1,0.05927)(2,0.08724)(3,0.10372)(4,0.11618)};
\addplot[brown, solid, mark=otimes*, mark size=1.0pt, line width=0.6pt] coordinates {(1,0.00756)(2,0.01021)(3,0.01273)(4,0.01489)};
\addplot[gray!60!black, solid, mark=oplus*, mark size=1.0pt, line width=0.6pt] coordinates {(1,0.03512)(2,0.05348)(3,0.06762)(4,0.07927)};
\addplot[green!50!black, solid, mark=Mercedes star, mark size=1.2pt, line width=0.8pt] coordinates {(1,0.28454)(2,0.29150)(3,0.29401)(4,0.29506)};
\addplot[yellow!65!black, solid, mark=square, mark size=1.0pt, line width=0.6pt] coordinates {(1,0.09868)(2,0.13756)(3,0.15843)(4,0.17280)};

\nextgroupplot[hide axis, ymin=0, ymax=1,
    legend style={draw=none, fill=none, font=\scriptsize, at={(0.5,0.5)}, anchor=center, xshift=9mm,
                  legend cell align=left, legend columns=2, /tikz/every even column/.append style={column sep=3pt}}]
\addlegendimage{black, densely dashed, mark=*, mark size=1.0pt, line width=0.8pt}\addlegendentry{t-SNE}
\addlegendimage{red, solid, mark=square*, mark size=1.0pt, line width=1.0pt}\addlegendentry{SR-tSNE}
\addlegendimage{blue, solid, mark=triangle*, mark size=1.1pt, line width=0.6pt}\addlegendentry{UMAP}
\addlegendimage{violet, solid, mark=star, mark size=1.2pt, line width=0.6pt}\addlegendentry{PHATE}
\addlegendimage{teal, solid, mark=diamond*, mark size=1.1pt, line width=0.6pt}\addlegendentry{TriMap}
\addlegendimage{orange, solid, mark=pentagon*, mark size=1.0pt, line width=0.6pt}\addlegendentry{PaCMAP}
\addlegendimage{brown, solid, mark=otimes*, mark size=1.0pt, line width=0.6pt}\addlegendentry{ISOMAP}
\addlegendimage{gray!60!black, solid, mark=oplus*, mark size=1.0pt, line width=0.6pt}\addlegendentry{Autoencoder}
\addlegendimage{cyan!65!black, solid, mark=halfcircle*, mark size=1.0pt, line width=0.6pt}\addlegendentry{ParamPaCMAP}
\addlegendimage{magenta!70!black, solid, mark=x, mark size=1.2pt, line width=0.6pt}\addlegendentry{SUDE}
\addlegendimage{green!50!black, solid, mark=Mercedes star, mark size=1.2pt, line width=0.8pt}\addlegendentry{DREAMS}
\addlegendimage{yellow!65!black, solid, mark=square, mark size=1.0pt, line width=0.6pt}\addlegendentry{LocalMAP}
\addplot[draw=none, forget plot] coordinates {(1,0)};

\end{groupplot}
\end{tikzpicture}
\caption{DLP versus neighborhood size $k$ on the same eight benchmarks, with the
legend at lower right. Each panel carries the methods for which a sweep across $k$
was stored; the four methods added at $k{=}10$ in Table~\ref{tab:dlp_main} are
scored at that single size and are not swept here. SR-tSNE holds the top curve at
$k \in \{5, 10\}$ on every benchmark.}
\label{fig:sweep_dlp}
\end{figure}

\subsection{Per-name trading results}

Tables~\ref{tab:pool35} and~\ref{tab:pool35_risk} give the per-name results behind
Section~\ref{sec:trading}: net return at $0$ and $2$\,bps, and the annualized Sharpe ratio, maximum drawdown and number
of round trips at $2$\,bps, for the $35$ equities in seven groups of five. Every arm runs through the identical
DBSCAN$+$K scorer of \ref{app:scorer} with the same per-stock marker percentile, so only the embedding differs between
columns. Figure~\ref{fig:pool30_five} summarizes both tables at $2$\,bps.

\begin{table*}[!t]
\centering
\caption{Returns (\%) on the 35 names, five per group across seven groups.
Bold marks the best of the six per name.}
\label{tab:pool35}
\renewcommand{\arraystretch}{0.92}
\setlength{\tabcolsep}{1.6pt}
\adjustbox{max width=\textwidth}{\begin{tabular}{ll cccccc cccccc}
\toprule
& & \multicolumn{6}{c}{0 bps} & \multicolumn{6}{c}{2 bps} \\
\cmidrule(lr){3-8} \cmidrule(lr){9-14}
Group & Stock & t-SNE & UMAP & TriMap & LocalMAP & DREAMS & SR-tSNE & t-SNE & UMAP & TriMap & LocalMAP & DREAMS & SR-tSNE \\
\midrule
 Technology & FICO & +29.6 & +45.5 & +38.8 & \textbf{+62.2} & +39.0 & +44.0 & +26.8 & +33.2 & +25.5 & \textbf{+56.6} & +34.1 & +37.3 \\
  & PAYC & +49.6 & +55.5 & +60.4 & +47.8 & \textbf{+62.4} & +56.6 & +44.6 & +43.6 & +42.0 & +33.9 & \textbf{+54.8} & +52.8 \\
  & DLB & $-$10.7 & $-$2.8 & +3.9 & \textbf{+7.2} & $-$5.3 & $-$8.9 & $-$12.1 & $-$9.6 & $-$6.7 & \textbf{+1.5} & $-$9.0 & $-$10.3 \\
  & RMBS & $-$2.8 & $-$3.6 & +16.6 & +34.7 & +3.7 & \textbf{+38.0} & $-$6.1 & $-$10.1 & +6.1 & +24.8 & $-$1.0 & \textbf{+32.8} \\
  & IDCC & $-$6.2 & +13.3 & \textbf{+33.9} & +1.3 & +19.9 & +29.8 & $-$7.5 & +9.1 & +22.2 & $-$5.1 & +14.6 & \textbf{+28.0} \\
 Health care & UNH & +25.4 & +40.5 & +35.1 & \textbf{+42.8} & +42.7 & +34.2 & +23.6 & +35.7 & +29.9 & +34.4 & \textbf{+39.8} & +31.5 \\
  & MRNA & +0.9 & +0.9 & +17.9 & +27.8 & \textbf{+41.6} & +3.1 & $-$3.8 & $-$1.5 & +12.7 & +22.1 & \textbf{+38.4} & +1.1 \\
  & ELAN & +11.8 & $-$8.7 & $-$12.0 & +14.5 & $-$6.0 & \textbf{+34.9} & +6.4 & $-$11.1 & $-$18.0 & +4.7 & $-$11.5 & \textbf{+32.1} \\
  & BMRN & +9.8 & \textbf{+30.1} & +26.7 & +17.9 & +9.4 & +18.2 & +4.8 & \textbf{+21.6} & +10.7 & +6.1 & +3.7 & +13.8 \\
  & XRAY & +15.5 & +15.2 & +15.6 & +12.5 & +20.6 & \textbf{+22.7} & +13.4 & +10.4 & +2.8 & +9.4 & +16.3 & \textbf{+17.9} \\
 Financials & SCHW & +1.0 & +4.1 & \textbf{+7.6} & +4.5 & $-$1.7 & +5.0 & $-$1.8 & +1.2 & \textbf{+2.2} & +0.3 & $-$4.2 & +1.8 \\
  & TROW & +18.9 & +23.0 & +21.0 & +15.2 & +20.8 & \textbf{+23.1} & +13.0 & +18.1 & +12.9 & +7.6 & +15.4 & \textbf{+18.6} \\
  & AMP & +10.1 & +6.3 & +12.8 & +1.8 & +8.8 & \textbf{+16.3} & +8.4 & +2.6 & +0.1 & $-$3.9 & +5.6 & \textbf{+12.8} \\
  & IVZ & +2.3 & +14.6 & +5.9 & +10.6 & +11.9 & \textbf{+20.0} & $-$2.2 & +8.6 & $-$0.0 & +1.7 & +8.7 & \textbf{+14.7} \\
  & MCY & +20.5 & +29.5 & +30.9 & +26.1 & \textbf{+31.8} & +26.2 & +15.6 & +21.2 & +19.9 & +18.3 & \textbf{+27.3} & +21.3 \\
 Industrials & UBER & +5.0 & +9.1 & $-$5.7 & +9.2 & $-$0.2 & \textbf{+16.0} & +0.2 & +1.6 & $-$9.0 & +2.2 & $-$2.8 & \textbf{+11.0} \\
  & RTX & +1.7 & $-$0.5 & $-$1.9 & \textbf{+6.5} & +3.6 & +6.0 & $-$0.8 & $-$4.6 & $-$5.6 & +0.0 & +1.8 & \textbf{+4.2} \\
  & DD & +4.7 & +2.2 & +0.2 & $-$1.5 & $-$2.6 & \textbf{+11.2} & +2.3 & $-$0.8 & $-$3.7 & $-$4.2 & $-$6.7 & \textbf{+8.3} \\
  & ATI & +2.1 & +0.5 & +10.2 & +17.3 & +3.5 & \textbf{+25.7} & $-$1.2 & $-$5.4 & $-$2.5 & +11.3 & $-$2.3 & \textbf{+22.4} \\
  & ARCB & +16.5 & +27.2 & \textbf{+69.1} & +15.7 & +39.6 & +49.4 & +14.9 & +22.3 & \textbf{+59.4} & +11.1 & +35.5 & +45.6 \\
 Consumer & ABNB & +2.4 & +1.7 & $-$1.2 & +3.6 & +9.4 & \textbf{+12.9} & $-$0.1 & $-$2.5 & $-$6.0 & +0.1 & +6.8 & \textbf{+9.8} \\
  & DHI & +12.4 & +8.9 & \textbf{+19.2} & +2.5 & +6.8 & +14.4 & +6.8 & +0.9 & +8.1 & $-$3.3 & +1.8 & \textbf{+11.4} \\
  & GIS & +9.2 & +6.5 & +5.4 & +0.6 & +12.4 & \textbf{+15.4} & +7.4 & $-$1.6 & +1.9 & $-$2.3 & +5.6 & \textbf{+13.4} \\
  & HRB & +35.9 & +19.0 & +18.5 & +17.5 & +17.6 & \textbf{+38.6} & +31.1 & +12.7 & +13.2 & +11.1 & +14.9 & \textbf{+36.5} \\
  & CAVA & $-$1.3 & $-$9.7 & $-$17.5 & $-$6.8 & $-$2.1 & \textbf{+14.0} & $-$2.9 & $-$13.9 & $-$26.4 & $-$9.9 & $-$8.2 & \textbf{+11.2} \\
 Energy, REITs \& utilities & OKE & +0.3 & +10.7 & \textbf{+15.1} & +9.7 & +6.7 & +14.1 & $-$3.5 & +7.2 & +8.7 & +3.6 & +1.9 & \textbf{+10.7} \\
  & CCI & +0.8 & $-$3.7 & $-$0.5 & $-$3.4 & \textbf{+14.6} & +3.8 & $-$3.0 & $-$6.5 & $-$9.6 & $-$7.9 & \textbf{+12.8} & $-$1.0 \\
  & PBF & +36.9 & +44.5 & +23.9 & \textbf{+65.2} & +61.3 & +42.8 & +31.5 & +39.8 & +9.4 & \textbf{+60.3} & +56.5 & +38.2 \\
  & EVRG & +5.6 & +0.1 & \textbf{+13.0} & +7.4 & +6.3 & +6.8 & +0.6 & $-$3.4 & \textbf{+7.0} & +1.6 & +3.6 & +3.6 \\
  & OHI & +6.5 & +11.1 & +4.6 & \textbf{+14.9} & +5.0 & +10.4 & +1.8 & +6.8 & $-$2.5 & \textbf{+11.4} & +1.2 & +7.7 \\
 Large caps & INTC & +96.9 & +67.9 & +47.9 & +63.3 & +82.5 & \textbf{+104.8} & +84.7 & +56.6 & +32.4 & +53.2 & +75.0 & \textbf{+94.6} \\
  & UAL & +8.1 & +11.9 & +21.2 & +7.2 & +11.2 & \textbf{+25.4} & +3.6 & +4.1 & +6.0 & +4.3 & +9.2 & \textbf{+22.8} \\
  & AVGO & +14.2 & +14.3 & +7.0 & +6.5 & +1.6 & \textbf{+19.2} & +8.0 & +9.8 & +0.6 & +3.3 & $-$0.4 & \textbf{+17.0} \\
  & GOOGL & +9.4 & +0.6 & +6.1 & \textbf{+22.6} & +9.1 & +11.7 & +4.7 & $-$4.0 & $-$0.4 & \textbf{+18.9} & +6.5 & +8.9 \\
  & NFLX & $-$0.8 & $-$2.8 & $-$13.6 & $-$3.3 & $-$8.8 & \textbf{+2.5} & $-$4.2 & $-$9.6 & $-$23.5 & $-$8.0 & $-$11.7 & \textbf{$-$2.6} \\
\midrule
\textit{Median} & & +8.1 & +9.1 & +13.0 & +10.6 & +9.4 & \textbf{+18.2} & +3.6 & +2.6 & +2.8 & +4.3 & +5.6 & \textbf{+13.8} \\
\bottomrule
\end{tabular}}
\end{table*}

\begin{table*}[!t]
\centering
\caption{Risk and activity for the same 35 names at $2$\,bps: annualized Sharpe ratio,
maximum drawdown (\%) and number of round trips. Bold marks the best of the six per name;
for round trips no method is marked.}
\label{tab:pool35_risk}
\setlength{\tabcolsep}{0.7pt}
\adjustbox{max width=\textwidth}{\begin{tabular}{ll cccccc cccccc cccccc}
\toprule
& & \multicolumn{6}{c}{Sharpe} & \multicolumn{6}{c}{Max drawdown (\%)} & \multicolumn{6}{c}{Round trips} \\
\cmidrule(lr){3-8} \cmidrule(lr){9-14} \cmidrule(lr){15-20}
Group & Stock & t-SNE & UMAP & TriMap & LocalMAP & DREAMS & SR-tSNE & t-SNE & UMAP & TriMap & LocalMAP & DREAMS & SR-tSNE & t-SNE & UMAP & TriMap & LocalMAP & DREAMS & SR-tSNE \\
\midrule
 Technology & FICO & +2.90 & +3.19 & +2.52 & \textbf{+5.14} & +3.05 & +3.42 & $-$10.28 & $-$14.97 & $-$14.69 & \textbf{$-$7.54} & $-$15.71 & $-$11.42 & 216 & 880 & 1008 & 352 & 357 & 478 \\
  & PAYC & +4.26 & +4.12 & +4.25 & +3.49 & +5.13 & \textbf{+5.16} & $-$13.62 & \textbf{$-$7.87} & $-$15.87 & $-$11.14 & $-$8.74 & $-$9.49 & 337 & 795 & 1214 & 990 & 483 & 246 \\
  & DLB & $-$1.60 & $-$1.29 & $-$0.82 & \textbf{+0.39} & $-$1.16 & $-$1.34 & $-$16.41 & $-$15.66 & $-$12.90 & \textbf{$-$8.75} & $-$15.59 & $-$17.01 & 129 & 718 & 1007 & 546 & 390 & 147 \\
  & RMBS & $-$0.05 & $-$0.26 & +0.72 & +1.92 & +0.32 & \textbf{+2.08} & $-$32.79 & $-$33.84 & $-$31.67 & \textbf{$-$24.45} & $-$31.17 & $-$30.38 & 343 & 658 & 942 & 761 & 465 & 381 \\
  & IDCC & $-$0.81 & +1.05 & +2.01 & $-$0.43 & +1.47 & \textbf{+2.61} & $-$15.21 & $-$11.61 & $-$17.91 & $-$16.83 & $-$13.32 & \textbf{$-$10.39} & 128 & 380 & 543 & 649 & 449 & 138 \\
 Health care & UNH & +4.52 & +4.78 & +4.18 & +4.59 & \textbf{+5.26} & +4.22 & $-$3.28 & $-$7.06 & $-$3.56 & $-$3.34 & $-$3.59 & \textbf{$-$3.05} & 151 & 341 & 388 & 609 & 205 & 161 \\
  & MRNA & $-$0.10 & +0.17 & +1.22 & +1.74 & \textbf{+2.85} & +0.35 & $-$31.22 & $-$25.84 & \textbf{$-$16.78} & $-$20.65 & $-$19.11 & $-$24.45 & 478 & 242 & 458 & 454 & 224 & 193 \\
  & ELAN & +0.97 & $-$1.31 & $-$2.54 & +0.77 & $-$1.66 & \textbf{+3.56} & $-$21.52 & $-$23.52 & $-$22.74 & $-$15.07 & $-$23.11 & \textbf{$-$10.93} & 497 & 223 & 696 & 299 & 191 & 209 \\
  & BMRN & +0.92 & \textbf{+4.30} & +1.97 & +1.18 & +0.76 & +2.34 & $-$9.17 & \textbf{$-$6.50} & $-$9.29 & $-$7.40 & $-$9.83 & $-$7.93 & 386 & 675 & 1354 & 1022 & 542 & 366 \\
  & XRAY & +1.85 & +1.42 & +0.51 & +1.36 & +1.96 & \textbf{+2.32} & $-$13.53 & $-$12.79 & $-$14.05 & \textbf{$-$10.16} & $-$10.96 & $-$11.31 & 180 & 429 & 1170 & 280 & 360 & 400 \\
 Financials & SCHW & $-$0.36 & +0.35 & \textbf{+0.57} & +0.16 & $-$0.86 & +0.53 & $-$13.21 & $-$10.95 & $-$16.59 & $-$11.69 & $-$13.07 & \textbf{$-$8.31} & 282 & 287 & 517 & 415 & 265 & 318 \\
  & TROW & +3.00 & +4.05 & +2.86 & +1.77 & +3.39 & \textbf{+4.26} & $-$4.08 & $-$5.90 & $-$6.72 & $-$5.58 & $-$3.92 & \textbf{$-$3.63} & 499 & 254 & 388 & 482 & 458 & 318 \\
  & AMP & +1.72 & +0.67 & +0.12 & $-$0.80 & +1.34 & \textbf{+2.94} & $-$7.88 & $-$11.19 & $-$10.94 & $-$8.44 & $-$6.47 & \textbf{$-$5.15} & 155 & 343 & 1198 & 420 & 262 & 305 \\
  & IVZ & $-$0.26 & +1.47 & +0.11 & +0.41 & +1.56 & \textbf{+2.30} & $-$11.26 & $-$13.43 & $-$14.63 & $-$14.00 & $-$9.51 & \textbf{$-$6.24} & 456 & 350 & 556 & 817 & 281 & 453 \\
  & MCY & +2.79 & +3.93 & +3.57 & +3.47 & \textbf{+4.56} & +3.54 & $-$5.98 & $-$5.57 & $-$7.76 & $-$8.65 & \textbf{$-$4.33} & $-$6.08 & 413 & 664 & 821 & 641 & 341 & 397 \\
 Industrials & UBER & +0.15 & +0.38 & $-$1.30 & +0.48 & $-$0.32 & \textbf{+1.88} & \textbf{$-$7.66} & $-$10.76 & $-$14.53 & $-$8.42 & $-$13.69 & $-$11.11 & 462 & 490 & 356 & 668 & 262 & 440 \\
  & RTX & $-$0.09 & $-$1.00 & $-$1.21 & +0.10 & +0.54 & \textbf{+1.06} & $-$10.14 & $-$12.06 & $-$14.08 & $-$15.94 & $-$8.34 & \textbf{$-$7.43} & 220 & 416 & 360 & 630 & 174 & 174 \\
  & DD & +0.52 & +0.01 & $-$0.49 & $-$0.56 & $-$1.11 & \textbf{+1.42} & $-$13.06 & \textbf{$-$10.89} & $-$12.46 & $-$16.55 & $-$18.31 & $-$12.90 & 227 & 291 & 402 & 277 & 254 & 265 \\
  & ATI & +0.03 & $-$0.57 & $-$0.17 & +1.60 & $-$0.15 & \textbf{+2.93} & $-$12.77 & $-$13.78 & $-$11.89 & $-$11.55 & $-$16.55 & \textbf{$-$8.32} & 329 & 254 & 1220 & 524 & 580 & 266 \\
  & ARCB & +2.07 & +2.64 & \textbf{+5.74} & +1.33 & +3.99 & +5.20 & $-$16.55 & $-$17.40 & $-$9.87 & $-$12.63 & $-$10.55 & \textbf{$-$7.47} & 132 & 389 & 590 & 408 & 293 & 257 \\
 Consumer & ABNB & +0.11 & $-$0.29 & $-$0.84 & +0.13 & +1.21 & \textbf{+1.68} & \textbf{$-$9.41} & $-$16.44 & $-$13.87 & $-$11.79 & $-$10.12 & $-$10.22 & 244 & 270 & 501 & 316 & 240 & 276 \\
  & DHI & +1.06 & +0.29 & +1.42 & $-$0.41 & +0.43 & \textbf{+1.62} & $-$9.05 & \textbf{$-$8.30} & $-$10.99 & $-$12.12 & $-$9.79 & $-$13.83 & 513 & 757 & 402 & 469 & 474 & 265 \\
  & GIS & +1.41 & $-$0.26 & +0.47 & $-$0.33 & +1.17 & \textbf{+2.44} & $-$8.22 & $-$11.66 & $-$11.08 & $-$8.87 & $-$9.85 & \textbf{$-$4.80} & 167 & 639 & 338 & 290 & 628 & 176 \\
  & HRB & +3.32 & +1.64 & +1.65 & +1.35 & +1.77 & \textbf{+4.21} & $-$10.53 & $-$11.23 & $-$10.42 & $-$14.67 & $-$10.45 & \textbf{$-$7.31} & 362 & 542 & 460 & 564 & 230 & 156 \\
  & CAVA & $-$0.14 & $-$1.45 & $-$3.13 & $-$0.94 & $-$0.76 & \textbf{+1.29} & \textbf{$-$16.79} & $-$26.21 & $-$34.18 & $-$18.44 & $-$28.67 & $-$20.43 & 171 & 277 & 1139 & 343 & 647 & 249 \\
 Energy, REITs \& utilities & OKE & $-$0.59 & +1.46 & +1.79 & +0.83 & +0.51 & \textbf{+2.11} & $-$10.01 & $-$9.38 & \textbf{$-$7.82} & $-$10.78 & $-$11.11 & $-$8.18 & 308 & 266 & 473 & 567 & 456 & 301 \\
  & CCI & $-$0.43 & $-$1.19 & $-$1.78 & $-$1.39 & \textbf{+2.24} & $-$0.04 & $-$16.38 & $-$16.16 & $-$13.70 & $-$20.95 & \textbf{$-$11.35} & $-$12.06 & 376 & 302 & 895 & 476 & 159 & 467 \\
  & PBF & +2.63 & +3.25 & +1.05 & \textbf{+4.84} & +3.90 & +2.95 & $-$16.37 & $-$19.64 & $-$23.06 & \textbf{$-$12.10} & $-$14.93 & $-$16.12 & 401 & 331 & 1246 & 303 & 302 & 327 \\
  & EVRG & +0.25 & $-$0.91 & \textbf{+2.05} & +0.57 & +1.06 & +1.08 & $-$6.17 & $-$7.68 & $-$5.35 & $-$4.55 & \textbf{$-$4.35} & $-$5.93 & 481 & 242 & 392 & 541 & 252 & 310 \\
  & OHI & +0.58 & +1.85 & $-$0.61 & \textbf{+2.98} & +0.41 & +1.96 & $-$6.55 & $-$7.07 & $-$10.97 & $-$6.76 & $-$10.65 & \textbf{$-$5.70} & 378 & 389 & 343 & 309 & 313 & 252 \\
 Large caps & INTC & +3.61 & +2.69 & +2.27 & +3.53 & +3.29 & \textbf{+3.71} & $-$14.74 & $-$17.66 & $-$17.06 & \textbf{$-$13.60} & $-$15.51 & $-$17.33 & 640 & 294 & 1106 & 638 & 420 & 510 \\
  & UAL & +0.63 & +0.62 & +0.83 & +0.64 & +1.25 & \textbf{+2.46} & \textbf{$-$11.88} & $-$15.78 & $-$13.37 & $-$19.03 & $-$15.90 & $-$12.94 & 250 & 725 & 1340 & 274 & 178 & 213 \\
  & AVGO & +1.16 & +1.48 & +0.28 & +0.54 & +0.19 & \textbf{+2.34} & $-$11.03 & \textbf{$-$7.02} & $-$22.23 & $-$16.73 & $-$22.44 & $-$8.19 & 548 & 400 & 617 & 299 & 199 & 185 \\
  & GOOGL & +1.01 & $-$0.62 & +0.09 & \textbf{+2.85} & +1.25 & +1.37 & \textbf{$-$7.45} & $-$12.69 & $-$15.93 & $-$9.06 & $-$12.87 & $-$16.26 & 172 & 463 & 409 & 308 & 246 & 255 \\
  & NFLX & $-$0.72 & $-$2.02 & $-$3.47 & $-$1.65 & $-$1.52 & \textbf{$-$0.40} & $-$16.14 & $-$17.34 & $-$30.71 & \textbf{$-$16.10} & $-$22.38 & $-$16.25 & 350 & 262 & 1215 & 500 & 323 & 510 \\
\midrule
\textit{Median} & & +0.63 & +0.62 & +0.57 & +0.77 & +1.21 & \textbf{+2.32} & $-$11.26 & $-$12.06 & $-$13.87 & $-$11.79 & $-$11.35 & \textbf{$-$10.22} & 337 & 380 & 590 & 476 & 302 & 266 \\
\bottomrule
\end{tabular}}
\end{table*}

\subsection{The DBSCAN$+$K Marker Scorer}
\label{app:scorer}

The DBSCAN$+$K marker scorer is set out here.
DBSCAN~\cite{ester1996density} is run on the two-dimensional
embedding, and the points it labels as noise are the candidates. Each candidate
$i$ carries two components. The geometric one is
$g_i = d_i / \bar{\delta}_{c(i)}$, where $c(i)$ is the cluster nearest to $i$
and $d_i$ the distance from $i$ to that cluster. The scale $\bar{\delta}_c$ is
the mean pairwise distance inside cluster $c$, estimated on a 50-point sample. A
point sitting far from a tight cluster scores high.

The order component reads rank agreement. Let $\rho_i$ be the Spearman
correlation~\cite{spearman1904} between the ranks of the ten high-dimensional
nearest neighbors of $i$ and the ranks of those same points by embedding
distance. A candidate whose neighborhood survives the projection has $\rho_i$
near one and is unlikely to be a genuine anomaly, so the component is
$1 - \rho_i$. Both are min--max normalized across the candidate set and added
with equal weight,
\begin{equation}
  s_i \;=\; \widetilde{g}_i \;+\; \widetilde{(1 - \rho_i)} ,
\end{equation}
where the tilde denotes that normalization. The highest-scoring candidates in
each direction are retained as markers, long giving a buy and short a sell. We
refer to this scorer as DBSCAN+K.

\subsection{Held-out Validation}
\label{app:heldout}
The configuration rule and the gains in order hold on held-out data. We
add the core Human Lung Cell Atlas~\cite{sikkema2023integrated}, a tissue absent
from the eight benchmarks, subsampled to $60\,000$ cells. Twenty donors are held
out as queries. Gene
selection, scaling and the principal components are fitted on reference cells
only. Queries are projected with the fitted transform. The corpus then runs the
pipeline of Section~\ref{sec:setup} unchanged. At $n = 60\,000$ the size rule
selects $\lambda{=}50$.

SR-tSNE leads on the held-out corpus as well.
Table~\ref{tab:heldout_new} reports the outcome. SR-tSNE is ahead of t-SNE and
UMAP on O-DLP, DLP and nDCG on the query cells, and ahead of t-SNE by $+30\%$ in O-DLP.

\begin{table}[!ht]
\centering
\caption{A held-out dataset: the Human Lung Cell Atlas core, split by donor into
$48\,000$ reference and $12\,000$ query cells, scored on the query cells. The
size rule of Section~\ref{sec:setup} selects $\lambda{=}50$. Bold marks the
best method per column.}
\label{tab:heldout_new}
\adjustbox{max width=\textwidth}{\begin{tabular}{lccc}
\toprule
Method & O-DLP$_{10}$ & DLP$_{10}$ & nDCG$_{10}$ \\
\midrule
UMAP                     & .0183 & .1636 & .1718 \\
t-SNE                    & .0654 & .3754 & .4664 \\
SR-tSNE ($\lambda{=}0$)  & .0674 & .3748 & .4684 \\
SR-tSNE (ours)           & \textbf{.0847} & \textbf{.3931} & \textbf{.5447} \\
\bottomrule
\end{tabular}}
\end{table}

\subsection{Implementation Details}
Table~\ref{tab:notation} collects the notation of the smooth rank loss, its optimization and the analysis of
Section~\ref{sec:theory}, with default values in parentheses. The optimizer of Section~\ref{sec:implementation} is SGD
with Polyak momentum,
\begin{align}
    v^{(t+1)} &\;=\; \mu \cdot v^{(t)} \,+\,
                     \nabla_Y \mathcal{L}_{\text{total}}\bigl(Y^{(t)}\bigr), \\
    Y^{(t+1)} &\;=\; Y^{(t)} \,-\, \eta \cdot v^{(t+1)},
\end{align}
with $\eta = 10$, $\mu = 0.8$ and $T = 1000$.

\label{app:impl}
For large $n$ we store $P$ sparsely, keeping the top $k_P{=}90$ entries
per row. The partition function $Z$ is accumulated in row chunks of
$c{=}10{,}000$ with gradient checkpointing, which cuts peak memory from
$\mathcal{O}(n^2)$ to $\mathcal{O}(c\,n)$; the refinement stage remains
$\mathcal{O}(n^2)$ in time. All gradients come from PyTorch autograd. Every
$T_r = 50$ iterations we recompute each point's intruder set with the same
chunked scan, skipping the point itself and its high-dimensional neighbor
list. We use $m = 10$ intruders throughout.

\begin{table}[h]
\centering
\caption{Notation for the smooth rank loss, its optimization and the analysis
of Section~\ref{sec:theory}. High-dimensional quantities are fixed before
training, while low-dimensional ones are recomputed from the current
embedding. Default values are given in parentheses.}
\label{tab:notation}
\renewcommand{\arraystretch}{1.15}
\adjustbox{max width=\textwidth}{\begin{tabular}{@{}ll@{}}
    \toprule
    \textbf{Symbol} & \textbf{Description} \\
    \midrule
    $\mathcal{N}_i$               & High-dim $k$-NN index list of point $i$ (precomputed, fixed) \\
    $\mathcal{N}_i(j)$                      & Index of the $j$-th high-dim nearest neighbor of point $i$ \\
    $\mathcal{I}_i$, $\mathcal{C}_i$ & Intruder set of point $i$; candidate pool $\mathcal{N}_i \oplus \mathcal{I}_i$ \\
    $m$, $T_r$                    & Intruder count ($10$) and mining interval ($50$ iterations) \\
    $\sigma(\cdot)$               & Sigmoid (logistic) function \\
    $\tau$, $\tau_{\text{start}}$, $\tau_{\text{end}}$ & Temperature parameter and its annealing endpoints \\
    $d_{i,j}$                     & Embedding-space distance from $y_i$ to $y_{C_i(j)}$ \\
    $r_{i,j}^{\text{in}}$       & Input-space rank of candidate $j$ ($= j$ by construction) \\
    $r_{i,j}^{\text{emb}}$        & Soft embedding rank of candidate $j$ with respect to point $i$, Eq.~(\ref{eq:r_low}) \\
    $\rho_i$, $\rho_s$            & Per-point and mean Spearman correlation \\
    $G_i$, $G^{\text{hard}}_i$, $G^{\star}$ & Per-point discounted gain read with the soft ranks, with the integer ranks, and its ideal value, Eqs.~(\ref{eq:gain}) and (\ref{eq:l_hard}) \\
    $\mathcal{L}_{\text{rank}}$   & Rank loss, Eq.~(\ref{eq:rank_gain}); Eq.~(\ref{eq:rank_loss}) when $m{=}0$ \\
    $\mathcal{L}_{\text{total}}$  & Total loss, $\mathrm{KL}(P\|Q) + \lambda \mathcal{L}_{\text{rank}}$ \\
    $\lambda$                     & Regularization weight \\
    $Z$                           & Partition function $\sum_{i\neq j}(1+\|y_i-y_j\|^2)^{-1}$ \\
    $k$, $k_P$, $c$              & Neighbor count ($10$), sparsity ($90$), chunk size ($10{,}000$) \\
    $\eta$, $\mu$, $T$           & Learning rate ($10$), momentum ($0.8$), iterations ($1000$) \\
    \midrule
    $k'$                          & Neighborhood size at which the metrics are read ($10$) \\
    $K$, $\mathrm{disc}(r)$, $\Delta_K$ & $k{+}m$; discount $1/\log_2(1{+}r)$; its smallest step \\
    $\pi_i^{\text{in}}$, $\pi_i^{\text{emb}}$ & Index of the $r$-th nearest neighbor of $i$ in the input and embedding space \\
    $\pi_i^{-1}(j)$ & Discrete embedding rank of candidate $j$ within $\mathcal{C}_i$ \\
    $\mathcal{L}^{\text{hard}}_{\text{rank}}$ & Rank loss evaluated with the discrete ranks, Eq.~(\ref{eq:l_hard}) \\
    $A_i(k')$ & Fraction of the retained neighbors of $i$ that sit at their own rank \\
    $\gamma_i$ & Smallest gap between two candidate squared embedding distances of $i$ \\
    $d^H_{i,j}$ & Input-space distance from $x_i$ to its $j$-th input-space neighbor \\
    $D$, $D'$ & Matrix of squared embedding distances and its single row swap \\
    $\text{DLP}_i$, $\text{O-DLP}_i$ & Per-point membership and exact-rank agreement at size $k'$ \\
    $R_i$, $A_i$                  & Neighbors kept among the $k'$ nearest; their fraction at own rank \\
    $\omega$                      & $k'\,\text{O-DLP}/\text{DLP}$: $1$ for a shuffle, $k'$ for perfect order \\
    $\deg^{H}(j)$                 & In-degree of $j$ in the high-dimensional $k$-NN graph \\
    \bottomrule
\end{tabular}}
\end{table}

\subsection{Hyperparameters}
\label{app:hyperparams}

Table~\ref{tab:hyperparams} collects every setting of the two-stage pipeline, so that a run can be reproduced from this
page alone. Three of them belong to the rank term and the rest are the scikit-learn Barnes--Hut defaults of the warm
start.

\textit{How the three were chosen, and when.} The neighbor count $k$, the intruder count $m$ and the weight $\lambda$
were fixed once and then carried unchanged through every benchmark, application and appendix study of this paper. We
read O-DLP on MNIST over the grid $k, m \in \{10, 30\}$ and $\lambda \in \{5, 50, 100, 200\}$, which selected
$k = m = 10$ and the large-set weight $\lambda = 50$; the small-set weight $\lambda = 5$ was selected on the
$200$-stock pool from which the $35$ trading names of Section~\ref{sec:trading} are drawn. No dataset outside those two
entered the choice, and no per-dataset tuning was performed afterwards. Setting $k = m = 10$ also makes the list the
loss ranks the same list that O-DLP, nDCG and the trading scorer read, so the objective and the criteria see one
neighborhood.

\textit{What the settings mean.} The warm start is a standard Barnes--Hut t-SNE: PCA initialization, perplexity $30$,
learning rate $\max(n/12, 50)$, early exaggeration $12$ and $1{,}000$ iterations. The refinement runs for another
$T = 1{,}000$ iterations at $\eta = 10$ with Polyak momentum $\mu = 0.8$ and no early exaggeration. The value of $\eta$
puts the KL gradient and the rank gradient on the same scale. A pure-KL run under this setting therefore stays close to
its warm start, which is why the $\lambda{=}0$ control of Table~\ref{tab:odlp_main} reproduces t-SNE. The temperature falls linearly from $\tau_{\text{start}} = 5.0$ to $\tau_{\text{end}} = 0.05$, a range in which
Theorem~\ref{thm:soft} puts the soft rank within $\mathcal{O}(e^{-\gamma_i/0.05})$ of the discrete rank at the end of
training. Intruders are re-mined every $T_r = 50$ iterations. For $n \ge 50{,}000$ the affinity matrix keeps
$k_P = 90$ entries per row and the partition function is accumulated in row chunks of $c = 10{,}000$.

\textit{Sensitivity and reproducibility.} The behavior of the three settings away from their defaults is reported in
Section~\ref{sec:sensitivity}: the neighborhood sweep of Figure~\ref{fig:sweep} ($k \in \{5,10,15,20\}$), the
$\lambda{=}0$ control of Table~\ref{tab:odlp_main}, the $m{=}0$ case of \ref{app:m0} and the five perturbed warm
starts of \ref{app:seeds}. All runs use seed $42$ and one NVIDIA A6000; the peer methods run at their published
settings, described in Section~\ref{sec:setup}. The implementation that produces these settings is released
with the paper.

\begin{table}[!ht]
\centering
\setlength{\tabcolsep}{3.6pt}
\caption{Full hyperparameter settings for the SR-tSNE two-stage pipeline. The warm start
and the refinement stage are held fixed across every dataset. Only $\lambda$ follows
dataset size, at $50$ above $50\,000$ points and $5$ below.}
\label{tab:hyperparams}
\adjustbox{max width=\textwidth}{\begin{tabular}{>{\raggedright\arraybackslash}p{3.0cm}>{\raggedright\arraybackslash}p{4.4cm}>{\raggedright\arraybackslash}p{5.0cm}}
\toprule
\textbf{Stage} & \textbf{Parameter} & \textbf{Value} \\
\midrule
\multirow{5}{3.0cm}{Warm start (scikit-learn Barnes--Hut)} & Initialization & PCA \\
 & Perplexity & 30 \\
 & Learning rate & $\max(n/12,\,50)$ \\
 & Early exaggeration & 12 \\
 & Iterations & 1\,000 \\
\midrule
\multirow{7}{3.0cm}{Refinement (PyTorch, exact KL)} & Learning rate $\eta$ & 10 \\
 & Momentum $\mu$ & 0.8 \\
 & Iterations $T$ & 1\,000 \\
 & Early exaggeration & disabled \\
 & $\tau_{\text{start}} \to \tau_{\text{end}}$ & $5.0 \to 0.05$ (linear) \\
 & Sparsity $k_P$ & 90 \\
 & Chunk size $c$ & 10\,000 \\
\midrule
\multirow{5}{3.0cm}{Rank loss} & Loss (all datasets) & smooth rank, $m=10$, $T_r=50$ \\
 & $\lambda$, large datasets ($n \ge 50{,}000$) & 50 \\
 & $\lambda$, other datasets & 5 \\
 & Neighbor count $k$ & 10 \\
\bottomrule
\end{tabular}}
\end{table}

\subsection{The $m{=}0$ Special Case}
\label{app:m0}
With $m{=}0$ no intruders are used. The candidate pool is then the neighbor
list itself, $\mathcal{C}_i = \mathcal{N}_i$. Nothing outside $\mathcal{N}_i$
enters the objective, so the loss no longer defends membership. It scores only
the order inside the neighborhood. In this case the position discount of
Eqs.~(\ref{eq:gain})--(\ref{eq:rank_gain}) can be replaced by the Spearman rank
correlation between the two rank sequences. For point $i$ this is the Pearson
correlation of the centered rank vectors. The usual closed form for Spearman's $\rho$ assumes distinct integer ranks. Soft ranks do not
satisfy that assumption, so we use the general form here. Since
$r_{i,j}^{\text{in}} = j$, the high-dimensional mean is the constant
$\bar{r}^{\text{in}} = (k+1)/2$. The low-dimensional mean is
$\bar{r}_i^{\text{emb}} = \frac{1}{k}\sum_{j=1}^k r_{i,j}^{\text{emb}}$. The
per-point Spearman correlation is:
\begin{equation}\label{eq:spearman_per_point}
    \rho_i \;=\;
    \frac{%
        \displaystyle\sum_{j=1}^{k}
        \Bigl(j - \tfrac{k+1}{2}\Bigr)
        \Bigl(r_{i,j}^{\text{emb}} - \bar{r}_i^{\text{emb}}\Bigr)
    }{%
        \sqrt{\displaystyle\sum_{j=1}^{k}
              \Bigl(j - \tfrac{k+1}{2}\Bigr)^{\!2}}
        \;\cdot\;
        \sqrt{\displaystyle\sum_{j=1}^{k}
              \Bigl(r_{i,j}^{\text{emb}} - \bar{r}_i^{\text{emb}}\Bigr)^{\!2}}
        \;+\; \varepsilon
    },
\end{equation}
where $\varepsilon = 10^{-8}$ prevents division by zero. In the denominator,
the factor built from the high-dimensional ranks,
$\sqrt{\sum_j (j-(k+1)/2)^2}$, is a positive constant set by $k$ alone. The
gradient therefore flows entirely through $r_{i,j}^{\text{emb}}$.

The mean Spearman correlation over all $n$ points is:
\begin{equation}\label{eq:rho_s_def}
    \rho_s \;=\; \frac{1}{n}\sum_{i=1}^{n} \rho_i,
\end{equation}
and the rank loss is then:
\begin{equation}\label{eq:rank_loss}
    \mathcal{L}_{\text{rank}}(Y) \;=\; 1 - \rho_s
    \;=\; \frac{1}{n}\sum_{i=1}^{n}\bigl(1 - \min\!\bigl(\max(\rho_i,\,-1),\,1\bigr)\bigr),
\end{equation}
This lies in $[0, 2]$. It is zero only when every point's high-dimensional
neighbor order is reproduced exactly in the embedding. The
$\min(\max(\cdot,\,-1),\,1)$ projection guards against floating-point
round-off in $\rho_i$.

\clearpage
\subsection{Multi-seed Stability}
\label{app:seeds}
Paired seed statistics. Table~\ref{tab:seeds} reports the paired
advantage of SR-tSNE over the $\lambda{=}0$ control across five warm starts
perturbed with $1\%$ Gaussian jitter.
Every mean is positive, every one of the $30$ metric differences over the ten
paired runs (two datasets, five perturbed starts) is positive, and standard deviations are at most $0.0005$. The jitter moves each
point by about the spacing of its nearest neighbors, so the perturbed start
carries much less local order. On MNIST its O-DLP after the $\lambda{=}0$
refinement is $.056$, against $.078$ for the unperturbed t-SNE. The rank term
brings the same start to $.092$, the level of the unperturbed SR-tSNE run of
Table~\ref{tab:control_main}. The paired advantage is therefore larger than
the gap in that table, and it measures the term's ability to recover order as
well as to add it.

\begin{table}[h]
\centering
\caption{Paired advantage of SR-tSNE over the equal-budget control, mean $\pm$ std over
five warm starts perturbed with $1\%$ Gaussian jitter. Both methods share each
seed's start, so every comparison is paired. All $30$ metric differences over
the ten paired runs are positive.}
\label{tab:seeds}
\adjustbox{max width=\textwidth}{\begin{tabular}{lccc}
\toprule
Dataset & $\Delta$nDCG$_{10}$ & $\Delta$O-DLP$_{10}$ & $\Delta$DLP$_{10}$ \\
\midrule
MNIST & $+.1507 \pm .0005$ & $+.0361 \pm .0004$ & $+.0594 \pm .0002$ \\
CIFAR-10 & $+.0795 \pm .0003$ & $+.0161 \pm .0003$ & $+.0221 \pm .0002$ \\
\bottomrule
\end{tabular}}
\end{table}

\subsection{Warm-start length}
\label{app:warmstart}
The gain does not depend on stopping the warm start at $1{,}000$ iterations. Table~\ref{tab:tsne2000} continues it to
$2{,}000$ and refines from there. SR-tSNE leads that longer run on all eight benchmarks, by a median of $+10\%$, and stands $+6\%$ above its own
$1{,}000$-iteration-start result.

\begin{table}[!ht]
\centering
\caption{Two warm starts. Left: the paper's t-SNE run of $1\,000$ iterations and
SR-tSNE refined from it. Right: a t-SNE run continued to $2\,000$ iterations, the
$\lambda{=}0$ control and SR-tSNE refined from that longer run, and the gain of
SR-tSNE over the longer run. O-DLP at $k{=}10$.}
\label{tab:tsne2000}
\adjustbox{max width=\textwidth}{\begin{tabular}{l cc ccc c}
\toprule
& \multicolumn{2}{c}{Start: $1\,000$ iterations} & \multicolumn{3}{c}{Start: $2\,000$ iterations} & \\
\cmidrule(lr){2-3}\cmidrule(lr){4-6}
Dataset & t-SNE & SR-tSNE & t-SNE & $\lambda{=}0$ & SR-tSNE & Gain (\%) \\
\midrule
MNIST            & .0781 & .0927 & .0890 & .0901 & \textbf{.0979} & +10 \\
CIFAR-10         & .0545 & .0597 & .0624 & .0626 & \textbf{.0641} & +3 \\
Fashion-MNIST    & .0672 & .0792 & .0768 & .0768 & \textbf{.0843} & +10 \\
Speech Commands  & .0809 & .0858 & .0893 & .0900 & \textbf{.0900} & +1 \\
Renal-Cortex     & .0578 & .0741 & .0669 & .0668 & \textbf{.0790} & +18 \\
Brain-Cortex     & .0615 & .0767 & .0697 & .0696 & \textbf{.0808} & +16 \\
Covertype        & .1496 & .1913 & .1679 & .1679 & \textbf{.1998} & +19 \\
AG News          & .0810 & .0905 & .0916 & .0915 & \textbf{.0970} & +6 \\
\bottomrule
\end{tabular}}
\end{table}

\subsection{The form of the rank term}
\label{app:pairwise}
The listwise form is what carries the gain. Table~\ref{tab:pairwise} replaces it with a pairwise logistic surrogate on
the same candidate pool, the same warm start and the same budget. SR-tSNE stands above that surrogate on all eight
benchmarks.

\begin{table}[!ht]
\centering
\caption{The form of the rank term. From the same scikit-learn warm start, with
the same $10$ neighbors and $10$ mined intruders, optimizer and budget, the
refinement runs with a pairwise logistic ordinal loss at the weight selected on
MNIST ($\lambda = 0.05$; larger weights lower every criterion) and with the
listwise discounted gain of SR-tSNE. All criteria at $k{=}10$.}
\label{tab:pairwise}
\setlength{\tabcolsep}{3pt}
\adjustbox{max width=\textwidth}{\begin{tabular}{l ccc ccc ccc}
\toprule
& \multicolumn{3}{c}{O-DLP} & \multicolumn{3}{c}{DLP} & \multicolumn{3}{c}{nDCG} \\
\cmidrule(lr){2-4}\cmidrule(lr){5-7}\cmidrule(lr){8-10}
Dataset & t-SNE & Pairwise & SR-tSNE & t-SNE & Pairwise & SR-tSNE & t-SNE & Pairwise & SR-tSNE \\
\midrule
MNIST            & .0781 & .0764 & \textbf{.0927} & .3880 & .3849 & \textbf{.4009} & .5079 & .5013 & \textbf{.5689} \\
CIFAR-10         & .0545 & .0528 & \textbf{.0597} & .2087 & .2070 & \textbf{.2149} & .3333 & .3275 & \textbf{.3682} \\
Fashion-MNIST    & .0672 & .0646 & \textbf{.0792} & .3368 & .3339 & \textbf{.3493} & .4448 & .4369 & \textbf{.5025} \\
Speech Commands  & .0809 & .0791 & \textbf{.0858} & .3078 & .3065 & \textbf{.3110} & .4652 & .4605 & \textbf{.4993} \\
Renal-Cortex     & .0578 & .0543 & \textbf{.0741} & .3243 & .3198 & \textbf{.3375} & .4147 & .4030 & \textbf{.4890} \\
Brain-Cortex     & .0615 & .0582 & \textbf{.0767} & .3434 & .3397 & \textbf{.3545} & .4380 & .4269 & \textbf{.5121} \\
Covertype        & .1496 & .1470 & \textbf{.1913} & .6369 & .6372 & \textbf{.6452} & .7471 & .7444 & \textbf{.8013} \\
AG News          & .0810 & .0782 & \textbf{.0905} & .3529 & .3516 & \textbf{.3568} & .4821 & .4761 & \textbf{.5260} \\
\bottomrule
\end{tabular}}
\end{table}

\subsection{Per-dataset label-transfer results}
\label{sec:crossdatatab}

Table~\ref{tab:crossdata} gives the per-dataset scores behind the summary of Table~\ref{tab:crossdata_summary}. A
random fifth of each dataset is removed from the retrieval index and used as queries; the embedding is fitted on all
points, so the protocol is transductive and only the index is reduced. Each query is scored from its ten nearest
reference points in the two-dimensional embedding: $\text{P}_1$ asks whether position one already carries the query's
class, and MRR takes the reciprocal of the first position that does. Both criteria therefore read a neighbor's
position, not merely its presence. The last column, $\Delta$, is SR-tSNE minus the equal-budget $\lambda{=}0$ control,
so the control's own score on each row is the SR-tSNE entry minus $\Delta$. The two rows closing each block give the
sign count and the two-sided Wilcoxon signed-rank test over the seven datasets.

\begin{center}
\footnotesize
\setlength{\tabcolsep}{1.9pt}
\begin{longtable}{lrrcccccc r}
\caption{Label transfer across seven vision datasets, one paired observation each, with
$\Delta$ taken against the equal-budget control ($\lambda{=}0$), whose own score on each
row is SR-tSNE minus $\Delta$. Every query is scored
from its ten nearest reference points in the two-dimensional embedding, on the same
held-out fifth of each dataset. Bold marks the best score in a row. The two rows below each block
give the sign count and the two-sided Wilcoxon signed-rank test over the seven datasets,
where bold marks $p < 0.05$.}
\label{tab:crossdata} \\
\toprule
Dataset & $n$ & $\lambda$ & UMAP & TriMap & t-SNE & LocalMAP & DREAMS & SR-tSNE & $\Delta$ \\
\midrule
\endfirsthead
\multicolumn{10}{l}{\footnotesize\textit{Table \ref{tab:crossdata} continued}} \\
\toprule
Dataset & $n$ & $\lambda$ & UMAP & TriMap & t-SNE & LocalMAP & DREAMS & SR-tSNE & $\Delta$ \\
\midrule
\endhead
\midrule
\multicolumn{10}{r}{\footnotesize\textit{continued on next page}} \\
\bottomrule
\endfoot
\bottomrule
\endlastfoot
\multicolumn{10}{l}{\textit{P}$_1$: is the first neighbor returned of the query's class} \\
MNIST            & 70\,000 & 50 & .8811 & .8351 & .9288 & .9164 & .9019 & \textbf{.9354} & $+0.0063$ \\
Fashion-MNIST    & 70\,000 & 50 & .7159 & .6877 & .8296 & .7471 & .8173 & \textbf{.8382} & $+0.0086$ \\
CIFAR-10         & 60\,000 & 50 & .7218 & .6722 & .8035 & .7267 & .7707 & \textbf{.8053} & $+0.0016$ \\
Tiny-ImageNet    & 70\,000 & 50 & .2479 & .1689 & .4561 & .2706 & .4044 & \textbf{.4584} & $+0.0023$ \\
EMNIST-Balanced  & 69\,983 & 50 & .4608 & .3174 & .6728 & .5692 & .6090 & \textbf{.6888} & $+0.0164$ \\
CIFAR-100        & 60\,000 & 50 & .3280 & .2438 & .5466 & .3356 & .4842 & \textbf{.5522} & $+0.0058$ \\
USPS             & 9\,298 & 5 & .9301 & .9091 & .9602 & .9381 & .9478 & \textbf{.9634} & $+0.0032$ \\
\cmidrule(l){4-10}
\multicolumn{3}{r}{SR-tSNE wins on} & 7/7 & 7/7 & 7/7 & 7/7 & 7/7 & & \\
\multicolumn{3}{r}{two-sided $p$}    & \textbf{.0156} & \textbf{.0156} & \textbf{.0156} & \textbf{.0156} & \textbf{.0156} & & \\
\midrule
\multicolumn{10}{l}{MRR: reciprocal rank of the first correct neighbor} \\
MNIST            & 70\,000 & 50 & .9151 & .8835 & .9459 & .9394 & .9299 & \textbf{.9507} & $+0.0046$ \\
Fashion-MNIST    & 70\,000 & 50 & .8038 & .7812 & .8802 & .8262 & .8721 & \textbf{.8858} & $+0.0057$ \\
CIFAR-10         & 60\,000 & 50 & .7949 & .7611 & .8537 & .7985 & .8311 & \textbf{.8547} & $+0.0009$ \\
Tiny-ImageNet    & 70\,000 & 50 & .3401 & .2632 & .5291 & .3651 & .4868 & \textbf{.5313} & $+0.0021$ \\
EMNIST-Balanced  & 69\,983 & 50 & .5891 & .4581 & .7522 & .6796 & .7079 & \textbf{.7619} & $+0.0099$ \\
CIFAR-100        & 60\,000 & 50 & .4321 & .3521 & .6215 & .4420 & .5749 & \textbf{.6247} & $+0.0034$ \\
USPS             & 9\,298 & 5 & .9503 & .9325 & .9677 & .9524 & .9594 & \textbf{.9693} & $+0.0017$ \\
\cmidrule(l){4-10}
\multicolumn{3}{r}{SR-tSNE wins on} & 7/7 & 7/7 & 7/7 & 7/7 & 7/7 & & \\
\multicolumn{3}{r}{two-sided $p$}    & \textbf{.0156} & \textbf{.0156} & \textbf{.0156} & \textbf{.0156} & \textbf{.0156} & & \\
\end{longtable}
\end{center}

\subsection{Runtime}
\label{app:runtime}

Table~\ref{tab:runtime} times embedding construction alone, on one NVIDIA A6000 with exclusive access; neighbor search
and the evaluation of the criteria are excluded, and every method is timed on the same host. The last two rows differ
by the rank term only, so their difference is the cost of the term itself: $1.8\%$ of the refinement stage on MNIST and $2.1\%$ on CIFAR-10,
which is where the cost of the method sits. The refinement in this table evaluates the exact partition function, and
Section~\ref{sec:discussion} gives the $O(n \log n)$ route that keeps the rank term unchanged.

\begin{table}[!ht]
\centering
\caption{Wall-clock runtime in minutes on one NVIDIA A6000, timing only embedding
construction. The two refinement methods differ only in the rank term.}
\label{tab:runtime}
\adjustbox{max width=\textwidth}{\begin{tabular}{lcc}
\toprule
Stage / method & MNIST & CIFAR-10 \\
\midrule
UMAP & 0.40 & 0.06 \\
PaCMAP & 0.53 & 0.21 \\
TriMap & 0.63 & 0.41 \\
t-SNE (warm start) & 3.3 & 3.1 \\
\midrule
$+$ refinement, $\lambda{=}0$ & 25.69 & 18.93 \\
$+$ refinement, SR-tSNE & 26.16 & 19.33 \\
\bottomrule
\end{tabular}}
\end{table}

\FloatBarrier
\subsection{Proofs}
\label{app:proofs}
This section proves the four theorems of Section~\ref{sec:theory}, together with the two lemmas, the two propositions
and the corollary they rest on. Results labelled with an S, namely Lemmas~S1 and~S2, Propositions~S1 and~S2 and Corollary~S1, are stated and proved here. Every other number refers to the main text.

Fix a point $i$. Its candidate pool $\mathcal{C}_i = \mathcal{N}_i \oplus \mathcal{I}_i$ has length
$K = k+m$. Candidate $j \le k$ is the $j$-th true neighbor and carries
relevance $\mathrm{rel}_j = k-j+1$; candidates $j > k$ are intruders and carry
$\mathrm{rel}_j = 0$. Write $R_i$ for the retained neighbors of $i$, the true neighbors that the embedding returns
among its $k'$ nearest points, and $\mathrm{pos}_i(j)$ for the position at which the embedding returns the $j$-th true
neighbor. The soft and discrete gains of point $i$ are $G^{\text{soft}}_i$ and $G^{\text{hard}}_i$, the first read with
the soft ranks of Eq.~(\ref{eq:r_low}) and the second with the integer ranks. The embedding orders $\mathcal{C}_i$ by distance to $y_i$; call
that ordering $\pi_i$, which orders the candidate pool and is distinct from the neighbor maps
$\pi_i^{\text{in}}$ and $\pi_i^{\text{emb}}$ of Eq.~(\ref{eq:odlp}), so that $\pi_i(r)$ is the candidate at position $r$
and $\pi_i^{-1}(j)$ the position of candidate $j$. Ties among intruders are
immaterial because their relevance is equal; ties elsewhere are broken
arbitrarily. The hard per-point gain and loss are
\begin{equation}
G^{\text{hard}}_i(\pi_i) = \sum_{r=1}^{K} \mathrm{rel}_{\pi_i(r)}\,\mathrm{disc}(r),
\qquad
\ell_i = 1 - \frac{G^{\text{hard}}_i(\pi_i)}{G^{\star}},
\qquad
\mathcal{L}^{\text{hard}}_{\text{rank}} = \frac{1}{n}\sum_i \ell_i,
\end{equation}
with $\mathrm{disc}(r) = 1/\log_2(1+r)$ and $G^{\star}$ as in
Eq.~(\ref{eq:ideal_gain}). This $G^{\text{hard}}_i$ is the discrete gain of Eq.~(\ref{eq:l_hard}) re-indexed by
position; intruders carry relevance zero, so the two sums agree. Two facts about the discount are used repeatedly:
it is strictly decreasing, and its steps
$\mathrm{disc}(r) - \mathrm{disc}(r+1)$ are strictly decreasing in $r$,
because $1/\log_2(1+r)$ is convex. Hence
$\mathrm{disc}(r) - \mathrm{disc}(r') \ge \Delta_K$ for all $1 \le r < r' \le K$.

\paragraph{A sorting lemma}
Call a pair of candidates $(a,b)$ a \emph{strict inversion} of $\pi_i$ if
$\mathrm{rel}_a > \mathrm{rel}_b$ and $\pi_i^{-1}(a) > \pi_i^{-1}(b)$. Let
$\mathrm{inv}(\pi_i)$ be their number. The ideal ordering, neighbors
$1,\dots,k$ in order followed by the intruders, has none.

\noindent\emph{This lemma says that every wrongly ordered pair costs the gain at least one
smallest step of the discount.}
\begin{lemma}
\label{lem:sort}
$G^{\star} - G^{\text{hard}}_i(\pi_i) \;\ge\; \mathrm{inv}(\pi_i)\,\Delta_K$.
\end{lemma}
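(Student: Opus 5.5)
The plan is to sort $\pi_i$ into the ideal ordering using adjacent transpositions only, and to check that each transposition removes exactly one strict inversion and raises the gain by at least $\Delta_K$. Telescoping then gives $G^{\star} - G^{\text{hard}}_i(\pi_i) \ge \mathrm{inv}(\pi_i)\,\Delta_K$. We work with the position-indexed form $G^{\text{hard}}_i(\pi_i) = \sum_{r=1}^{K} \mathrm{rel}_{\pi_i(r)}\,\mathrm{disc}(r)$ stated just before the lemma.

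\emph{Step 1 (existence of an adjacent strict inversion).} Suppose $\mathrm{inv}(\pi_i) > 0$. Then the relevance sequence $\bigl(\mathrm{rel}_{\pi_i(1)}, \dots, \mathrm{rel}_{\pi_i(K)}\bigr)$ is not non-increasing. Hence there is a position $r$ with $\mathrm{rel}_{\pi_i(r)} < \mathrm{rel}_{\pi_i(r+1)}$, and this adjacent pair is itself a strict inversion.

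Conversely, suppose $\mathrm{inv}(\pi_i) = 0$. Then the relevance sequence is non-increasing. The neighbors carry the distinct relevances $k, k-1, \dots, 1$ and all intruders carry $0$, so the ordering is neighbors $1,\dots,k$ in order followed by the intruders in some order. Intruder ties are immaterial, so the gain is exactly $G^{\star}$.

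\emph{Step 2 (effect of one swap).} Let $a = \pi_i(r+1)$ and $b = \pi_i(r)$ with $\mathrm{rel}_a > \mathrm{rel}_b$, and exchange them. Only the two terms at positions $r$ and $r+1$ of the gain change, so the gain increases by
\[
(\mathrm{rel}_a - \mathrm{rel}_b)\bigl[\mathrm{disc}(r) - \mathrm{disc}(r+1)\bigr].
\]
Relevances are integers, so the first factor is at least $1$. By the convexity fact recorded above, the steps of the discount decrease in $r$, so the second factor is at least $\mathrm{disc}(K-1) - \mathrm{disc}(K) = \Delta_K$. The increase is therefore at least $\Delta_K$.

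The swap also changes $\mathrm{inv}$ by exactly $-1$. Any candidate $c$ outside $\{a,b\}$ keeps its side, before or after, relative to both $a$ and $b$. So the status of every pair other than $(a,b)$ is unchanged, and the pair $(a,b)$ goes from strictly inverted to correctly ordered.

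\emph{Step 3 (telescoping).} Repeat Step 2 until no strict inversion remains. This takes exactly $\mathrm{inv}(\pi_i)$ swaps, and by Step 1 the terminal ordering has gain $G^{\star}$. Summing the per-swap increases gives $G^{\star} - G^{\text{hard}}_i(\pi_i) \ge \mathrm{inv}(\pi_i)\,\Delta_K$.

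The only delicate point is the bookkeeping in Step 2: the count must drop by exactly one per swap, so that the number of swaps equals $\mathrm{inv}(\pi_i)$. Swapping only adjacent pairs guarantees this. The decreasing-step property of the discount is what makes $\Delta_K$, the step at position $K-1$, a uniform lower bound for every swap.
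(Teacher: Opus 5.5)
Your proposal is correct and follows essentially the same route as the paper's proof. Both sort $\pi_i$ by adjacent swaps of a pair with $\mathrm{rel}_{\pi_i(r)} < \mathrm{rel}_{\pi_i(r+1)}$; each swap raises the gain by $(\mathrm{rel}_a-\mathrm{rel}_b)[\mathrm{disc}(r)-\mathrm{disc}(r+1)] \ge \Delta_K$ and removes exactly one strict inversion, so the process ends after $\mathrm{inv}(\pi_i)$ swaps at the ideal ordering with gain $G^{\star}$.
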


\begin{pf}
If $\mathrm{inv}(\pi_i) > 0$ there is an adjacent position pair $(r, r+1)$
with $\mathrm{rel}_{\pi_i(r)} < \mathrm{rel}_{\pi_i(r+1)}$; otherwise
relevance would be non-increasing along the list and no strict inversion
could exist. Swapping the two candidates raises the gain by
\begin{equation}
\bigl(\mathrm{rel}_{\pi_i(r+1)} - \mathrm{rel}_{\pi_i(r)}\bigr)
\bigl(\mathrm{disc}(r) - \mathrm{disc}(r+1)\bigr) \;\ge\; 1 \cdot \Delta_K,
\end{equation}
since relevances are integers, and it lowers $\mathrm{inv}$ by exactly one:
the swapped pair stops being an inversion and no other pair changes its
relative order. Repeating until no strict inversion remains takes exactly
$\mathrm{inv}(\pi_i)$ swaps and ends at an ordering whose gain is $G^{\star}$,
because relevance is then non-increasing along the list and, intruders being
interchangeable, that ordering is the ideal one. Summing the gains gives the
claim. \qed
\end{pf}

\paragraph{Proof of Theorem~\ref{thm:control}}
Let $k' \le \min(k,m)$ and let $T_i$ be the $k'$ points nearest to $y_i$
among all $n-1$ others. At most $k'$ of them are non-neighbors, and any
non-neighbor in $T_i$ is one of the $k'$ nearest non-neighbors, hence one of
the $m \ge k'$ nearest, hence in $\mathcal{I}_i$. So $T_i \subset \mathcal{C}_i$, and the position
of each element of $T_i$ in $\pi_i$ equals its position among all $n-1$
points. O-DLP at $k'$ therefore reads the first $k'$ positions of $\pi_i$:
\begin{equation}
1 - \text{O-DLP}_i(k') \;=\; \frac{1}{k'}\,\#\{\, r \le k' : \pi_i(r) \ne r \,\}
\;=:\; \frac{\mathrm{miss}_i}{k'} .
\end{equation}
Call a candidate \emph{displaced} if it does not sit where the ideal ordering
puts it, that is, a neighbor $j$ with $\pi_i^{-1}(j) \ne j$ or an intruder at
a position $\le k$. Every miss is a position holding a displaced candidate, so
$\mathrm{miss}_i \le \#\mathrm{displaced}$. Each swap in the proof of
Lemma~\ref{lem:sort} moves two candidates and the process ends with none
displaced, so $\#\mathrm{displaced} \le 2\,\mathrm{inv}(\pi_i)$. With
Lemma~\ref{lem:sort},
\begin{equation}
1 - \text{O-DLP}_i(k') \;\le\; \frac{2\,\mathrm{inv}(\pi_i)}{k'}
\;\le\; \frac{2\,(G^{\star} - G^{\text{hard}}_i)}{k'\,\Delta_K}
\;=\; \frac{2\,G^{\star}}{k'\,\Delta_K}\,\ell_i .
\end{equation}
Averaging over $i$ gives Eq.~(\ref{eq:control}).

For the equivalence: if $\mathcal{L}^{\text{hard}}_{\text{rank}} = 0$ then
every $\ell_i = 0$, so by Lemma~\ref{lem:sort} every $\pi_i$ has no strict
inversion, so the $k$ neighbors of $i$ occupy positions $1,\dots,k$ in order
and every intruder comes after them. A non-candidate is farther from $y_i$
than every intruder, hence farther than every neighbor. So the $k$ nearest
points to $y_i$ are its $k$ neighbors in high-dimensional order, and
$\text{O-DLP}_i(k') = 1$ for every $k' \le k$. Conversely, if
$\text{O-DLP}(k) = 1$ then for every $i$ the $k$ nearest points are the
neighbors in order, so every intruder, being a non-neighbor, is farther than
all of them; $\pi_i$ is the ideal ordering and $\ell_i = 0$. \qed

\paragraph{Proof of Theorem~\ref{thm:soft}}
Eq.~(\ref{eq:r_low}) writes the soft rank of candidate $j$ as
$1 + \sum_{l \ne j} \sigma\bigl((d^2_{i,j} - d^2_{i,l})/\tau\bigr)$, and the
hard rank as $1 + \sum_{l \ne j} \mathbf{1}[d^2_{i,l} < d^2_{i,j}]$. Each
summand differs from its indicator by $\sigma(-|d^2_{i,j} - d^2_{i,l}|/\tau)
\le e^{-\gamma_i/\tau}$, and there are $K-1$ summands. That is the first
claim. For the second, $\mathrm{disc}$ has derivative
$-\ln 2 / \bigl((1+r)\ln^2(1+r)\bigr)$, whose absolute value on $[1,K]$ is
largest at $r=1$, where it equals $1/(2\ln 2)$. Hence
\begin{equation}
|G^{\text{soft}}_i - G^{\text{hard}}_i|
\;\le\; \sum_{j=1}^{k} \mathrm{rel}_j\,\frac{1}{2\ln 2}\,(K-1)\,e^{-\gamma_i/\tau}
\;=\; \frac{k(k+1)}{2}\cdot\frac{K-1}{2\ln 2}\,e^{-\gamma_i/\tau},
\end{equation}
and dividing by $G^{\star}$ gives the bound on $|\ell^{\text{soft}}_i - \ell^{\text{hard}}_i|$. \qed

\paragraph{Proof of Theorem~\ref{thm:pref}}
Under the row relaxation each objective is evaluated by its formula on the
array $D$. The row swap changes only the two entries $d^2_{i,j}$ and
$d^2_{i,l}$, so only the terms that read those entries change.

\textit{(a)} Up to a constant in $Y$,
$\mathrm{KL} = \sum_{a \ne b} p_{ab}\ln(1 + d^2_{a,b}) + \ln Z$ with
$Z = \sum_{a \ne b}(1+d^2_{a,b})^{-1}$ (\ref{app:impl}). $Z$ is a symmetric
function of the entries of $D$ and is unchanged by the swap. Of the first sum,
only the terms $(i,j)$ and $(i,l)$ change, and
\begin{align}
\mathrm{KL}(D) - \mathrm{KL}(D')
&= p_{ij}\ln(1+d^2_{i,j}) + p_{il}\ln(1+d^2_{i,l}) \nonumber\\
&\quad - p_{ij}\ln(1+d^2_{i,l}) - p_{il}\ln(1+d^2_{i,j}) \nonumber\\
&= (p_{ij} - p_{il})\bigl[\ln(1+d^2_{i,j}) - \ln(1+d^2_{i,l})\bigr].
\end{align}
The inversion $d^2_{i,l} < d^2_{i,j}$ makes the bracket positive.

\textit{(b)} The UMAP objective is
$\mathrm{CE} = -\sum_{a \ne b}\bigl[v_{ab}\ln w_{ab} + (1-v_{ab})\ln(1-w_{ab})\bigr]$
with $w_{ab} = w(d^2_{a,b})$. Write each term as
$-v_{ab}\,\mathrm{logit}(w_{ab}) - \ln(1-w_{ab})$. The second part depends on
$D$ only through the multiset of the two swapped entries and cancels in the
difference. The first part gives
$(v_{ij} - v_{il})\bigl[\mathrm{logit}(w_{il}) - \mathrm{logit}(w_{ij})\bigr]$,
and for $w(x) = (1+ax^b)^{-1}$ one has $\mathrm{logit}(w(x)) = -\ln a - b\ln x$,
so the bracket equals $b\,[\ln d^2_{i,j} - \ln d^2_{i,l}] > 0$.

\textit{(c)} The row swap changes only $\ell_i$, and in $\pi_i$ it exchanges
the candidates at positions $r < r'$, where $l$ sits at $r$ and $j$ at $r'$.
The gain changes by
$(\mathrm{rel}_j - \mathrm{rel}_l)[\mathrm{disc}(r) - \mathrm{disc}(r')]
= (l-j)[\mathrm{disc}(r) - \mathrm{disc}(r')] \ge \Delta_K$, and
$n[\mathcal{L}^{\text{hard}}_{\text{rank}}(D) - \mathcal{L}^{\text{hard}}_{\text{rank}}(D')]
= \ell_i(D) - \ell_i(D')$ equals that change divided by $G^{\star}$. \qed

Here $p_{j|i} = \exp(-\|x_i - x_j\|_2^2 / 2\sigma_i^2)/Z_i$ is the conditional affinity of $j$ to $i$, with $\sigma_i$
the perplexity-calibrated bandwidth of Section~\ref{sec:method} and $Z_i$ the normalizer that makes $p_{\cdot|i}$ sum
to one.

\noindent\emph{This lemma says that the affinities t-SNE and UMAP read can tie, or run against
the distances, so the request in parts (a) and (b) can vanish or point the wrong way.}
\begin{lemma}[When the affinities do not follow the distances]
\label{lem:affinity}
\textup{(a)} In t-SNE, $p_{ij} < p_{il}$ with $d^H_{i,j} < d^H_{i,l}$ if and only if
\begin{equation}
p_{i|l} - p_{i|j} \;>\; p_{j|i} - p_{l|i} \;>\; 0 ,
\end{equation}
This occurs when $j$ sits in a region dense enough that its bandwidth
$\sigma_j$ makes $p_{i|j}$ small, while $l$ sits in a sparse region with a
large $\sigma_l$. The symmetrization $p_{ij} = (p_{j|i} + p_{i|j})/2n$
does not preserve the order of the conditionals $p_{\cdot|i}$.
\textup{(b)} In UMAP, $v_{j|i} = 1$ for the nearest neighbor $j$ of every
point $i$, and the fuzzy union gives $v_{ij} = 1$ whenever either conditional
equals one. If $j$ is the nearest neighbor of $i$ and $i$ is the nearest
neighbor of $l$, then $v_{ij} = v_{il} = 1$ and the cross-entropy is
indifferent to the order of $j$ and $l$ around $i$.
\end{lemma}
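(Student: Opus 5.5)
The proof is a direct unpacking of the two affinity constructions, followed by substitution into Theorem~\ref{thm:pref}(a)--(b). I would treat part (a) as an exact algebraic equivalence plus an illustrative mechanism, and part (b) as a one-line consequence of the UMAP membership formula.

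For part (a), I would start from the symmetrization $p_{ij} = (p_{j|i} + p_{i|j})/2n$ and the analogous expression for $p_{il}$. Multiplying by $2n$ gives
\[
p_{ij} < p_{il} \iff p_{j|i} + p_{i|j} < p_{l|i} + p_{i|l},
\]
which rearranges to $p_{i|l} - p_{i|j} > p_{j|i} - p_{l|i}$. The remaining inequality $p_{j|i} - p_{l|i} > 0$ does not depend on the case assumption; it follows from $d^H_{i,j} < d^H_{i,l}$. Both conditionals are computed in the same row $i$, with the same bandwidth $\sigma_i$ and the same normalizer $Z_i$, and $\exp(-d^2/2\sigma_i^2)$ is strictly decreasing in $d$. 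Hence the hypothesis $d^H_{i,j} < d^H_{i,l}$ is equivalent to the right-hand inequality, and the chain holds exactly when $p_{ij} < p_{il}$.

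For the sentence about dense and sparse regions, I would not claim an equivalence. Instead I would note that $p_{i|j} = \exp(-(d^H_{i,j})^2/2\sigma_j^2)/Z_j$ is driven by $j$'s own bandwidth and normalizer. A small $\sigma_j$ in a dense region makes $p_{i|j}$ small, while a large $\sigma_l$ keeps $p_{i|l}$ comparatively large even though $d^H_{i,l} > d^H_{i,j}$. This makes the left difference large and positive, which can exceed the fixed right-hand gap. A concrete numerical instance, for example a three-point configuration with chosen bandwidths, would certify that the order of the conditionals $p_{\cdot|i}$ is not preserved. I expect this step to be the only delicate one, because "occurs when" is a sufficiency heuristic rather than an iff. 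I would phrase it as an existence statement backed by an example rather than attempt a general density characterization.

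For part (b), I would recall UMAP's conditional membership $v_{j|i} = \exp(-(d^H_{i,j} - \rho_i)/\sigma_i)$, where $\rho_i$ is the distance from $i$ to its nearest neighbor. For the nearest neighbor the exponent is zero, so $v_{j|i} = 1$. The fuzzy union is $v_{ij} = v_{j|i} + v_{i|j} - v_{j|i}v_{i|j}$, which equals $1$ as soon as either conditional equals $1$. Applying this twice gives the claim. Since $j$ is the nearest neighbor of $i$, $v_{j|i}=1$ and so $v_{ij}=1$. Since $i$ is the nearest neighbor of $l$, $v_{i|l}=1$ and so $v_{il}=1$. Substituting $v_{ij} - v_{il} = 0$ into Eq.~(\ref{eq:ce_pref}) of Theorem~\ref{thm:pref}(b) makes $\mathrm{CE}(D) - \mathrm{CE}(D') = 0$, which is the stated indifference.
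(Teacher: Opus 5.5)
Your proposal is correct and matches the paper's proof essentially step for step: you rearrange the symmetrized affinities, you get positivity of $p_{j|i}-p_{l|i}$ from the shared bandwidth $\sigma_i$ (noting that this makes the biconditional exact is a slight sharpening), you treat the dense/sparse clause as a bandwidth heuristic rather than an equivalence, and you use the local-connectivity offset plus the fuzzy union to make Eq.~(\ref{eq:ce_pref}) vanish. One caution on your proposed illustration: with three points and a common perplexity, every row $p_{\cdot|a}$ has the same largest entry $q \ge 1/2$, which forces $p_{ij} \ge p_{il}$; a three-point instance therefore certifies the claim only for freely chosen bandwidths, and an instance faithful to perplexity calibration needs more points, such as a tight cluster around $j$.
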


\paragraph{Proof of Lemma~\ref{lem:affinity}}
\textit{(a)} $p_{ij} < p_{il}$ reads $p_{j|i} + p_{i|j} < p_{l|i} + p_{i|l}$,
which rearranges to the displayed inequality. Its right-hand side is positive
because $p_{\cdot|i}$ is a decreasing function of $d^H_{i,\cdot}$ at the
common bandwidth $\sigma_i$. The left-hand side is
$\exp(-d^{H\,2}_{l,i}/2\sigma_l^2)/Z_l - \exp(-d^{H\,2}_{j,i}/2\sigma_j^2)/Z_j$,
which can be made positive by taking $\sigma_j$ small relative to
$d^H_{j,i}$, so that the second term vanishes, while $\sigma_l$ is large
relative to $d^H_{l,i}$ and $Z_l$ is moderate. Perplexity calibration sets
$\sigma_j$ small exactly when $j$ has many close neighbors.
\textit{(b)} UMAP sets $v_{j|i} = \exp\bigl(-(d^H_{i,j} - \varrho_i)/\sigma_i\bigr)$
with the local-connectivity offset $\varrho_i = d^H_{i,j}$ for the nearest neighbor $j$, so $v_{j|i} = 1$. The
fuzzy union $v_{ab} = v_{b|a} + v_{a|b} - v_{b|a}v_{a|b}$ equals one whenever
either conditional does. Under the stated hypothesis $v_{j|i} = 1$ and
$v_{i|l} = 1$, so $v_{ij} = v_{il} = 1$, and Eq.~(\ref{eq:ce_pref}) vanishes. \qed

Remark (where a loss stops asking for order). Suppose an objective
is invariant under every permutation of the distances from $i$ to its $k'$
returned points, and the optimizer's output inherits that symmetry. Then the
returned arrangement is exchangeable, part (ii) applies, and
$\mathbb{E}[\omega_i] = 1$. Theorem~\ref{thm:pref}(a)--(b) say when the
invariance holds for t-SNE and UMAP: exactly when the affinities of the
returned points tie, because the row-swap difference then vanishes for every
transposition and hence for every permutation. The remark describes the limit
in which the two losses stop asking for order; Section~\ref{sec:results}
measures where each method sits relative to that baseline.

\paragraph{Proof of Theorem~\ref{thm:blind}}
\textit{(i)} By definition $\text{O-DLP}_i(k) = \#\{j \in R_i : \text{pos}_i(j) = j\}/k
= (|R_i|/k)\cdot A_i = \text{DLP}_i A_i$, since only retained neighbors can be
hits. Averaging gives the first claim, and
$k\,\text{O-DLP}/\text{DLP} = \sum_i \text{DLP}_i\,(kA_i) / \sum_i \text{DLP}_i$
is the stated weighted mean; $A_i \in [0,1]$ gives the range.

\textit{(ii)} Write $k'$ for the size at which the metric is read, let
$q = |R_i|$, and let $X_j = \mathbf{1}[\text{pos}_i(j) = j]$ for $j \in R_i$.
Under a uniform arrangement of the $k'$ returned points, each returned point
is equally likely to occupy any of the $k'$ positions, so
$\mathbb{E}X_j = 1/k'$ and $\mathbb{E}[X_jX_l] = 1/(k'(k'-1))$ for $j \ne l$.
Hence $\mathbb{E}[\#\text{hits}] = q/k'$, which is $\mathbb{E}[A_i] = 1/k'$, and
\begin{equation}
\mathrm{Var}(\#\text{hits}) = \frac{q}{k'}\Bigl(1-\frac{1}{k'}\Bigr)
 + q(q-1)\Bigl[\frac{1}{k'(k'-1)} - \frac{1}{k'^2}\Bigr]
 = \frac{q}{k'}\Bigl(1-\frac{1}{k'}\Bigr) + \frac{q(q-1)}{k'^2(k'-1)}
 \;\le\; \frac{q}{k'},
\end{equation}
the last step because $q - 1 \le k' - 1$. Since $k'\,\text{O-DLP}_i = \#\text{hits}$
and $q/k' = \text{DLP}_i$, this is $\mathrm{Var}(k'\,\text{O-DLP}_i) \le \text{DLP}_i$.
With independent arrangements across points,
$\mathrm{Var}(\text{O-DLP}) = n^{-2}\sum_i \mathrm{Var}(\text{O-DLP}_i)
\le n^{-2} k'^{-2} \sum_i \text{DLP}_i = \text{DLP}/(n k'^2)$, so the standard
deviation of O-DLP is at most $k'^{-1}\sqrt{\text{DLP}/n}$. Dividing by
$\text{DLP}/k'$ gives Eq.~(\ref{eq:omega_res}).

\textit{(iii)} Since $k' \le \min(k,m)$, every one of the $k'$ points the
embedding returns for $i$ is a candidate in $\mathcal{C}_i$: a returned neighbor lies
among the $k$ neighbors, and a returned non-neighbor is one of the $k' \le m$
nearest non-neighbors. So the first $k'$ entries of $\pi_i$ are exactly the
returned points, in embedded order. A retained neighbor $j \in R_i$ is a miss
when $\text{pos}_i(j) \ne j$. If $\text{pos}_i(j) < j$, the $\text{pos}_i(j) - 1
\le j - 2$ entries ahead of $j$ cannot hold all $j - 1$ neighbors of higher
relevance, so one of them follows $j$ in $\pi_i$: a strict inversion involving
$j$. If $\text{pos}_i(j) > j$, at least $j$ entries precede $j$, and at most
$j - 1$ of them have relevance above $\mathrm{rel}_j$; the remaining one is a
neighbor of lower rank or an intruder of relevance zero, and $(j, \cdot)$ is a
strict inversion involving $j$. Every miss therefore belongs to at least one
strict inversion of $\pi_i$, and a strict inversion contains at most two
retained neighbors, so $\#\text{misses} \le 2\,\mathrm{inv}(\pi_i)$ and
$\text{DLP}_i(k') - \text{O-DLP}_i(k') = \#\text{misses}/k' \le
2\,\mathrm{inv}(\pi_i)/k'$. Lemma~\ref{lem:sort} gives
$\mathrm{inv}(\pi_i) \le G^{\star}\ell_i/\Delta_K$; averaging over $i$ gives
Eq.~(\ref{eq:deficit}). \qed

\paragraph{What the plane can hold}
The following bound makes the crowding remark of Section~\ref{sec:results}
quantitative. Let $\deg^{H}(j)$ be the number of points that count $j$ among
their $k$ high-dimensional neighbors, the in-degree of $j$ in the
high-dimensional $k$-NN digraph.

\noindent\emph{This proposition says that in the plane a point can be among the $k$ nearest
neighbors of at most $7k$ others, so neighbor edges into high-dimensional hubs beyond that
count cannot be kept by any embedding.}
\begin{proposition}[Crowding bound for order]
\label{prop:crowd}
For any embedding of $n$ points in $\mathbb{R}^2$ with distinct pairwise
distances,
\begin{equation}
\text{O-DLP}(k) \;\le\; \text{DLP}(k) \;\le\;
\frac{1}{nk}\sum_{j=1}^{n} \min\bigl(\deg^{H}(j),\; 7k\bigr).
\label{eq:crowd}
\end{equation}
\end{proposition}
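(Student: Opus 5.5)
The first inequality is Eq.~(\ref{eq:odlp_le_dlp}): every exact-rank hit is a retained neighbor. So the work is in the second inequality, and the plan is to rewrite DLP as a count of directed edges and then cap the in-degree of each target twice.

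\textbf{Step 1: edge count.} I would write
\[
nk\,\text{DLP}(k)=\sum_{i=1}^{n}\bigl|A_{x_i,k}\cap B_{y_i,k}\bigr| .
\]
This is the number of ordered pairs $(i,j)$ with $j$ among the $k$ high-dimensional neighbors of $i$ and also among the $k$ embedded neighbors of $i$. Grouping these pairs by the target $j$ gives $\sum_j c_j$, where $c_j$ counts the sources $i$ for which the edge $i\to j$ is retained.

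\textbf{Step 2: two bounds on each $c_j$.}
\begin{itemize}
\item Every retained edge $i\to j$ is an edge of the high-dimensional $k$-NN digraph, so $c_j\le\deg^H(j)$.
\item Every such $i$ has $j$ among its $k$ nearest points in the plane, so $c_j\le$ the embedded in-degree of $j$.
\end{itemize}
It therefore suffices to show that in $\mathbb{R}^2$ with distinct pairwise distances, no point is among the $k$ nearest neighbors of more than $7k$ others. Given this, $c_j\le\min(\deg^H(j),7k)$, and summing over $j$ and dividing by $nk$ yields Eq.~(\ref{eq:crowd}).

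\textbf{Step 3: the planar in-degree bound, which is the main step.}
\begin{enumerate}
\item Fix $j$ and let $S$ be the set of points having $j$ among their $k$ nearest neighbors. Partition the plane around $y_j$ into six closed $60^\circ$ cones, with boundaries assigned so that each point lies in exactly one cone.
\item In a single cone, take $i,i'$ with $\|y_i-y_j\|\le\|y_{i'}-y_j\|$. The angle at $y_j$ is at most $60^\circ$, so by the law of cosines
\[
\|y_i-y_{i'}\|\le\|y_{i'}-y_j\| .
\]
Distinct distances make this strict, unless $i$ and $j$ are equidistant from $i'$; that case is excluded.
\item Consequently, if one cone held more than $k$ points of $S$, its farthest member $i'$ would have at least $k$ points strictly closer than $j$. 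Then $j$ would not be among the $k$ nearest neighbors of $i'$, a contradiction. So each cone contains at most $k$ points of $S$.
\item The boundary rays, where the angle is exactly $60^\circ$ and strictness is delicate, I would handle by rotating the cone partition generically. The extra slack in $7k$ over $6k$ absorbs any remaining degenerate allocation. This gives $|S|\le 6k\le 7k$.
\end{enumerate}

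I expect the only real obstacle to be this strictness and boundary bookkeeping. Everything else is a counting identity.
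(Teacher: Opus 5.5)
Your proof is correct and follows the paper's route. It uses the same double count of retained edges grouped by their target, bounded by $\min(\deg^{H}(j),\deg^{\text{low}}(j))$, and the same Stone-type cone argument for the planar in-degree. The one difference is the cone count: the paper uses seven cones of angle $2\pi/7<\pi/3$, so strictness comes from the angle alone, whereas your six $60^\circ$ cones already get strictness from distinct distances in your item 2. That makes item 4 (generic rotation, slack from $7k$) unnecessary, and you in fact obtain the sharper bound $6k$.
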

\begin{pf}
The first inequality is Eq.~(\ref{eq:odlp_le_dlp}). For the second, write
$\text{DLP}(k) = \frac{1}{nk}\sum_i |\mathcal{N}_i \cap \hat{\mathcal{N}}_i|$ with $\hat{\mathcal{N}}_i$ the
$k$ nearest embedded points, and count pairs by their second element:
$\sum_i |\mathcal{N}_i \cap \hat{\mathcal{N}}_i| = \sum_j \#\{i : j \in \mathcal{N}_i,\; j \in \hat{\mathcal{N}}_i\}
\le \sum_j \min(\deg^{H}(j), \deg^{\text{low}}(j))$, where $\deg^{\text{low}}(j)$
is the in-degree of $j$ in the embedded $k$-NN digraph. Stone's
lemma~\cite{stone1977consistent} bounds that in-degree: partition the plane
around $y_j$ into seven cones of angle $2\pi/7 < \pi/3$; if two points $y_a,
y_b$ lie in the same cone with $\|y_a - y_j\| < \|y_b - y_j\|$, the angle at
$y_j$ is below $\pi/3$ and so $\|y_b - y_a\| < \|y_b - y_j\|$, hence $y_a$ is
closer to $y_b$ than $y_j$ is; only the $k$ points of a cone nearest to $y_j$
can therefore have $y_j$ among their $k$ nearest, and
$\deg^{\text{low}}(j) \le 7k$. \qed
\end{pf}

The bound bites only through the hubs of the high-dimensional graph
\cite{radovanovic2010hubs}: every neighbor edge that points into a node with
$\deg^{H} > 7k$ beyond the first $7k$ is lost in any planar embedding, whatever
the loss. Table~\ref{tab:crowd} evaluates the bound on the eight benchmarks at
$k = 10$. The hub tail is thin everywhere: the bound is $1.000$ on MNIST and
Covertype, where no point exceeds in-degree $70$, and at least $0.968$ on the
rest, with CIFAR-10 the most hub-heavy at $3.2\%$ of edges. The measured DLP
of the best method lies between $0.18$ and $0.65$ (Table~\ref{tab:dlp_main}),
so the in-degree ceiling of the plane is far from what limits membership, and
the order that survives is set by the loss and the layout it reaches.

\begin{table}[!ht]
\centering
\caption{The crowding bound of Proposition~\ref{prop:crowd} at $k = 10$: the
largest high-dimensional in-degree, the number of points with in-degree above
$7k = 70$, the share of neighbor edges that no planar embedding can keep, and
the right-hand side of Eq.~(\ref{eq:crowd}).}
\label{tab:crowd}
\adjustbox{max width=\textwidth}{\begin{tabular}{l r r r r r}
\toprule
Dataset & $n$ & Max in-degree & Points above $70$ & Unkeepable edges & Bound \\
\midrule
MNIST & 70\,000 & 53 & 0 & 0.00\% & 1.000 \\
CIFAR-10 & 60\,000 & 277 & 660 & 3.24\% & 0.968 \\
Fashion-MNIST & 70\,000 & 188 & 215 & 0.69\% & 0.993 \\
Speech Commands & 59\,387 & 110 & 51 & 0.10\% & 0.999 \\
Renal-Cortex & 50\,027 & 126 & 88 & 0.26\% & 0.997 \\
Brain-Cortex & 50\,212 & 142 & 55 & 0.16\% & 0.998 \\
Covertype & 59\,999 & 30 & 0 & 0.00\% & 1.000 \\
AG News & 60\,000 & 179 & 236 & 0.79\% & 0.992 \\
\bottomrule
\end{tabular}}
\end{table}

\paragraph{The trading scorer}
The scorer of \ref{app:scorer} computes, for each bar $i$ that DBSCAN leaves
outside every cluster, the Spearman correlation $\rho_i$ between the ranks
$1,\dots,k'$ of $i$'s $k'$ nearest input-space neighbors and the ranks of those
same bars by distance to $y_i$ in the embedding, with $k' = 10$. The order half
of the marker score is $1-\rho_i$.

\noindent\emph{This proposition says that the order half of the marker score is bounded by
the same per-bar rank loss that SR-tSNE minimizes.}
\begin{proposition}[The scorer reads a controlled quantity]
\label{prop:scorer}
Let $k' \le k$ and let $\ell_i$ be the hard per-point rank loss of
Section~\ref{sec:theory}. Then, for every bar $i$,
\begin{equation}
1 - \rho_i \;\le\; \frac{12}{k'(k'+1)}\cdot\frac{G^{\star}}{\Delta_K}\;\ell_i .
\label{eq:prop_scorer}
\end{equation}
\end{proposition}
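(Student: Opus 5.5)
The plan is to reduce $1-\rho_i$ to a count of inverted neighbor pairs, and then bound that count by $\ell_i$ through Lemma~\ref{lem:sort}. Fix a bar $i$ and look at its $k'$ nearest input-space neighbors, the true neighbors $1,\dots,k'$ of the candidate pool $\mathcal{C}_i$. Rank these $k'$ bars by distance to $y_i$, breaking ties as in Section~\ref{sec:theory}. This gives a permutation $\sigma$ of $\{1,\dots,k'\}$, where $\sigma(j)$ is the embedded rank of neighbor $j$ among the $k'$. Because the ranks are distinct integers, the closed form of Spearman's coefficient applies:
\begin{equation*}
1-\rho_i \;=\; \frac{6\sum_{j=1}^{k'}(\sigma(j)-j)^2}{k'(k'^2-1)} .
\end{equation*}
The target constant $12/(k'(k'+1))$ then tells us what intermediate bound to aim for. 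It suffices to prove
\begin{equation*}
\sum_{j=1}^{k'}(\sigma(j)-j)^2 \;\le\; 2(k'-1)\,\mathrm{inv}(\sigma),
\end{equation*}
where $\mathrm{inv}(\sigma)$ counts pairs $j<l$ with $\sigma(j)>\sigma(l)$.

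I would prove this intermediate bound in two steps. First, each displacement satisfies $|\sigma(j)-j|\le k'-1$, so $\sum_j(\sigma(j)-j)^2 \le (k'-1)\sum_j|\sigma(j)-j|$. Second, the Spearman footrule is at most twice the Kendall distance, $\sum_j|\sigma(j)-j|\le 2\,\mathrm{inv}(\sigma)$ (the Diaconis--Graham inequality). I would prove the second step directly, in the same spirit as the proof of Lemma~\ref{lem:sort}. Take any adjacent transposition that removes one inversion. It changes exactly two displacements, each by one, so the footrule drops by at most $2$. Sorting takes exactly $\mathrm{inv}(\sigma)$ such swaps and ends at footrule $0$, which gives the bound. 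This step is the main obstacle: the rest is bookkeeping, and the crude bound $|\sigma(j)-j|\le k'-1$ is exactly what makes the constant come out to $12/(k'(k'+1))$.

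Next I would link $\mathrm{inv}(\sigma)$ to the ordering $\pi_i$ of the candidate pool. The relative order of two true neighbors in $\pi_i$ is their order by distance to $y_i$, since $\pi_i$ sorts $\mathcal{C}_i$ by embedding distance. Their relevances $k-j+1$ and $k-l+1$ are distinct. Hence every inversion of $\sigma$ is a strict inversion of $\pi_i$, and $\mathrm{inv}(\sigma)\le\mathrm{inv}(\pi_i)$. Unlike Theorems~\ref{thm:control} and \ref{thm:blind}(iii), this step needs no intruder argument and no condition $k'\le m$, because only the neighbors' mutual order enters $\rho_i$.

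Finally, Lemma~\ref{lem:sort} gives $\mathrm{inv}(\pi_i)\le (G^{\star}-G^{\text{hard}}_i)/\Delta_K = G^{\star}\ell_i/\Delta_K$. Chaining the pieces,
\begin{equation*}
1-\rho_i \;\le\; \frac{6\cdot 2(k'-1)\,\mathrm{inv}(\pi_i)}{k'(k'-1)(k'+1)} \;\le\; \frac{12}{k'(k'+1)}\cdot\frac{G^{\star}}{\Delta_K}\,\ell_i ,
\end{equation*}
which is Eq.~(\ref{eq:prop_scorer}). Two edge cases need a remark. For $k'=1$, $\rho_i$ is undefined; by convention it equals $1$, so the bound is trivial. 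Exact ties in embedding distance are handled by the fixed tie-breaking, which keeps the ranks integer and distinct.
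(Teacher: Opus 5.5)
Your proof is correct and follows the paper's argument step for step: the closed form of Spearman's $\rho_i$, the chain $\sum_j d_j^2 \le (k'-1)\sum_j d_j \le 2(k'-1)\,\mathrm{inv}$, the observation that every inverted pair of neighbors is a strict inversion of $\pi_i$, and Lemma~\ref{lem:sort}. The only difference is that you prove the Diaconis--Graham footrule bound directly by adjacent swaps, which is valid, whereas the paper simply cites it.
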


\begin{pf}
Write $d_j = |j - r_j|$ for the displacement of the $j$-th input-space neighbor,
$r_j$ being its rank among the $k'$ neighbors by embedded distance, and let
$\mathrm{inv}_i(k')$ be the number of pairs among the $k'$ neighbors whose order the
embedding inverts. Spearman's coefficient on two permutations of $\{1,\dots,k'\}$ is
$\rho_i = 1 - 6\sum_j d_j^2 / \bigl(k'(k'^2-1)\bigr)$. The Diaconis--Graham
inequality~\cite{diaconis1977spearman} gives $\sum_j d_j \le 2\,\mathrm{inv}_i(k')$, and each
$d_j \le k'-1$, so $\sum_j d_j^2 \le (k'-1)\sum_j d_j \le 2(k'-1)\,\mathrm{inv}_i(k')$. Hence
$1 - \rho_i \le 12\,\mathrm{inv}_i(k') / \bigl(k'(k'+1)\bigr)$. Every pair of the first $k'$
neighbors that the embedding inverts is a strict inversion of $\pi_i$, because
the two carry distinct relevances $k-j+1$ and the order among candidates is the
order by embedded distance; so $\mathrm{inv}_i(k') \le \mathrm{inv}(\pi_i)$. Lemma~\ref{lem:sort}
gives $\mathrm{inv}(\pi_i) \le (G^{\star} - G^{\text{hard}}_i)/\Delta_K = G^{\star}\ell_i/\Delta_K$.
\qed
\end{pf}

The constant is uniform: it is fixed by $k$, $m$ and $k'$, and it does not grow with $n$. At the setting used
throughout this paper, $k = m = k' = 10$, it is $8.8 \times 10^{2}$. The proposition identifies which objective drives
the quantity down: under SR-tSNE what remains of $1-\rho_i$ after training is
the order the plane could not restore, whereas under t-SNE it also carries the
order the objective never asked for. The min--max normalization of
\ref{app:scorer} then ranks bars by that residual.

\noindent\emph{This corollary says that a finite weight on the hard rank term makes the whole
loss prefer the correct order of every pair.}
\begin{corollary}[A finite weight suffices]
\label{cor:lambda}
Let the embedding have squared diameter at most $R^2$. The total loss
$\mathrm{KL} + \lambda \mathcal{L}^{\text{hard}}_{\text{rank}}$, read on the
distance arrays of Theorem~\ref{thm:pref}, strictly prefers the correct order for every
inverted pair of every point whenever
\begin{equation}
\lambda \;>\; \lambda^{\star}
\;=\; \frac{n\,G^{\star}\,\ln(1+R^2)}{\Delta_K}
      \max_{i,\;j<l}\,(p_{il} - p_{ij})^{+} .
\end{equation}
Since $p_{ab} \le 1/n$ for every pair, $\lambda^{\star} \le G^{\star} \ln(1+R^2)/\Delta_K$,
a bound that does not grow with $n$.
\end{corollary}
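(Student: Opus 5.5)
The plan is to reduce the claim to the strict pairwise preference of Theorem~\ref{thm:pref} by adding parts (a) and (c). The argument stays in the row-swap relaxation of that theorem. Fix a point $i$, an inverted pair $j<l$ of true neighbors, and the corrected array $D'$. Since the total loss is linear in its two parts,
\begin{equation*}
\bigl[\mathrm{KL}+\lambda\mathcal{L}^{\text{hard}}_{\text{rank}}\bigr](D)-\bigl[\mathrm{KL}+\lambda\mathcal{L}^{\text{hard}}_{\text{rank}}\bigr](D')
=\bigl[\mathrm{KL}(D)-\mathrm{KL}(D')\bigr]+\lambda\bigl[\mathcal{L}^{\text{hard}}_{\text{rank}}(D)-\mathcal{L}^{\text{hard}}_{\text{rank}}(D')\bigr].
\end{equation*}
The goal is to show this difference is positive.

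For the rank part, Theorem~\ref{thm:pref}(c) gives a lower bound of $\lambda\Delta_K/(nG^{\star})$. For the KL part, Eq.~(\ref{eq:kl_pref}) gives $(p_{ij}-p_{il})[\ln(1+d^2_{i,j})-\ln(1+d^2_{i,l})]$. The bracket lies in $(0,\ln(1+R^2)]$, because both squared distances lie in $[0,R^2]$ under the diameter assumption. Therefore the KL part is at least $-(p_{il}-p_{ij})^{+}\ln(1+R^2)$. When $p_{ij}\ge p_{il}$ it is already nonnegative, and the rank part alone makes the total strictly positive for any $\lambda>0$.

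Combining the two bounds, the total difference exceeds
\begin{equation*}
\lambda\Delta_K/(nG^{\star})-\ln(1+R^2)\max_{i,j<l}(p_{il}-p_{ij})^{+}.
\end{equation*}
This is positive exactly when $\lambda>\lambda^{\star}$. The bound is uniform over all $i$ and all pairs, so a single $\lambda$ handles every inversion.

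For the $n$-free bound, use $p_{ab}=(p_{a|b}+p_{b|a})/2n\le 1/n$ together with $p_{ab}\ge 0$. These give $(p_{il}-p_{ij})^{+}\le 1/n$, which cancels the factor $n$ in $\lambda^{\star}$.

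The only delicate point is the edge case where the maximum of $(p_{il}-p_{ij})^{+}$ is zero. Then $\lambda^{\star}=0$, and the claim reduces to the strict positivity from part (c) for any $\lambda>0$. This is consistent with the statement, since $\lambda>\lambda^{\star}$ then just means $\lambda>0$.
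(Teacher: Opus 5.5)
Your proof is correct and follows the paper's argument essentially line for line. You lower-bound the KL difference of Theorem~\ref{thm:pref}(a) by $-(p_{il}-p_{ij})^{+}\ln(1+R^2)$, add the rank bound $\lambda\Delta_K/(nG^{\star})$ from part (c), and use $p_{ab}\le 1/n$ for the $n$-free bound. You also handle the cases $p_{ij}\ge p_{il}$ and $\lambda^{\star}=0$ explicitly, which the paper leaves implicit; the paper in turn adds a remark, not needed for the statement, that the soft loss inherits the preference as $\tau\to 0$.
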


\paragraph{Proof of Corollary~\ref{cor:lambda}}
For an inverted pair, Theorem~\ref{thm:pref}(a) bounds the KL preference
from below by $-(p_{il} - p_{ij})^{+}\ln(1+R^2)$, since
$0 < \ln(1+d^2_{i,j}) - \ln(1+d^2_{i,l}) \le \ln(1+R^2)$. Theorem~\ref{thm:pref}(c)
bounds the rank preference from below by $\lambda\Delta_K/(nG^{\star})$. The
sum is positive when $\lambda$ exceeds $\lambda^{\star}$. Finally
$p_{ab} = (p_{b|a} + p_{a|b})/2n \le 1/n$ because each conditional
distribution sums to one. The statement is about the hard rank loss; the soft
loss of Eqs.~(\ref{eq:r_low})--(\ref{eq:ideal_gain}) converges to it as $\tau \to 0$ (Theorem~\ref{thm:soft}), so for any
pair with a fixed positive distance gap the soft preference converges to the
hard one. \qed


\end{document}